\documentclass{article}

\PassOptionsToPackage{numbers, compress}{natbib}

\usepackage[preprint]{neurips_2026}

\usepackage[utf8]{inputenc} 
\usepackage[T1]{fontenc}    
\usepackage[hidelinks]{hyperref}       
\usepackage[table,usenames,dvipsnames]{xcolor}
\hypersetup{
    colorlinks=true,
    linkcolor=black,
    citecolor=black,
    urlcolor=MidnightBlue
}
\usepackage{url}            
\usepackage{booktabs}       
\usepackage{amsfonts}       
\usepackage{nicefrac}       
\usepackage{microtype}      
\usepackage{xcolor}         
\usepackage{enumitem}
\usepackage{makecell}

\usepackage{amsmath, amssymb, amsthm}
\usepackage{mathtools}

\newtheorem{theorem}{Theorem}
\newtheorem{definition}{Definition}
\newtheorem{lemma}[theorem]{Lemma}
\newtheorem{proposition}[theorem]{Proposition}

\usepackage{mdframed}

\usepackage[toc,page,header]{appendix}
\usepackage{minitoc}

\definecolor{calloutbg}{HTML}{F3F9FF}
\definecolor{calloutborder}{HTML}{FFFFFF}
\definecolor{titlecolor}{HTML}{254D8C}  

\newcommand{\eqtitle}[1]{\textbf{\textcolor{titlecolor}{#1}}}

\newenvironment{calloutbox}{%
  \begin{mdframed}[
    backgroundcolor=calloutbg, linecolor=calloutborder,
    linewidth=0pt, leftline=true, rightline=false,
    topline=false, bottomline=false,
    innerleftmargin=12pt, innerrightmargin=8pt,
    innertopmargin=4pt, innerbottommargin=4pt,
    skipabove=4pt, skipbelow=0pt,
    nobreak=true,
  ]%
}{\end{mdframed}}

\newcommand{\titledeq}[3]{%
  \begin{calloutbox}
  \refstepcounter{equation}\label{#3}%
  \noindent
  \makebox[0pt][l]{\eqtitle{#1}}%
  \makebox[\linewidth][c]{$\displaystyle #2$}%
  \makebox[0pt][r]{(\theequation)}%
  \end{calloutbox}
}

\title{Language models suffer from a curse of ambiguity}

\author{%
  Nicolas Zucchet \quad Hyun Dong Lee \quad Scott Linderman \\
  Stanford University \\
  \texttt{\{nzucchet, hdlee, scott.linderman\}@stanford.edu} \\
}

\begin{document}

\doparttoc 
\faketableofcontents 

\maketitle

\begin{abstract}
    Large language models increasingly rely on sampling as a driver of their own improvement, making the fidelity of their learned distributions more critical than ever. 
    Yet, not all distributions are equally easy to learn.
    In this work, we identify a curse of ambiguity: in large language models, and more broadly in all neural networks that produce discrete probability distributions, the more ambiguous a next-token distribution is, the harder it is to learn accurately.
    Through an extensive theoretical analysis, we trace this curse to architectural and learning roots. More ambiguous distributions require more capacity to be stored, larger embeddings to be represented, more steps to be fitted, and amplify token-sampling noise.
    We validate these findings on synthetic tasks with controlled ground truth and observe the same signatures in language models trained on real data.
    Our results provide a new perspective on the statistical capabilities of large language models and a practical framework for when to trust their output distribution.
\end{abstract}

\vspace{-.5em}
\section{Introduction} 
\vspace{-.5em}
Autoregressive language models generate text one token at a time, sampling a conditional distribution given the tokens that came before~\citep{shannon_prediction_1951, bengio_neural_2003, mikolov_recurrent_2010, sutskever_generating_2011, vaswani_attention_2017, radford_improving_2018, radford_language_2019, brown_language_2020}.
We often use greedy decoding strategies to bias samples toward the most likely tokens~\citep{holtzman_curious_2020, hendrycks_measuring_2021, wei_chain_2022, chowdhery_palm_2023}, but such strategies have deleterious effects.
Inference-time scaling methods~\citep{silver_mastering_2017, jones_scaling_2021, wang_self-consistency_2023, openai_learning_2024} only succeed when the model spreads probability over diverse continuations, not just the top ones~\citep{brown_monkeys_2024, snell_scaling_2024}.
Similarly, on-policy reinforcement learning post-training~\citep{ouyang_training_2022,guo_deepseek_2025} can only explore and reinforce trajectories the model actually samples~\citep{yue_does_2025}.
As model-generated synthetic data~\citep{gunasekar_textbooks_2023, dubey_llama_2024} and AI-generated content online~\citep{thompson_shocking_2024, liang_mapping_2024} become an increasingly large share of training corpora, a lack of diversity in the sampled distribution risks compounding across generations until the model collapses~\citep{shumailov_ai_2024}.
For all these reasons, the learned distribution needs to be well calibrated, not only capturing the most likely token but the full next-token distribution.
This necessity is especially acute in ambiguous settings where several continuations are plausible. 

In this work, we take a step toward understanding how well neural networks, and thus large language models, can model such uncertainty.
We uncover a new phenomenon, the \emph{curse of ambiguity}:
\vspace{-0.35em}
\begin{center}
    \textit{The more ambiguous the next-token distribution, the harder it is to learn.}
\end{center}
\vspace{-0.15em}
The rest of the paper is dedicated to making this statement precise, to uncovering its roots, and to discussing its consequences for large language models.
Our contributions are the following:
\begin{itemize}[leftmargin=20pt]
    \item[--] We identify and formalize the curse of ambiguity in large language models.
    \item[--] We trace the curse to architectural roots (Section~\ref{sec:architecture})---more ambiguous distributions require more capacity to store and larger embeddings to represent---and to learning roots (Section~\ref{sec:learning})---they take more optimization steps to fit and amplify the noise from token sampling.
    \item[--] We validate the curse empirically (Section~\ref{sec:experiments}), confirming our theoretical predictions in controlled synthetic settings, showing that the same signatures appear in language models trained on real data, and arguing that these results extend to larger models.
\end{itemize}

\section{Autoregressive language modeling as context classification}
\label{sec:setup}

Most large language models are decoder-only Transformers \cite{vaswani_attention_2017, radford_improving_2018, radford_language_2019, brown_language_2020} that learn to autoregressively predict the next token \cite{shannon_prediction_1951, bengio_neural_2003, mikolov_recurrent_2010, sutskever_generating_2011}. In our analysis, we view this process as a context-to-token classification problem, and we distinguish the many contexts that language models must classify by the ambiguity of the next token under the training distribution.
This section introduces this formalism and motivates the simplified toy model we analyze theoretically in Sections~\ref{sec:architecture} and \ref{sec:learning}.

\paragraph{From sequences to classification.}
An autoregressive language model factorizes the probability of a sequence $x_{1:t}$ as a product of conditional distributions, each predicting $x_t$ given its preceding context $x_{1:t-1}$, that is
\begin{equation}
    p_\theta(x_{1:t}) = \prod_{t'=1}^{t} p_\theta(x_{t'}|x_{1:t'-1})
\end{equation}
with $\theta$ the parameters of the model and $p_\theta$ the distribution it defines. Mechanistically, $p_\theta(x_t | x_{1:t-1})$ is computed by first letting a Transformer-based backbone map the context $x_{1:t-1}$ to an embedding $e_{t} = f_\theta(x_{1:t-1})$ of dimension $h$, and then applying a linear head $W$ followed by a softmax to produce a distribution over the size-$d$ vocabulary:
\begin{equation}
    \label{eqn:lm-head}
    p_\theta(x_{t} = j \, | \, x_{1:t-1}) = \mathrm{softmax}(W e_{t})_{j}, \, \text{where } e_{t} = f_\theta(x_{1:t-1}).
\end{equation}
From this perspective, next token prediction is a classification problem: the backbone produces a representation of the context, and then the linear head predicts the next token with a multi-class logistic regression. Training jointly fits the backbone and the head by minimizing, on average over training contexts, the cross-entropy between the ground-truth and predicted distributions.

\paragraph{Focusing on the linear head for analytical tractability.}
Sections~\ref{sec:architecture} and~\ref{sec:learning} study this last linear head in isolation, treating the context representations produced by the backbone as given. The linear head plays a central role in the network: it is the only component that maps embeddings to the output vocabulary, and it modulates all gradients flowing back into the backbone~\citep{godey_lost_2026}. Studying it in isolation therefore probes the part of the model most directly responsible for fitting the next-token distribution. Representation learning is, in this respect, an important but secondary concern. Section~\ref{sec:experiments} will then empirically show that similar results hold for the language model as a whole.

Since we view the last layer as a logistic regression, we can simplify our notation: the head sees $n$ context embeddings $e_i$ paired with target distributions $p_i$, and must learn to map one to the other. We study how it copes when these targets vary in ambiguity.
By default, we treat the embeddings as sampled i.i.d.~and uniformly from the unit sphere. In Section~\ref{subsec:embedding}, we will instead ask what optimal embeddings would look like and what they can achieve at a fixed dimension $h$; in that regime, we jointly optimize the weights $W$ and the embeddings $e_i$.

\paragraph{Modeling ambiguity through support size.}
In natural language, contexts often admit multiple plausible continuations: after ``the cat sat on the'', reasonable next tokens include ``mat'', ``floor'', ``couch'', etc. We model this ambiguity by assigning each context $i$ a ground truth distribution $p_i$ that is uniform on a random support $S_i$ of size $k$. The probability of context $i$ being followed by token $j$ is
\begin{equation}
    \label{eqn:target-dist}
    p_{i,j} = \begin{cases} \frac{1}{k} & \text{if } j \in S_i \\ 0 & \text{otherwise.} \end{cases}
\end{equation}
The hyperparameter $k$ controls the degree of ambiguity:\footnote{We use ``ambiguity'' beyond its linguistic sense~\citep{piantadosi_communicative_2012}, as a term for distributions spreading mass over many tokens.} $k = 1$ corresponds to a context with a single possible continuation according to the data, while $k > 1$ means several tokens are equally likely.
Although next-token distributions are obviously not uniform in practice, this model captures the key intuition: spreading probability mass over more tokens makes the prediction problem intrinsically harder.
On the non-uniform distributions we encounter in our experiments in Section~\ref{sec:experiments}, we measure ambiguity through the effective support size defined as the exponential of the entropy of $p_i$.
It is more robust than the raw support size in the sense that vanishing probabilities contribute vanishingly little, and it reduces to $k$ on the uniform-on-support distributions we study in theory.
The same quantity appears under different names in other fields, including perplexity in information theory \citep{jelinek_perplexitymeasure_1977} and multiplicity in statistical mechanics \citep{schroeder_introduction_2000}.

\section{Architectural roots of the curse of ambiguity}
\label{sec:architecture}

The curse of ambiguity is partially rooted in the architectural properties of the model. More precisely, we show that, as they become more ambiguous, next token distributions require more capacity to be stored at a desired level of accuracy and larger embedding dimensions to be properly represented. We focus on gaining theoretical intuition for why this holds true using linear models.

\begin{figure}
    \centering
    \vspace{-0.5cm}
    \includegraphics{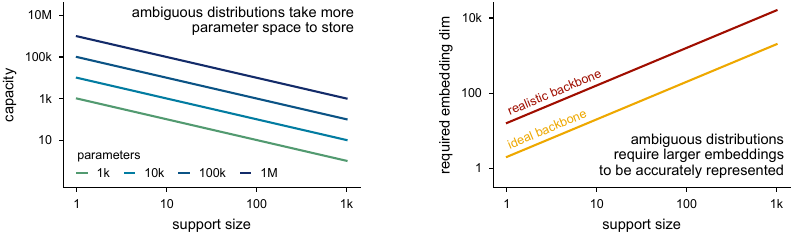}
    \vspace{-0.15cm}
    \caption{\textbf{Architectural roots of the curse of ambiguity: ambiguous distributions require more parameters to be stored and larger embeddings to be represented.} Schematic description of the theoretical analysis of Section~\ref{sec:architecture}, depicting how the network capacity (left) and the required embedding dimension (right) depend on our measure of ambiguity, which is the number of possible next tokens~$k$ (the support size) under the ground-truth distribution. The capacity is proportional to the number of parameters and inversely proportional to~$k$, while the required embedding dimension grows proportionally with~$k$, with a larger constant when the backbone cannot produce arbitrary embeddings (realistic vs.\ ideal backbone). Capacity predictions are empirically validated in Appendix~\ref{app:capacity-validation}.}
    \label{fig:architecture-theory}
\end{figure}

\vspace{-0.6em}
\subsection{Model capacity: ambiguous distributions require more parameters to be stored}
\vspace{-0.35em}
\label{subsec:capacity}

Intuitively, modeling $k$ possible continuations is like solving $k$ different context-to-token associations at once, which should take $k$ times more capacity. We now show that this is indeed the case.
We consider the linear head model of Equation~\ref{eqn:lm-head} with random embeddings $e_i$ and ask how the number of context-dependent distributions $p_i$ (as defined in Equation~\ref{eqn:target-dist}) it can store depends on~$k$. By ``store'' we informally mean the maximum number of contexts that the model can accurately recall; Appendix~\ref{subsec:capacity-def} gives a precise definition. This can be viewed as an associative memory problem, connecting to a long line of work on neural associative memories, historically on Hopfield networks~\citep{hopfield_neural_1982, krotov_dense_2016, ramsauer_hopfield_2021} and more recently on language models~\citep{katharopoulos_transformers_2020, geva_transformer_2021, schlag_linear_2021, bietti_birth_2023, allen-zhu_physics_2025, zucchet_how_2025, morris_how_2025}.

\vspace{-0.6em}
\paragraph{A Hebbian model.} Following previous work \cite{cabannes_scaling_2024, nichani_understanding_2025}, we study a Hebbian-like construction where the learned weight matrix is approximated by,
\begin{equation}
    W = \beta \sum_{i=1}^n z_i \, e_i^\top ~~~~ \mathrm{with} ~z_{i,j} = \begin{cases} 1-k/d & \text{if } j \in S_i \\ -k/d & \text{otherwise}\end{cases}.
\end{equation}
Here, $z_i$ are the target logits, and they are designed such that the difference between on- and off-support tokens is one and their corresponding probability distributions are close to $p_i$. 
We can think of $W$ as approximating the result of online gradient descent to minimize $\|z_i - W e_i\|^2$ with learning rate $\beta$, assuming the embeddings are approximately orthogonal.
Minimizing the actual cross-entropy loss by gradient descent would yield different weights with greater capacity, so this analysis only provides a lower bound on capacity.
However, the bound is close to tight: the Hebbian model captures the same scaling with $k$ as a linear head trained with Adam on random embeddings (Appendix~\ref{app:capacity-validation}).

\vspace{-0.6em}
\paragraph{Ambiguity requires more capacity.} To understand how well the Hebbian model predicts $p_i$ given embedding $e_i$ of size $h$, we decompose its output logits into a signal and a noise term:
\begin{equation}
    \ell_i := W e_i = \beta \big( \underbrace{z_i e_i^\top e_i}_{\text{signal}} + \underbrace{\textstyle\sum_{j \neq i} z_j \, (e_j^\top e_i)}_{\text{noise}} \big).
\end{equation}
Since the embeddings are unit vectors, the signal term has fixed magnitude and recovers $z_i$ exactly. The noise term comes from interference between embeddings. It has zero mean and per-component variance $\sigma^2 \approx \frac{nk}{hd}$ in the regime $k \ll d$ we focus on. The variance grows with the number of patterns~$n$ and the support size~$k$: Each target $z_i$ affects $k$ output coordinates, so any given prediction direction is interfered with by more patterns as ambiguity increases. The Hebbian model should be able to store all $n$ patterns as long as this noise stays smaller than the signal. Solving for $n$ yields the capacity scaling limit, our first root cause of the curse of ambiguity:
\titledeq{Root 1}{\mathrm{capacity} \propto \frac{hd}{k}.}{eqn:capacity}
The left panel of Figure~\ref{fig:architecture-theory} illustrates this scaling, and Appendix~\ref{app:capacity} formally proves it up to logarithmic factors under two different notions of capacity. A size-$k$ distribution requires $k$ times more capacity than a deterministic one, so more ambiguity directly requires more network capacity.
Appendix~\ref{app:capacity-validation} verifies the $hd$ and $1/k$ scalings empirically.
The $k$ scaling is tight up to logarithmic factors: Appendix~\ref{subsec:capacity-upper-bound} establishes a matching upper bound on capacity that applies to any linear-softmax model, via a sign-pattern counting argument on the model's degrees of freedom.
The linear model also recovers the proportionality to the total number of parameters $hd$, a now well-established universal measure of language model capacity, both
empirically~\citep{allen-zhu_physics_2025, morris_how_2025} and theoretically across different architectures~\citep{cabannes_scaling_2024, nichani_understanding_2025}.
The fact that it recovers this universal scaling suggests that its dependence on ambiguity extends to more general models as well.

\vspace{-0.6em}
\subsection{Embedding dimension: ambiguous distributions require larger embeddings}
\vspace{-0.3em}
\label{subsec:embedding}

We now move to another property of the architecture that matters just as much: the embedding dimension~$h$. We show that, to represent all possible target distributions, $h$ must grow at least linearly with the ambiguity level $k$. The intuition is that an $h$-dimensional embedding can only point in a few vocabulary directions at once, so once $h$ falls below the ambiguity level there is no room left to single out the top-$k$ tokens of the target distribution.

Compared to the total parameter count, the role of the embedding dimension in language model performance is less understood~\citep{yin_dimensionality_2018, gao_representation_2019, tao_scaling_2024, petty_impact_2024}. Yet, we are not the first to point out the softmax bottleneck~\citep{yang_breaking_2018, godey_small_2024, godey_lost_2026}: the embedding dimension constrains the family of distributions a language model can represent and learn. \citet{chang_softmax_2022} observe that ambiguous distributions are a particularly hard case. We sharpen these observations into a precise linear scaling with the ambiguity level.

\vspace{-0.55em}
\paragraph{The ideal backbone assumption.} As highlighted in Section~\ref{sec:setup}, a large language model can be split into a backbone that produces context-dependent embeddings and a linear head that must reconstruct the output distribution from these embeddings. For the embedding dimension analysis, we assume the backbone can produce close to optimal embeddings for the task. We ask how large the embedding dimension $h$ needs to be for the linear head to exactly recover the target distribution. This idealization says nothing about how the backbone achieves such embeddings; it isolates a purely representational constraint that no learning can circumvent, and therefore provides a lower bound on the embedding dimension that a successful model must exceed.

\vspace{-0.55em}
\paragraph{A compressed sensing analysis.} This representation question falls under the umbrella of compressed sensing, which broadly studies when sparse signals in high-dimensional spaces can be exactly or approximately recovered from few linear measurements~\citep{candes_robust_2006,donoho_compressed_2006,foucart_mathematical_2013}. The main takeaway of our analysis is that the embedding dimension is a representational root of the curse of ambiguity as ambiguous distributions require larger embeddings to be properly represented. We prove this through two theorems, formally stated and proven in Appendix~\ref{app:embedding}. The first shows that, in the idealized setup above, the linear-softmax model can exactly represent every family of size-$k$ distributions as long as the embedding dimension satisfies
\titledeq{Root 2}{h \geq 2k.}{eqn:embedding_root}
Its proof relates the representation problem to a classical question on convex polytopes, namely $k$-neighborliness, and invokes classical results from that literature (namely the Radon-type upper bound on neighborliness~\citep{grunbaum_convex_2003} and the cyclic-polytope construction~\citep{gale_neighborly_1963}) to precisely quantify the threshold.
The second theorem relaxes this idealization by allowing context-dependent perturbations of the embedding, reflecting the fact that real backbones can only produce embeddings that are close to, rather than exactly, optimal. In this robust regime, random-weight constructions from the compressed-sensing literature still yield an accurate representation up to a small increase of the KL-divergence, at the cost of an extra $\log(d/k)$ factor on~$h$. The embedding dimension thus remains proportional to~$k$, only with a larger overall constant, as illustrated in the right panel of Figure~\ref{fig:architecture-theory}.
\section{Learning-based roots of the curse of ambiguity}
\label{sec:learning}

Section~\ref{sec:architecture} characterized which distributions models efficiently represent and how many they can store, but said nothing about how fast they learn them. We show that there is an additional learning-based curse of ambiguity: high ambiguity distributions take longer to optimize and are inherently noisier. Appendix~\ref{app:classical} complements our learning-centric perspective with classical sampling results. 

\vspace{-0.2em}
\subsection{Optimization: ambiguous outputs take longer to learn}
\vspace{-0.15em}
\label{subsec:optimization}

We study gradient flow on the expected cross-entropy loss of the linear model of Equation~\ref{eqn:lm-head}, that is, on the average KL divergence between the ground-truth conditional distributions and the model's outputs, up to a constant entropy term. Gradient flow has a long tradition of being used as a proxy for actual training dynamics~\citep{jacot_neural_2018, soudry_implicit_2017}, outside of a few documented edge cases~\citep{cohen_gradient_2021, cohen_adaptive_2024}.
Our analysis shows that ambiguity slows down initial learning, since each output token is seen less frequently and thus provides a weaker signal, and yields higher losses late in training.

\textbf{Reduction to a scalar dynamics.}
The linear head must solve a logistic regression problem to map contexts to corresponding output probabilities. Far below the capacity threshold of Section~\ref{subsec:capacity}, these contexts decouple and we can study each in isolation as gradient flow on its logits. For a single context with target distribution~$p$ uniform on a support of size $k$, all on-support logits remain equal at all times (and likewise for off-support logits), so the dynamics collapse onto a single scalar, $\Delta \ell$, the gap between any on-support and any off-support logit. As shown in Appendix~\ref{app:optimization}, $\Delta \ell$ obeys the implicit equation
\begin{equation}
    \label{eqn:implicit-main}
    \frac{k^2}{d}\left(\exp(\Delta \ell) - 1\right) + \frac{k(d-k)}{d}\Delta \ell = t,
\end{equation}
with initial condition $\Delta \ell(0) = 0$, which relates the logit gap, training time $t$ and ambiguity $k$.

\begin{figure}
    \centering
    \vspace{-0.25cm}
    \includegraphics{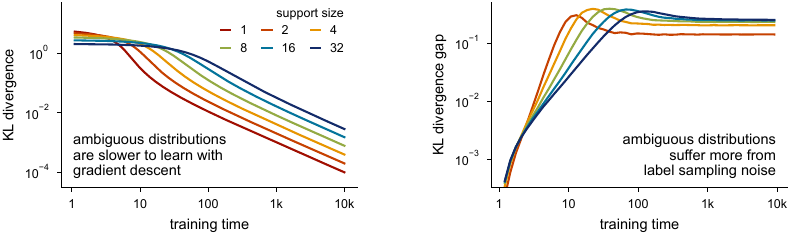}
    \vspace{-0.2cm}
    \caption{\textbf{Learning roots of the curse of ambiguity: ambiguous distributions require more optimization steps to be learned and amplify the noise coming from next-token sampling.} Simulation of the dynamics theoretically analyzed in Section~\ref{sec:learning}. Increasing the support size $k$ slows down initial gradient descent on the expected loss and yields larger losses in the long run (left), and widens the gap between sampled-batch trajectories and the expected-loss dynamics (right). The insights from this simplified one-context toy model extend to a linear head trained on many contexts of mixed ambiguity, as shown in Appendices~\ref{app:optimization-validation} and~\ref{app:sampling-validation}.}
    \label{fig:learning}
\end{figure}

\vspace{-0.25em}
\paragraph{Two phases of learning.} Equation~\ref{eqn:implicit-main} reveals two distinct phases of learning. Early on, $\Delta \ell$ is close to $0$ and grows linearly at rate $1/k$, so the KL divergence decreases as
\titledeq{Root 3.a}{
    D_\mathrm{KL}(t) \approx \log\!\left(\frac{d}{k}\right) - \frac{t}{k}  \quad \text{for small }t,
  }{eqn:early_training}
\vspace{-0.65em}
that is, ambiguity slows down initial learning. Late in training, the exponential term in Equation~\ref{eqn:implicit-main} dominates as $\Delta \ell$ grows to infinity and
\titledeq{Root 3.b}{
    D_\mathrm{KL}(t) \approx \frac{k(d-k)}{td} \quad \text{for large $t$}.
  }{eqn:asymptotics_late_training}
\vspace{-0.65em}
The convergence rate $1/t$ no longer depends on $k$, but the prefactor scales (approximately linearly in $k$ for $k \ll d$) with the support size: at any fixed training time, more ambiguous distributions sit at a higher loss.
Figure~\ref{fig:learning} confirms that these two phases together provide a comprehensive description of the true gradient flow trajectory.
Despite its simplicity, this single-context toy model accurately captures the learning dynamics of a linear head trained on many contexts of mixed ambiguity below the capacity threshold, with only a slow-down from cross-context competition as the number of contexts grows. Above the threshold, the capacity perspective becomes the relevant one: contexts that require less capacity are better modeled (Appendix~\ref{app:optimization-validation}).

\vspace{-0.2em}
\subsection{Stochastic optimization: ambiguous outputs have noisier gradients}
\vspace{-0.15em}
\label{subsec:sampling}

Our analysis so far has assumed access to the ground-truth conditional distributions $p_i$. In practice, large language models only observe a single next-token sample $y_i \sim p_i$ for each context $i$, and the cross-entropy loss is computed against this one-hot target rather than against $p_i$ itself. We focus on this output sampling noise, conditional on the context. This noise incurs an additional curse of ambiguity: a deterministic target produces no noise at all, whereas increasing the support size $k$ increases output sampling noise and thus impairs learning. The analysis below quantifies this effect.

\vspace{-0.25em}
\paragraph{Integrating output sampling noise in the gradient flow dynamics.} In our analysis in Section~\ref{subsec:optimization}, we optimized the expected cross-entropy loss, averaged over all possible outputs from the ground-truth distributions, which is unobserved in practice. In this section, we lift this limitation by introducing label sampling noise into the dynamics. To this end, we leverage a well-established line of work~\citep{mandt_stochastic_2017, li_stochastic_2017} which shows that, for a small enough learning rate, the stochastic gradient descent iterates are closely tracked by a stochastic differential equation that characterizes how the trajectory deviates from gradient flow. This framework lets us cleanly isolate how label sampling noise perturbs the deterministic dynamics of Section~\ref{subsec:optimization}. Writing the iterates as $\ell_t = \ell_t^\star + \delta_t$ where $\ell_t^\star$ is the gradient-flow trajectory, the deviation $\delta_t$ obeys an Ornstein--Uhlenbeck process driven by the per-step gradient noise covariance and damped by the local Hessian of the loss.

\vspace{-0.25em}
\paragraph{Noise introduces within-support variance.} Applied to the dynamics of Section~\ref{subsec:optimization}, the stochastic differential equation collapses onto a single scalar degree of freedom: the within-on-support variance of the logits,
\begin{equation}
    V := \frac{1}{k}\sum_{j \in S} (\ell_j - \bar \ell_S)^2, \, \text{with }\bar \ell_S := \frac{1}{k}\sum_{j \in S} \ell_j.
\end{equation}
This variance is zero under gradient flow and is the new degree of freedom unlocked by sampling. At each step, only one on-support coordinate is incremented, scattering the on-support logits around their mean. As derived in Appendix~\ref{app:sampling}, the expected variance obeys the differential equation
\begin{equation}
    \label{eqn:V-dynamics-main}
    \frac{\mathrm{d}\,\mathbb{E}V}{\mathrm{d}t} = -\frac{2\,\hat p^\star_S(t)}{k}\,\mathbb{E}V + \frac{k-1}{k^2},
\end{equation}
where $\hat p^\star_S(t)$ is the on-support mass along the gradient-flow trajectory. Noise is injected at rate $(k-1)/k^2$ and damped at rate $2 \hat p^\star_S(t)/k$ by the curvature of the loss.

\vspace{-0.25em}
\paragraph{Convergence rate vs. noise floor.} Equation~\ref{eqn:V-dynamics-main} cleanly separates the two effects of sampling. The mean trajectory of the logits is untouched at leading order in the learning rate: label sampling noise does not slow down convergence, it only scatters the logits around the gradient-flow trajectory.\footnote{This is a statement of the leading-order stochastic differential equation limit. Discrete stochastic gradient descent at finite step size does exhibit rate slowdowns, a standard stochastic optimization result that we review in Appendix~\ref{app:sampling}.} However, sampling changes the asymptotic value of the loss. 
A second-order KL expansion around the deterministic trajectory, evaluated at the stationary value of Equation~\ref{eqn:V-dynamics-main}, yields the noise floor
\titledeq{Root 4}{
    \lim_{t\to\infty}\left(\mathbb{E}[D_\mathrm{KL}^{\,\mathrm{SGD}}(t)] - D_\mathrm{KL}^{\,\mathrm{GF}}(t)\right) = \frac{k-1}{4k}
  }{eqn:sgd-gap-main}
which is zero for deterministic targets and asymptotes to $1/4$ as $k$ grows to infinity. This is another manifestation of the curse of ambiguity, compounding with the slowdown of Section~\ref{subsec:optimization}: early in training the $1/k$ rate dominates; as training progresses, the deterministic loss decays as $k(d-k)/(td)$ until it falls below the noise floor, after which stochastic gradient descent plateaus while gradient flow keeps decreasing. Figure~\ref{fig:learning} confirms this behavior in simulation. Both the onset of the plateau and its value worsen with~$k$. Appendix~\ref{app:sampling-validation} confirms that this $k$-dependent gap also appears when training the linear head with sampled labels on mixed-ambiguity data.
\section{The curse of ambiguity in large language models}
\label{sec:experiments}

Having dissected the roots of the curse of ambiguity in toy settings, we now establish that large language models suffer from it. We first study a synthetic $n$-gram task in which each component of our theory can be ablated in isolation, then turn to pretrained transformers on natural data.

\vspace{-0.4em}
\subsection{Synthetic $n$-gram data}
\vspace{-0.2em}

\begin{figure}
    \centering
    \includegraphics{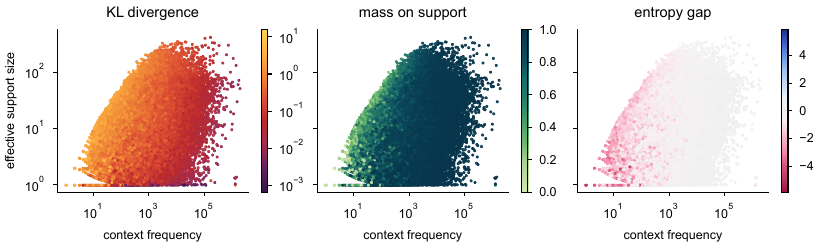}
    \vspace{-0.6cm}
    \caption{\textbf{Transformers trained autoregressively on a $3$-gram task suffer from the curse of ambiguity.} The data follows the empirical $3$-gram distributions from the TinyStories dataset, at the word level. Each dot is a different context (a pair of words), characterized by how many times it appears in the dataset (context frequency) and the effective support size of the true next-token distribution. For each context, we report the average KL divergence between the model's prediction and the true distribution (left), the average probability mass assigned to the support (middle), and the average entropy gap between the true and the predicted distribution (right). The first two panels confirm that the model suffers from the curse of ambiguity: performance degrades as effective support size increases at fixed context frequency. The model's failure mode systematically consists of leaking probability mass outside the support.}
    \label{fig:ngram-confirmation-curse}
\end{figure}

We start with a synthetic language modeling task whose ground-truth conditional distributions are known exactly, letting us measure precisely how predictions deviate from them. To that end, we construct a word-level $3$-gram dataset derived from the TinyStories corpus \citep{eldan_tinystories_2023}. Training sequences are sampled autoregressively from the empirical $3$-gram distribution, and by default, we train a $4$-layer transformer with 4M parameters on 100M tokens; see Appendix~\ref{app:exp_details_ngram} for details.

\vspace{-0.6em}
\paragraph{Small Transformer models suffer from the curse of ambiguity.}
In this synthetic setting, we know exactly which part of the input is relevant for next-token prediction, namely the previous two tokens, and can therefore characterize the complexity of each context precisely. Ignoring correlations between contexts in the $n$-gram table, we summarize this complexity by the number of times the context appears in the training data, which we refer to as its context frequency.
We then analyze how well the model predicts the ground-truth distribution as contexts become more ambiguous, that is, as their effective support size (exponential of the entropy, cf.\ Section~\ref{sec:setup}) grows, at fixed context frequency. Figure~\ref{fig:ngram-confirmation-curse} summarizes this per-context evaluation. These results confirm that transformer models suffer from the curse of ambiguity, as more ambiguous contexts are harder to predict (left panel). The middle and right panels further show that the model fails by systematically leaking probability mass outside the support, leading the sequence-level distribution to have more entropy than the ground truth.

\vspace{-0.6em}
\paragraph{Ablations qualitatively match the toy-model predictions.} To probe whether the predictions of our toy-model theory hold for actual transformers, we ablate the key components identified in Sections~\ref{sec:architecture} and~\ref{sec:learning}: model capacity, embedding dimension, training duration, and whether we minimize the sampled cross-entropy or the full KL divergence to the true conditional distribution. To make this comparison tractable, we summarize each configuration by a single iso-loss contour in the effective support size, context frequency plane; see Appendix~\ref{app:exp_details_ngram} for how this contour is computed. Each intervention then either shifts the contour, indicating a uniform improvement across all effective support sizes, or rotates it, indicating a different effect on small versus large effective support sizes.

Figure~\ref{fig:ablation-roots} shows the result. Increasing the number of training steps produces a uniform shift, consistent with the late-phase regime of Section~\ref{subsec:optimization} where the convergence rate $1/t$ no longer depends on $k$. We see no trace of the early-dynamics regime in which large $k$ should be learned much more slowly, presumably because even our shortest run already lies past that early phase. Capacity also shifts the line uniformly, with saturation at the largest models since training duration is held fixed. This is consistent with theory, as more parameters change the total capacity but not its optimal allocation. In contrast, increasing the embedding dimension and removing sampling noise by directly optimizing the KL divergence rather than a sampled cross-entropy both rotate this line, improving large effective support sizes while leaving deterministic contexts essentially unchanged. This is consistent with the embedding dimension analysis of Section~\ref{subsec:embedding} and the sampling analysis of Section~\ref{subsec:sampling}.

\begin{figure}
    \centering
    \includegraphics{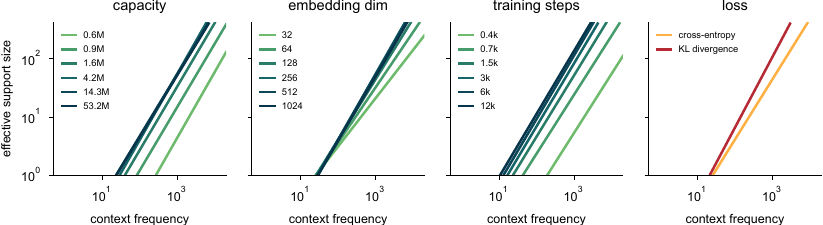}
    \vspace{-0.45cm}
    \caption{\textbf{Transformers suffer from the curse of ambiguity in the same way as linear models.} Each line is an iso-loss contour at $D_\mathrm{KL} = 0.5$, obtained from a log-log linear fit of the per-context loss of Figure~\ref{fig:ngram-confirmation-curse} (left) in effective support size and context frequency, which describes the measurements well (Appendix~\ref{app:exp_details_ngram}). Increasing capacity or the number of training steps improves performance uniformly across all effective support sizes. Increasing the embedding dimension or removing sampling noise by directly optimizing the expected loss over outputs (KL divergence) significantly improves modeling abilities at large effective support sizes, but barely affects deterministic next-token distributions, consistent with the theory developed for the linear case. Setup is the same as in Figure~\ref{fig:ngram-confirmation-curse}; see Appendix~\ref{app:exp_details_ngram} for experimental details.}
    \label{fig:ablation-roots}
\end{figure}

\subsection{Real-world models}
\label{subsec:realworld}

We now show that large language models also suffer from the curse of ambiguity.

\paragraph{Proxies for the ground truth and the context frequency.}
The main obstacle to this analysis is that, on natural data, neither the ground-truth next-token distribution nor the frequency of each training context is observable: past a few tokens, every context appears at most a handful of times.
We therefore replace each by a proxy.
We use a large language model, OLMo-2 13B \cite{olmo_2_2025}, as a stand-in for the ground-truth next-token distribution and evaluate the smaller OLMo-2 1B against it; both checkpoints are taken before any post-training.
From the predictions of the larger model we read off the effective support size of each context by taking the exponential of the entropy of the predicted probability distribution.
To estimate context frequency, we exploit the fact that similar contexts produce similar embeddings. We extract the last-layer hidden state of OLMo-2 13B at every position of $100{,}000$ sequences of length $512$ sampled from the Dolma v1.7 corpus \cite{soldaini_dolma_2024}, and count how many other embeddings are close by; see Appendix~\ref{app:exp_details_realworld} for details.
This yields an estimate of the number of semantically similar contexts seen during training, which we call the \emph{semantic context frequency}.
Both proxies are biased, since OLMo-2 13B is itself far from the true distribution of natural text and embedding distance is a coarse measure of context similarity.
We nevertheless show in Appendix~\ref{app:exp_details_realworld} that, on the $3$-gram task of the previous section where the ground truth is known, the semantic context frequency correlates with the true context frequency.

\paragraph{Large language models suffer from the curse of ambiguity.}
Figure~\ref{fig:realworld-curse} reports the per-context KL divergence between OLMo-2 13B and OLMo-2 1B in the estimated effective support size, semantic context frequency plane.
At fixed semantic context frequency, more ambiguous contexts are harder to model, mirroring Figure~\ref{fig:ngram-confirmation-curse}; Figure~\ref{fig:realworld-binned-correlation-perplexity} in the Appendix quantifies this trend.
One qualitative difference with the synthetic setup is worth flagging: the low-frequency region contains a long tail of outliers, contexts seen rarely yet on which the two models agree closely. Inspecting them shows that they rely on in-context copying mechanisms that generalize to unseen sequences; see Appendix~\ref{app:realworld_in_context_band}.

\begin{figure}
    \centering
    \begin{minipage}[c]{0.44\linewidth}
        \includegraphics{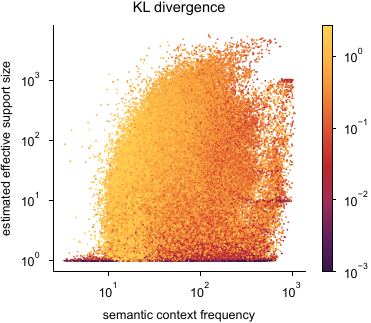}
    \end{minipage}\hfill
    \begin{minipage}[c]{0.53\linewidth}
        \caption{\textbf{Large language models suffer from the curse of ambiguity.} Same per-context evaluation as Figure~\ref{fig:ngram-confirmation-curse} (left), except that we now evaluate OLMo-2 1B \citep{olmo_2_2025} on natural data sampled from Dolma~v1.7 \citep{soldaini_dolma_2024}. The effective support size and semantic context frequency are estimated from the larger OLMo-2 13B model, a proxy for the ground-truth distribution. Overall, we recover the same behavior as in Figure~\ref{fig:ngram-confirmation-curse}: at fixed frequency, more ambiguous contexts are harder to predict. Appendix~\ref{app:quantification_correlation} quantitatively shows that the log KL divergence and the log effective support size at fixed semantic context frequency are correlated. See Section~\ref{subsec:realworld} and Appendix~\ref{app:realworld_in_context_band} for a discussion of what the outliers in the bottom left of the plot correspond to, and Appendix~\ref{app:exp_details_realworld} for experimental details.}
        \label{fig:realworld-curse}
    \end{minipage}
\end{figure}

\section{Discussion}

\paragraph{The power of toy models.} Our paper illustrates how studying simplified models and data can uncover phenomena that are hard to isolate at scale, by exposing controlled axes of complexity that point to the right setups, observables, and ablations to use in real models. Such studies are one strand of a broader movement, sometimes referred to as the physics or mechanics of learning \cite{bahri_statistical_2020, saxe_if_2021, ganguli_physics_nodate, simon_there_2026}, which has already yielded insights into learning dynamics~\cite{saxe_exact_2014, saxe_mathematical_2019, saxe_neural_2022, zucchet_emergence_2025}, scaling laws~\citep{allen-zhu_physics_2025, cabannes_scaling_2024, kaplan_scaling_2020, michaud_quantization_2023, cagnetta_deriving_2026, liu_universal_2026}, in-context learning~\cite{olsson_-context_2022, chan_data_2022, garg_what_2022, oswald_transformers_2023, oswald_uncovering_2024, reddy_mechanistic_2024}, or representation geometry~\cite{elhage_toy_2022, liu_superposition_2025}. Our work adds the discovery of the curse of ambiguity to this list\footnote{In fact, we first formulated the curse of ambiguity hypothesis while playing with one-hidden-layer multilayer perceptrons trained on synthetic ambiguous distributions.}.

\paragraph{How relevant is the curse of ambiguity in practice?} While the curse of ambiguity is clearly visible in the large-scale models we study (Section~\ref{subsec:realworld}), the picture there is less clean than in synthetic settings, in part because of a property of natural data that our analysis sets aside: structure. Indeed, our analysis implicitly assumes that a context is learned at a rate proportional to how often it appears; structural similarities between contexts, for example synonym outputs, short-circuit this assumption, allowing rare contexts to be accurately predicted via shared, compressed mechanisms. Under ambiguity, where the memorization route is slow, such structured solutions have more time to emerge, whereas for deterministic transitions memorization is fast enough that such pressure is minimal, as per the neural race argument \cite{saxe_neural_2022}. A corollary is that neural networks are more likely to memorize sequences with deterministic, unique transitions than stochastic ones; we leave this interesting hypothesis for future work. Our analysis reveals that scaling mitigates the curse of ambiguity, but only asymptotically. Even at the largest scale a residual fraction of the distribution is left unmodeled \cite{kaplan_scaling_2020, hoffmann_training_2022}, and the curse of ambiguity will be particularly relevant in this regime. Fine-tuning distributions that significantly differ from the pretraining distribution should also remain particularly exposed. 

\paragraph{Not all neural networks are language models.} While we have framed this work around large language models, our results extend well beyond them. The theoretical arguments of Sections~\ref{sec:architecture} and~\ref{sec:learning} apply to any neural network that produces discrete probability distributions, and the curse of ambiguity may be an even greater concern in settings where such networks operate at much smaller scales. Beyond artificial networks, the cognitive psychology literature has long documented that humans struggle when the environment admits many plausible options. In the classical paradox of choice~\citep{iyengar_when_2000, schwartz_paradox_2015}, increasing the number of viable alternatives degrades the speed and the quality of decision-making, and lowers satisfaction with the eventual choice. Our results offer a complementary mechanistic perspective: representing a distribution that spreads mass over many continuations is fundamentally more demanding, in capacity and in samples, than representing one that concentrates on a single answer.

\section*{Acknowledgments}
We thank Yizhou Liu, Noah Cowan and members of the Linderman lab for helpful feedback.
N.Z. is supported by Postdoc.Mobility grant P500-2\_235376 from the Swiss National Science Foundation.
S.W.L. is supported by grants from the NIH (U01NS136507, R01NS131987, R01NS113119, RF1MH133778, R01AG097491, \& R01NS130789), the NSF (2440859), and the Simons and McKnight Foundations.

The experiments were performed on the Stanford Marlowe computing cluster \cite{kapfer_marlowe_2025}. We thank Stanford University and the Stanford Research Computing Center for providing computational resources and support that contributed to these research results.

\section*{Code availability}

The code to reproduce all experiments is available at \url{https://github.com/NicolasZucchet/Curse-of-ambiguity}.

{
\small

\bibliography{references}
\bibliographystyle{unsrtnat}

}

\newpage
\appendix

\renewcommand{\ptctitle}{Table of contents}

\setcounter{tocdepth}{3}  
\setcounter{parttocdepth}{3}  
\renewcommand \thepart{}
\renewcommand \partname{}
\renewcommand \thepart{Appendix}
\addcontentsline{toc}{section}{Appendix} 
\part{} 
\parttoc 

\newpage

\newcounter{robustinformalnum}
\newcounter{tempthmsave}

\section*{Notation}
\addcontentsline{toc}{section}{Notation}

\begin{center}
\renewcommand{\arraystretch}{1.15}
\setlength{\tabcolsep}{8pt}
\begin{tabular}{rl}
    \specialrule{0.12em}{0pt}{4pt}
    \textbf{Symbol} & \textbf{Meaning} \\
    \specialrule{0.12em}{3pt}{2pt}
    \multicolumn{2}{l}{\emph{Problem setup (Section~\ref{sec:setup})}} \\
    \midrule
    $d$ & vocabulary size \\
    $k$ & support size of the target distribution (ambiguity level) \\
    $h$ & embedding dimension \\
    $n$ & number of contexts \\
    $S_i \subseteq [d]$ & support of the target distribution for context $i$ \\
    $p_i \in \Delta^{d-1}$ & ground-truth distribution for context $i$ (uniform on $S_i$) \\
    $\hat p_i = \mathrm{softmax}(W e_i) \in \Delta^{d-1}$ & model's predicted distribution for context $i$ \\
    $e_i \in \mathbb{R}^h$ & embedding of context $i$ \\
    $f_\theta : [d]^{t-1} \to \mathbb{R}^h$ & backbone mapping a context to its embedding \\
    $W \in \mathbb{R}^{d \times h}$ & weight matrix of the linear head \\
    $\ell_i = W e_i \in \mathbb{R}^d$ & logits for context $i$ \\
    \midrule
    \multicolumn{2}{l}{\emph{Capacity (Appendix~\ref{app:capacity})}} \\
    \midrule
    $\beta$ & scalar factor on the Hebbian weights \\
    $\rho = k/d$ & support density \\
    $\xi_j = e_j^\top e_i$ & pairwise overlap between random embeddings \\
    $\Xi_i = \sum_{j \neq i} z_j\, e_j^\top e_i$ & cross-context noise in the logits \\
    $\sigma^2 = (n-1)\rho(1-\rho)/h$ & per-component variance of $\Xi_i$ \\
    $n_{\mathrm{top\text{-}k}}$ & top-$k$ capacity \\
    $n_\varepsilon$ & $\varepsilon$ capacity \\
    \midrule
    \multicolumn{2}{l}{\emph{Embedding dimension (Appendix~\ref{app:embedding})}} \\
    \midrule
    $\tau$ & magnitude of an adversarial embedding perturbation \\
    $\varepsilon$ & KL slack in the robust representation result \\
    \midrule
    \multicolumn{2}{l}{\emph{Learning dynamics (Appendices~\ref{app:optimization}--\ref{app:sampling})}} \\
    \midrule
    $\Delta \ell \in \mathbb{R}$ & gap between any on- and off-support logit (gradient-flow scalar) \\
    $\ell_t^\star \in \mathbb{R}^d$ & gradient-flow trajectory of the logits at time $t$ \\
    $\hat p^\star_S(t) = \sum_{j \in S} \hat p_j^\star(t) \in [0, 1]$ & on-support mass along $\ell_t^\star$ \\
    $\eta$ & learning rate of stochastic gradient descent \\
    $\delta_t = \ell_t - \ell_t^\star \in \mathbb{R}^d$ & deviation of the stochastic iterates from gradient flow \\
    $V = \tfrac{1}{k}\sum_{j \in S}(\ell_j - \bar \ell_S)^2 \in \mathbb{R}$ & within-on-support variance of the logits \\
    $H(\ell) = \mathrm{diag}(\hat p) - \hat p \hat p^\top \in \mathbb{R}^{d \times d}$ & Hessian of the softmax cross-entropy \\
    \midrule
    \multicolumn{2}{l}{\emph{Empirical analysis (Section~\ref{sec:experiments})}} \\
    \midrule
    $D_{\mathrm{KL}}(p \,\|\, \hat p)$ & Kullback--Leibler divergence \\
    $H(p)$ & Shannon entropy of $p$ \\
    $\exp(H(p))$ & effective support size of $p$ \\
    $n_{\mathrm{est}}$ & semantic context frequency from kNN density on embeddings \\
    $\lambda$ & bandwidth of the exponential kernel used to compute $n_{\mathrm{est}}$ \\
    \bottomrule
\end{tabular}
\end{center}

\newpage

\section{Theoretical results regarding capacity}
\label{app:capacity}

\subsection{Definitions of top-$k$ and $\varepsilon$ capacity}
\label{subsec:capacity-def}

When $k=1$, the output is deterministic and the notion of correct storage is clear: the model must predict the correct output. In the stochastic case, multiple criteria are possible, and we consider the following two.

\begin{definition}[top-$k$ capacity]
    The top-$k$ capacity $n_{\mathrm{top\text{-}k}}$ is the largest $n$ such that the top-$k$ predictions of the model over all contexts $i$ match the support of the true conditional distributions.
\end{definition}

\begin{definition}[$\varepsilon$ capacity]
    \label{def:eps-capacity}
    The $\varepsilon$-capacity $n_\varepsilon$ is the largest $n$ such that the KL divergence $\mathrm{KL}(p_i \| \hat{p}_i)$ stays below $\varepsilon$ for every context $i$.
\end{definition}

These two capacity notions are related in the sense that the top-$k$ capacity is always at least as large as the $\varepsilon$-capacity for small enough $\varepsilon$, since accurately matching the full distribution requires in particular identifying the correct support. The $\varepsilon$-capacity is thus the more demanding notion: it asks not only that the support be identified, but that the $k$ probabilities within it be nearly equal.

\subsection{Setup, Hebbian model and signal-noise decomposition of the model's logits}

Let us quickly recall the setup we are considering, the Hebbian model we study as well as the notations we have introduced in the main text.

We have $n$ input contexts, each context $i$ having a random embedding $e_i \in \mathbb{R}^h$ sampled iid and uniformly from the unit sphere $\mathbb{S}^{h-1}$. Context $i$ is associated with a ground truth output distribution $p_i$ that is uniform on a random support $S_i \subseteq [d]$ of size $k$:
\begin{equation}
    p_{i,j} = \begin{cases} 1/k & \text{if } j \in S_i \\ 0 & \text{otherwise.} \end{cases}
\end{equation}

We consider a Hebbian model whose predicted output distribution for context $i$ is $\hat{p}_i = \mathrm{softmax}(W e_i)$ with weight matrix
\begin{equation}
    W = \beta \sum_{j=1}^{n} z_{j}\, e_{j}^\top \quad\text{with}\quad z_{j,\ell} = \begin{cases} 1 - \rho & \text{if } \ell \in S_{j} \\ -\rho & \text{otherwise,}\end{cases}
\end{equation}
where $\rho := k/d$.

The logits of the model $\ell_i$ are given by
\begin{equation}
    \ell_i = W e_i = \beta z_i (e_i^\top e_i) + \beta \sum_{j \neq i} z_{j} (e_{j}^\top e_i)
\end{equation}
Since the embeddings are normalized, $e_i^\top e_i = 1$ exactly and the signal term is simply $\beta z_i$. The cross-terms $\xi_{j} := e_{j}^\top e_i$ are mean-zero and sub-Gaussian\footnote{This is a standard result for dot products of random embeddings from the unit sphere; see~\citet{vershynin_high-dimensional_2018}.} with variance $1/h$. We have additionally introduced the cross-context noise
\begin{equation}
    \Xi_i = \sum_{j \neq i} z_{j} e_{j}^\top e_i
\end{equation}
and so the logits are
\begin{equation}
    \ell_i = \beta\bigl(z_i + \Xi_i\bigr).
\end{equation}

Since context embeddings and target distributions are independent, the cross-context noise is a sum of $n-1$ independent random variables, whose first and second order moments are
\begin{align}
    \mathbb{E} \left [ \Xi_{i} \right ] & = \sum_{j \neq i} \mathbb{E} \left [z_{j} \right ] \mathbb{E} \left [ \xi_{j} \right ] = 0\\
    \mathrm{Cov} \left [ \Xi_{i} \right ] & = \frac{(n - 1)\,\rho (1 - \rho)\,d}{h\,(d-1)} \left (I - \frac{1}{d}\mathbf{1} \mathbf{1}^\top \right).
\end{align}
The covariance calculation uses the fact that membership in $S_i$ of two outputs $j$ and $j'$ are not independent (they are drawn without replacement), inducing off-diagonal terms; we omit the details. It follows that the per-component variance of the noise is
\begin{equation}
    \sigma^2 := \mathrm{Var}[\Xi_{i,j}] = \frac{(n-1)\,\rho(1-\rho)}{h}
\end{equation}

\subsection{Top-$k$ capacity of the Hebbian model}

In this section, we demonstrate that the top-$k$ capacity of the Hebbian construction scales as $\frac{h}{\rho(1-\rho)}$ up to some logarithmic factor, thereby providing theoretical grounding for the scaling that we have discussed in the main text.

\begin{theorem}[Top-$k$ capacity lower bound]
    \label{thm:topk-capacity}
    There exists a universal constant $C > 0$ such that the following holds. For any $\delta \in (0,1)$, if $h \geq C \log^2(hd/\delta)$ and
    \begin{equation}
        \label{eqn:condition-topk-capacity}
        n \leq \frac{h}{C\,\rho(1-\rho)\,\log(hd/\delta)}
    \end{equation}
    then with probability at least $1 - \delta$ over the draw of embeddings and supports, the top-$k$ predictions of the Hebbian model are correct for all $n$ contexts:
    \begin{equation}
        \forall\, i \in [n]\quad \operatorname{top}_k\bigl(W e_i\bigr) = S_i.
    \end{equation}
\end{theorem}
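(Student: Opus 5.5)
Since $\beta>0$ does not affect the ordering of the logits, the plan is to work directly with $\ell_i/\beta = z_i + \Xi_i$. The signal $z_i$ separates on-support from off-support coordinates by exactly $1$. So the claim $\operatorname{top}_k(We_i)=S_i$ holds as soon as $\max_j |\Xi_{i,j}| < 1/2$. A union bound over the $nd$ pairs $(i,j)$ then reduces the theorem to one tail bound: for a fixed $i$ and coordinate $j$,
\[
\Pr\bigl(|\Xi_{i,j}|\ge 1/2\bigr)\le \delta/(nd).
\]
Because $n \le h$ under the stated condition (the case $\rho(1-\rho)\ge$ constant being trivial, and in general $n\lesssim hd$), we have $\log(nd/\delta)\lesssim \log(hd/\delta)$.

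To get the tail bound, I condition on $e_i$. By rotational invariance, the overlaps $\xi_m = e_m^\top e_i$ for $m\neq i$ are then i.i.d. They have mean zero and are sub-Gaussian with variance proxy $O(1/h)$. They are independent of the supports $S_m$, hence of the coefficients $z_{m,j}\in\{1-\rho,-\rho\}$. Conditioning further on all the $z_{m,j}$, the sum $\Xi_{i,j}=\sum_{m\ne i} z_{m,j}\xi_m$ is sub-Gaussian with variance proxy
\[
\frac{C}{h}\sum_{m\neq i} z_{m,j}^2 .
\]

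\textbf{Main obstacle.} The crude bound $z^2\le 1$ loses the factor $\rho(1-\rho)$ we need. I will instead control $A_j:=\sum_{m\ne i} z_{m,j}^2$ directly. Each term $z_{m,j}^2$ is independent across $m$, bounded by $1$, and has mean $\rho(1-\rho)$. Bernstein's inequality (or a Chernoff bound) therefore gives, with probability $1-\delta/(2nd)$,
\[
A_j \le 2n\rho(1-\rho) + C\log(nd/\delta).
\]
Plugging this into the conditional sub-Gaussian tail yields
\[
\Pr\bigl(|\Xi_{i,j}|\ge 1/2\bigr)\le 2\exp\!\Bigl(-\frac{h}{C'\,(n\rho(1-\rho)+\log(nd/\delta))}\Bigr).
\]
This is at most $\delta/(2nd)$ provided two conditions hold:
\begin{itemize}
\item $n\rho(1-\rho)\le h/(C\log(hd/\delta))$, which is exactly condition~\eqref{eqn:condition-topk-capacity};
\item $\log(hd/\delta)\cdot\log(hd/\delta)\le h/C$, which is exactly the hypothesis $h\ge C\log^2(hd/\delta)$.
\end{itemize}
This explains the squared logarithm in the statement: it pays for the Bernstein fluctuation term in $A_j$.

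Finally, I take a union bound over all $i\in[n]$ and $j\in[d]$, covering both the $A_j$ events and the tail events, for a total failure probability of at most $\delta$. On the complement, every $|\Xi_{i,j}|<1/2$, so on-support logits exceed $\beta/2$ and off-support logits are below $\beta/2$. The top-$k$ set therefore equals $S_i$ for every $i$.

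One technicality remains: the sub-Gaussian constant of $\xi_m$ must be uniform in $h$. This follows from concentration of measure on the sphere, which gives $\|\xi_m\|_{\psi_2}\lesssim 1/\sqrt h$, as cited from Vershynin in the paper.
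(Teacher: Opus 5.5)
Your proof is correct and follows the paper's route: reduce to $\max_{i,j}|\Xi_{i,j}|<1/2$, control the random variance proxy $\sum_{m\neq i} z_{m,j}^2$ by Bernstein, then apply a conditional sub-Gaussian tail and a union bound, with the $\log^2(hd/\delta)$ hypothesis paying for the Bernstein deviation term exactly as in the paper. The one difference is that you apply Bernstein directly to the $z_{m,j}^2$, which avoids the paper's reduction to $\rho\le 1/2$ via the binomial count $M_{i,j}$. There are two harmless slips: the hypothesis only gives $n\lesssim hd$ (not $n\le h$), which still yields $\log(nd/\delta)\lesssim\log(hd/\delta)$, and the separating threshold between on- and off-support logits is $\beta(1/2-\rho)$ rather than $\beta/2$.
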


To prove this statement, we will make use of the following concentration bound that relates the maximal magnitude of the noise $\Xi$ (over $i$ and $j$) to its variance. We will prove it after Theorem~\ref{thm:topk-capacity}.

\begin{lemma}[Uniform noise bound]
    \label{lem:uniform-noise-bound}
    There exists a universal constant $C > 0$ such that for any $\delta \in (0,1)$, with probability at least $1 - \delta$,
    \begin{equation}
        \max_{i \in [n], j \in [d]} |\Xi_{i,j}| \leq C \sqrt{\left(\sigma^2 + \frac{\log(nd/\delta)}{h}\right) \log(nd/\delta)}.
    \end{equation}
\end{lemma}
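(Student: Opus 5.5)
The plan is to prove Lemma~\ref{lem:uniform-noise-bound} for a single pair $(i,j)$ and then take a union bound over the $nd$ pairs. This is why the failure probability is set to $\delta/(nd)$ per pair, which in turn produces the $\log(nd/\delta)$ factors. Fix $i$ and $j$. Then $\Xi_{i,j}=\sum_{m\neq i} z_{m,j}\,\xi_m$ with $\xi_m=e_m^\top e_i$. The coefficients $z_{m,j}$ are bounded, with $|z_{m,j}|\le 1$, and mean zero: $\mathbb{E}z_{m,j}=\rho(1-\rho)-(1-\rho)\rho=0$. Their variance is $\rho(1-\rho)$. They are independent across $m$ and independent of the embeddings.

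The key step is to condition on $e_i$. By rotational invariance, the $\xi_m$ for $m\neq i$ are then i.i.d.\ copies of the first coordinate of a uniform vector on $\mathbb{S}^{h-1}$. Each is therefore sub-Gaussian with variance proxy $O(1/h)$, and $|\xi_m|$ is bounded with high probability. Each summand $X_m=z_{m,j}\xi_m$ is mean zero, has variance $\rho(1-\rho)/h$, and satisfies $\|X_m\|_{\psi_2}\lesssim 1/\sqrt h$. A plain sub-Gaussian Hoeffding bound would give the variance proxy $n/h$, which loses the factor $\rho(1-\rho)$. To capture the true variance $\sigma^2$, I will use a Bernstein-type inequality instead.

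Concretely, I would first truncate. By a union bound over the $n-1$ overlaps, with probability $\ge 1-\delta/2$ every $|\xi_m|\le C\sqrt{\log(nd/\delta)/h}=:L$. On this event, conditionally on the $\xi$'s, the $X_m$ are independent across $m$ (the randomness now comes only from the supports), mean zero, bounded by $L$, with total variance $\rho(1-\rho)\sum_m\xi_m^2$. Bernstein's inequality then gives
\[
|\Xi_{i,j}|\le C\Big(\sqrt{\rho(1-\rho)\textstyle\sum_m\xi_m^2\,\log(nd/\delta)}+L\log(nd/\delta)\Big)
\]
with conditional probability $\ge1-\delta/(2nd)$. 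Next, $\sum_m\xi_m^2$ is a sum of $n-1$ independent sub-exponential variables with mean $1/h$. Bernstein again, together with a union bound over $i$, gives $\sum_m\xi_m^2\le C\big((n-1)/h+\log(nd/\delta)/h\big)$ with high probability. Substituting, the first term is at most $\sqrt{(\sigma^2+\log(nd/\delta)/h)\log(nd/\delta)}$, using $\rho(1-\rho)\le1$. The second term is $\sqrt{\log(nd/\delta)/h}\cdot\log(nd/\delta)=\sqrt{(\log(nd/\delta)/h)\log(nd/\delta)}\cdot\sqrt{\log(nd/\delta)}$.

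The main obstacle is exactly this extra $\sqrt{\log}$ in the Bernstein range term, which the stated bound does not contain. To avoid it I would skip the truncation and apply the sub-exponential (Bernstein $\psi_1$) form directly. Conditionally on $e_i$, each $X_m$ is a product of a bounded variable and a $\psi_2$ variable, so it has $\|X_m\|_{\psi_2}\lesssim 1/\sqrt h$. Its moments satisfy $\mathbb{E}|X_m|^p\le \rho(1-\rho)\,\mathbb{E}|\xi_m|^p\lesssim \rho(1-\rho)(C\sqrt{p/h})^p$, because $|z_{m,j}|^p\le|z_{m,j}|^2$. A moment-based Bernstein bound (for example Rosenthal or Latała's inequality for sums of independent symmetric-like variables) then gives $\|\Xi_{i,j}\|_{L^p}\lesssim \sqrt{p\,\sigma^2}+p\,\sqrt{1/h}\cdot(\text{small})$. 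Optimizing at $p=\log(nd/\delta)$ and applying Markov yields $\sqrt{\sigma^2\log}+\log/\sqrt h$. This matches the target form $\sqrt{(\sigma^2+\log/h)\log}$ up to constants. If the moment route proves delicate, the fallback is to note that the theorem only needs the bound up to logarithmic factors, absorb the extra $\sqrt{\log}$ into the constant via the assumption $h\ge C\log^2(hd/\delta)$, and finish with the union bound over $(i,j)$ and a rescaling of $\delta$.
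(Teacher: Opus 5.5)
There is a genuine gap: none of your three routes establishes the bound as stated. Write $L=\log(nd/\delta)$. Your first route (condition on the overlaps, use the randomness of the supports, apply Bernstein) is sound, but it only yields $\sqrt{\sigma^2 L}+L^{3/2}/\sqrt h$. As you observe yourself, the second term exceeds the target $L/\sqrt h$ by a factor $\sqrt L$.

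The moment route does not repair this as written. The inequality $\|\Xi_{i,j}\|_{L^p}\lesssim\sqrt{p\sigma^2}+p/\sqrt h$ is true, but you assert it, with an unexplained ``(small)'', rather than derive it, and the tools you name do not give it. Feed your estimate $\mathbb{E}|X_m|^p\lesssim\rho(1-\rho)(C\sqrt{p/h})^p$ into Rosenthal's inequality. The second term is then of order $p$ (or $p/\log p$ with optimal constants) times $\bigl((n-1)\rho(1-\rho)\bigr)^{1/p}\sqrt{p/h}$. The middle factor is of order one at $p=L$, so you are back to roughly $L^{3/2}/\sqrt h$.

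The fallback is not legitimate either. The lemma carries no hypothesis on $h$, and the discrepancy is a factor $\sqrt L$, not a constant, so it cannot be absorbed into $C$. Moreover, the weaker bound would force $h\gtrsim L^3$ in Theorem~\ref{thm:topk-capacity}, which is stronger than its stated hypothesis $h\geq C\log^2(hd/\delta)$.

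The missing idea is to condition the other way round: on the supports (and on $e_i$) rather than on the overlaps. Given the supports, the $z_{\ell,j}$ are deterministic coefficients. The $\xi_\ell=e_\ell^\top e_i$, $\ell\neq i$, are independent given $e_i$ and sub-Gaussian with proxy $1/h$. So $\Xi_{i,j}$ is sub-Gaussian with variance proxy $\frac1h\sum_{\ell\neq i}z_{\ell,j}^2=\frac1h\bigl(M_{i,j}(1-2\rho)+(n-1)\rho^2\bigr)$, where $M_{i,j}=|\{\ell\neq i: j\in S_\ell\}|\sim\mathrm{Binomial}(n-1,\rho)$.

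Bernstein is then applied to the bounded count $M_{i,j}$, not to $\Xi_{i,j}$ itself. Taking $\rho\leq 1/2$ without loss of generality, $M_{i,j}\leq 2(n-1)\rho+O(L)$ with probability $1-\delta/(2nd)$. This gives a variance proxy at most $2\sigma^2+O(L/h)$, uniformly over the $nd$ pairs after a union bound. The Bernstein range term now sits additively inside the variance proxy. The sub-Gaussian tail bound, with a second union bound, multiplies it by $L$ only once, under the square root, which gives exactly $C\sqrt{(\sigma^2+L/h)L}$.

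The same conditioning is also what would justify your moment claim. It gives $\|\Xi_{i,j}\|_{L^p}\lesssim\sqrt{p/h}\,\bigl\|M_{i,j}+(n-1)\rho^2\bigr\|_{p/2}^{1/2}$, and $\|M_{i,j}\|_{p/2}\lesssim n\rho+p$.
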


\begin{proof}[Proof of Theorem~\ref{thm:topk-capacity}]
The model correctly identifies the support $S_i$ if and only if every on-support logit exceeds every off-support logit. Substituting $z_{i,j} = 1 - \rho$ for $j \in S_i$ and $z_{i,j'} = -\rho$ otherwise, this simplifies to
\begin{equation}
    \forall\, j \in S_i,\; j' \notin S_i\quad \Xi_{i,j'} - \Xi_{i,j} < 1.
\end{equation}
A sufficient condition is $\max_{i,j} |\Xi_{i,j}| < 1/2$, and by Lemma~\ref{lem:uniform-noise-bound} this holds with probability at least $1-\delta$ provided
\begin{equation}
    \left(\sigma^2 + \frac{\log(nd/\delta)}{h}\right) \log(nd/\delta) \leq \frac{1}{4C^2}.
\end{equation}
Splitting the two summands evenly, it holds whenever
\begin{equation}
    \label{eqn:two-conditions}
    \sigma^2 \log(nd/\delta) \leq \frac{1}{8C^2} \; \text{ and } \;\frac{\log^2(nd/\delta)}{h} \leq \frac{1}{8C^2}.
\end{equation}
The hypotheses of the theorem, after possibly enlarging the universal constant~$C$, imply both inequalities. First, under Eq.~\ref{eqn:condition-topk-capacity} we have 
\begin{equation}
    n - 1 \leq n \leq h/(C\rho(1-\rho)\log(hd/\delta)).
\end{equation}
Combined with
\begin{equation}
    \rho(1-\rho) \geq \frac{d-1}{d^2} \geq \frac{1}{2d}
\end{equation}
for $1 \leq k \leq d-1$ and $d \geq 2$, this gives
\begin{equation}
    n \leq 2hd/(C\log(hd/\delta)) \leq hd
\end{equation}
for $C$ large enough, and therefore $\log(nd/\delta) \leq 2\log(hd/\delta)$. The first inequality of Eq.~\ref{eqn:two-conditions} thus reduces, via $\sigma^2 = (n-1)\rho(1-\rho)/h \leq n\rho(1-\rho)/h$, to Eq.~\ref{eqn:condition-topk-capacity}; the second reduces to the hypothesis $h \geq C\log^2(hd/\delta)$. Both reductions only modify the universal constant by a fixed factor.
\end{proof}

\begin{proof}[Proof of Lemma~\ref{lem:uniform-noise-bound}]
The variance of $\Xi_{i,j}$ depends on the random supports (through the coefficients $z_{\ell,j}$), so we proceed in two steps: first control the variances uniformly via Bernstein's inequality and a union bound over the supports, then apply sub-Gaussian tail bounds conditionally on the supports.

\medskip
\noindent\textbf{Step 1: Uniform variance control.}
Without loss of generality we assume $\rho \leq 1/2$ (i.e., $k \leq d/2$); the complementary case follows by the symmetry of $\rho(1-\rho)$. Conditional on the supports, each $\xi_{\ell} = e_{\ell}^\top e_i$ is sub-Gaussian with variance proxy $1/h$ (a standard property of dot products of independent uniform unit-sphere vectors) and the coefficients $z_{\ell,j}$ are deterministic, so $\Xi_{i,j} = \sum_{\ell \neq i} z_{\ell,j} \xi_{\ell}$ is sub-Gaussian with variance proxy
\begin{equation}
    \sigma_{i,j}^2 = \frac{1}{h}\sum_{\ell \neq i} z_{\ell,j}^2.
\end{equation}
The values $z_{\ell,j} \in \{1-\rho, -\rho\}$ give
\begin{align}
    \sum_{\ell \neq i} z_{\ell,j}^2
    &= M_{i,j}(1-\rho)^2 + (n - 1 - M_{i,j})\rho^2 \nonumber\\
    &= M_{i,j}(1 - 2\rho) + (n-1)\rho^2,
\end{align}
where $M_{i,j} := |\{\ell \neq i \mid j \in S_{\ell}\}| \sim \operatorname{Binomial}(n-1, \rho)$, with mean $(n-1)\rho$ and variance at most $(n-1)\rho$. Letting $L := \log(2nd/\delta)$, Bernstein's inequality gives, with probability at least $1 - \delta/(2nd)$,
\begin{equation}
    M_{i,j} \leq (n-1)\rho + \sqrt{2(n-1)\rho L} + \tfrac{2}{3} L \leq 2(n-1)\rho + 2L,
\end{equation}
where the last inequality uses $\sqrt{2(n-1)\rho L} \leq (n-1)\rho + L/2$ (AM-GM). On this event, since $1 - 2\rho \geq 0$,
\begin{align}
    \sigma_{i,j}^2
    &\leq \frac{[2(n-1)\rho + 2L](1-2\rho) + (n-1)\rho^2}{h} \nonumber\\
    &\leq \frac{2(n-1)\rho(1-\rho) + 2L}{h}\\
    &= 2\sigma^2 + \frac{2L}{h}.
\end{align}
A union bound over the $nd$ pairs ensures this holds simultaneously for all $(i,j)$ with failure probability at most $\delta/2$.

\medskip
\noindent\textbf{Step 2: Sub-Gaussian tail bound.} Fix a pair $(i,j)$ and work on the event from Step~1, which is measurable with respect to the supports. Condition on the supports $\{S_\ell\}$ and on the single embedding~$e_i$; note that we cannot condition on all the embeddings, as that would leave no randomness in $\Xi_{i,j}$. Given these, $\Xi_{i,j} = \sum_{\ell\neq i} z_{\ell,j} \xi_{\ell}$ is a sum of independent sub-Gaussian terms with deterministic coefficients and variance proxy at most $2\sigma^2 + 2L/h$, since the $e_\ell$ with $\ell \neq i$ are independent of $e_i$. The sub-Gaussian tail bound thus gives
\begin{equation}
    \mathbb{P}\left(|\Xi_{i,j}| > t \;\Big|\; \{S_\ell\}, e_i\right) \leq 2 \exp\left(-\frac{c t^2}{2\sigma^2 + 2L/h}\right).
\end{equation}
The right-hand side does not depend on~$e_i$, so averaging over~$e_i$ removes that conditioning and the same bound holds given the supports alone. A union bound over the $nd$ pairs then gives
\begin{equation}
    \mathbb{P}\left(\max_{i,j}|\Xi_{i,j}| > t \;\Big|\; \{S_\ell\}\right) \leq 2nd \exp\left(-\frac{c t^2}{2\sigma^2 + 2L/h}\right).
\end{equation}
Since this right-hand side does not depend on $\{S_\ell\}$ either, the law of total probability removes the remaining conditioning, and combining with the $\delta/2$ failure probability from Step~1 gives
\begin{equation}
    \mathbb{P}\left(\max_{i,j}|\Xi_{i,j}| > t\right) \leq \frac{\delta}{2} + 2nd \exp\left(-\frac{c t^2}{2\sigma^2 + 2L/h}\right).
\end{equation}
Choosing $t = C \sqrt{\left(\sigma^2 + L/h\right) \log(nd/\delta)}$ with $C$ sufficiently large makes both terms at most $\delta/2$.
\end{proof}

\paragraph{Symmetry.} Since $\rho(1-\rho)$ is symmetric in $k$ and $d-k$, predicting which $k$ outputs are in the support is equally hard as predicting which $d-k$ are out of it.

\paragraph{Scaling.} Since $\rho(1-\rho) = k(d-k)/d^2$, Eq.~\ref{eqn:condition-topk-capacity} gives the following scalings. Throughout, $\lesssim$ hides the universal constant $C$, and $\log(hd/\delta)$ is the single logarithmic factor of Eq.~\ref{eqn:condition-topk-capacity}, which at fixed confidence level $\delta$ is a logarithm of the number of parameters.
\begin{itemize}
\item \textbf{$k = 1$ (deterministic):} $\rho(1-\rho) = (d-1)/d^2 \approx 1/d$, so $n \lesssim hd/\log(hd/\delta)$ and we recover, up to the logarithmic factor, the $hd$ scaling (or $d^2$ when $h = d$) of \citet{nichani_understanding_2025}.
\item \textbf{$k \ll d$ (sparse support):} $\rho(1-\rho) \approx k/d$, so $n \lesssim hd/(k \log(hd/\delta))$. Storing a distribution with support size $k$ costs $k$ times more capacity than a deterministic association; \emph{this is the capacity-based curse of ambiguity}.
\item \textbf{$k = d/2$ (maximal ambiguity):} $\rho(1-\rho) = 1/4$, so capacity reaches its minimum $n \lesssim h/\log(hd/\delta)$: the logarithmic factor is the only remaining dependence on the vocabulary size.
\item \textbf{$k$ close to $d$:} by the symmetry above, $k = d - 1$ brings the bound back to $n \lesssim hd/\log(hd/\delta)$, and more generally $n \lesssim hd/((d-k)\log(hd/\delta))$ for $d - k \ll d$. The limiting case $k = d$ is excluded from the theorem and is degenerate: there is a single possible support, so all distributions coincide and nothing has to be stored.
\end{itemize}

\subsection{Formal statement and proof for \texorpdfstring{$\varepsilon$}{epsilon}-capacity}

We now turn to the $\varepsilon$-capacity, which requires not just correct support identification but accurate approximation of the uniform distribution on the support.

When $k=1$ the target is deterministic and correct support identification is already enough to drive the KL to zero: once the top-$1$ prediction is correct, sending $\beta \to \infty$ pushes the predicted probability of the supported token to $1$. The interesting regime is therefore $k \ge 2$, where the challenge is no longer merely to separate the support from its complement: one must also make the logits nearly equal within the support.

\begin{theorem}[$\varepsilon$-capacity lower bound]
    \label{thm:eps-capacity}
    There exists a universal constant $C > 0$ such that the following holds. For any $\varepsilon \in (0,1)$ and $\delta \in (0,1)$, if $2 \le k \le d/2$, $h \geq \frac{C \log^2(d/(k\varepsilon)) \log^2(hd/\delta)}{\varepsilon}$, and
    \begin{equation}
        \label{eqn:condition-eps-capacity}
        n \leq \frac{h\,d\,\varepsilon}{C\,k\,\log^2\!\bigl(\frac{d}{k\varepsilon}\bigr)\,\log\!\bigl(\frac{hd}{\delta}\bigr)}
    \end{equation}
    then with probability at least $1 - \delta$ over the draw of embeddings and supports, every context is accurately modeled:
    \begin{equation}
        \max_{i \in [n]} \mathrm{KL}(p_i \| \hat{p}_i) \le \varepsilon,
    \end{equation}
    that is, $n_\varepsilon \geq n$.
\end{theorem}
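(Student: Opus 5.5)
We reuse Lemma~\ref{lem:uniform-noise-bound} and turn a uniform bound on the noise into a bound on the KL divergence, now optimizing the free scale $\beta$. Write $\ell_i = \beta(z_i + \Xi_i)$ and let $M := \max_{i,j}|\Xi_{i,j}|$. On-support logits then lie in $[\beta(1-\rho-M), \beta(1-\rho+M)]$, and off-support logits lie in $[\beta(-\rho-M), \beta(-\rho+M)]$. Since $p_i$ is uniform on $S_i$,
\begin{equation}
    \mathrm{KL}(p_i\|\hat p_i) = -\log k - \frac1k\sum_{j\in S_i}\ell_{i,j} + \log\sum_{j'}e^{\ell_{i,j'}}.
\end{equation}
We bound $\log\sum_{j'} e^{\ell_{i,j'}}$ from above by $\log\bigl(\sum_{j\in S_i}e^{\ell_{i,j}}\bigr) + \log\bigl(1 + (d-k)e^{-\beta(1-2M)}/k\bigr)$, using the worst case of the on/off gap. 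Then log-sum-exp minus the mean over $k$ values spread in an interval of width $2\beta M$ is at most $\log k + 2\beta M$. This gives the deterministic estimate
\begin{equation}
    \mathrm{KL}(p_i\|\hat p_i) \le 2\beta M + \frac{d}{k}\,e^{-\beta(1-2M)}.
\end{equation}
A finer second-order version replaces $2\beta M$ by something like $\beta^2 M^2$, using that log-sum-exp minus the mean is at most half the variance times $\beta^2$. This matters for the final rates, because it is what produces $\varepsilon$ rather than $\varepsilon^2$ in the bound on $n$.

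Next we choose $\beta$. Assuming $M \le 1/4$, take $\beta = 2\log(2d/(k\varepsilon))$. The leakage term is then at most $\varepsilon/2$, and we need the within-support term to be at most $\varepsilon/2$. With the quadratic bound, this requires $\beta^2 M^2 \lesssim \varepsilon$, that is, $M^2 \lesssim \varepsilon/\log^2(d/(k\varepsilon))$. Lemma~\ref{lem:uniform-noise-bound} gives $M^2 \lesssim (\sigma^2 + \log(nd/\delta)/h)\log(nd/\delta)$ with probability $1-\delta$. Splitting the two summands as in the proof of Theorem~\ref{thm:topk-capacity} yields two conditions. The first is $\sigma^2\log(nd/\delta) \lesssim \varepsilon/\log^2(d/(k\varepsilon))$; using $\sigma^2 \le n\rho/h = nk/(hd)$, this is exactly Eq.~\ref{eqn:condition-eps-capacity}. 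The second is $\log^2(nd/\delta)/h \lesssim \varepsilon/\log^2(d/(k\varepsilon))$, which is the hypothesis on $h$. The condition $M \le 1/4$ follows automatically since $\varepsilon<1$. As before, $n \le hd$ gives $\log(nd/\delta) \le 2\log(hd/\delta)$, and constants are absorbed into $C$. The restriction $k \le d/2$ is used both for $\rho(1-\rho) \asymp k/d$ and for the proof of the lemma.

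\textbf{Main obstacle.} The delicate step is the within-support term. A crude bound linear in $\beta M$ would force $M \lesssim \varepsilon/\log(d/(k\varepsilon))$, which gives $\varepsilon^2$ in the capacity and does not match the statement. We therefore need the quadratic estimate
\begin{equation}
    \log\frac1k\sum_{j\in S}e^{x_j} - \bar x \le \tfrac12\max_j (x_j - \bar x)^2,
\end{equation}
or a similar bound of the form $e^{2\beta M}(\beta M)^2$ valid for $\beta M \le 1$. Combined with the leakage term, this controls the KL divergence correctly. If the constant in that quadratic bound is unwieldy, we would instead use the second-order Taylor expansion of KL around the exact uniform logits. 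The Hessian there is $\mathrm{diag}(p)-pp^\top$, giving a bound of order $\beta^2\,\mathrm{Var}_{j\in S}(\Xi_{i,j})$. This is valid once $\beta M \le 1$, which the conditions guarantee for $C$ large.
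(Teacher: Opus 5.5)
Your proposal is correct and follows essentially the paper's argument: the same split of the KL into an on-support spread term and an off-support leakage term (the paper packages this as Lemma~\ref{lem:KL-control}, with bound $\mathrm{e}\,\nu^2 + \frac{d-k}{k}e^{-\mu}$), a temperature $\beta \asymp \log(d/(k\varepsilon))$, and the uniform noise bound of Lemma~\ref{lem:uniform-noise-bound}, split into the condition on $n$ and the condition on $h$. The quadratic inequality you flag as the main obstacle is exactly Hoeffding's lemma applied to $x_J - \bar x$ with $J$ uniform on $S$; it even gives $\nu^2/8$ without needing $\beta M \le 1$. By contrast, the informal ``half the variance'' version you mention first is false for general bounded variables, so state the max-deviation form.
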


The proof of Theorem~\ref{thm:eps-capacity} relies on the following deterministic bound on the per-context KL in terms of two elementary geometric quantities of the logit vector: its on-support variation and its margin. We state it once here, in a form that will also be invoked in the embedding-dimension appendix (Appendix~\ref{subsec:embedding-robust}). We prove it after the theorem.

\begin{lemma}[KL control via margin and variation]
    \label{lem:KL-control}
    Let $S \subseteq [d]$ have size~$k$ and $\ell \in \mathbb{R}^d$ be a logit vector. Denote the \emph{on-support variation} and \emph{margin} of $\ell$ at~$S$ by
    \begin{align}
        \label{eqn:nu-def}
        \nu(S, \ell) &:= \max_{j, j' \in S} \bigl|\ell_j - \ell_{j'}\bigr|,\\
        \label{eqn:mu-def}
        \mu(S, \ell) &:= \tfrac{1}{k}\textstyle\sum_{j \in S}\ell_j \;-\; \max_{j' \notin S}\ell_{j'}.
    \end{align}
    If $\nu(S, \ell) \leq 1$, then
    \begin{equation}
        \label{eqn:kl-quadratic-bound}
        D_{\mathrm{KL}}\!\left(p_S \,\middle\|\, \mathrm{softmax}(\ell)\right)
        \;\leq\; \mathrm{e}\,\nu(S, \ell)^2 \;+\; \frac{d-k}{k}\,e^{-\mu(S, \ell)}.
    \end{equation}
\end{lemma}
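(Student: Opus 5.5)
The plan is to write the KL divergence explicitly and split the log-partition function into on-support and off-support contributions. Let $\bar\ell := \frac1k\sum_{j\in S}\ell_j$ and $\delta_j := \ell_j - \bar\ell$ for $j \in S$, so that $\sum_{j\in S}\delta_j = 0$ and $|\delta_j| \le \nu := \nu(S,\ell)$. Since $p_S$ is uniform on $S$,
\begin{equation*}
D_{\mathrm{KL}}(p_S\,\|\,\mathrm{softmax}(\ell)) = -\log k - \tfrac1k\textstyle\sum_{j\in S}\ell_j + \log\textstyle\sum_{j'}e^{\ell_{j'}} = \log\Bigl(\tfrac1k\textstyle\sum_{j\in S}e^{\delta_j} + \tfrac1k\textstyle\sum_{j'\notin S}e^{\ell_{j'}-\bar\ell}\Bigr).
\end{equation*}
I would then bound the two sums inside the logarithm separately and finish with $\log(1+x)\le x$.

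For the off-support sum, each $j'\notin S$ satisfies $\ell_{j'}-\bar\ell \le -\mu$ by the definition of the margin $\mu := \mu(S,\ell)$. There are $d-k$ such terms, so
\begin{equation*}
\tfrac1k\textstyle\sum_{j'\notin S}e^{\ell_{j'}-\bar\ell}\le \tfrac{d-k}{k}e^{-\mu}.
\end{equation*}

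For the on-support sum, I would use Taylor's theorem with the Lagrange remainder: $e^{x} = 1 + x + \tfrac{x^2}{2}e^{\theta}$ with $|\theta|\le|x|$. For $|x|\le 1$ this gives $e^x \le 1 + x + \tfrac{e}{2}x^2$. Averaging over $j\in S$, the linear terms cancel because $\sum_{j\in S}\delta_j=0$, leaving
\begin{equation*}
\tfrac1k\textstyle\sum_{j\in S}e^{\delta_j}\le 1+\tfrac{e}{2}\nu^2,
\end{equation*}
which uses the hypothesis $\nu\le1$ through $|\delta_j|\le\nu$.

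Combining the two bounds, the argument of the logarithm is at most $1 + \tfrac{e}{2}\nu^2 + \tfrac{d-k}{k}e^{-\mu}$, so $\log(1+x)\le x$ yields
\begin{equation*}
D_{\mathrm{KL}}(p_S\,\|\,\mathrm{softmax}(\ell))\le \tfrac{e}{2}\nu^2+\tfrac{d-k}{k}e^{-\mu}.
\end{equation*}
This is stronger than the claimed bound by a factor of two on the first term.

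The only delicate step is the on-support term. A crude bound $e^{\delta_j}\le e^{\nu}$ would give a linear dependence on $\nu$, not the quadratic one the statement claims. The quadratic rate depends on centering at the on-support mean $\bar\ell$, which makes the first-order term vanish, and on the assumption $\nu\le1$, which keeps the second-order remainder controlled by the constant $e$.
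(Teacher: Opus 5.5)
Your proof is correct and follows the paper's argument essentially step for step. You center the on-support logits at their mean, write the KL as the logarithm of an on-support average plus an off-support average, bound the off-support part by $\frac{d-k}{k}e^{-\mu(S,\ell)}$ and the on-support part by a second-order estimate whose linear term cancels, and finish with $\log(1+x)\le x$. The only difference is that your Lagrange-remainder bound $e^x \le 1 + x + \frac{\mathrm{e}}{2}x^2$ for $|x|\le 1$ is sharper than the paper's $e^u \le 1 + u + \mathrm{e}\,u^2$, so you get the constant $\mathrm{e}/2$, which implies the stated bound.
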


\begin{proof}[Proof of Theorem~\ref{thm:eps-capacity}]
We proceed in four steps:
\begin{enumerate}
    \item Fix $\beta$ and compute the per-context variation and margin.
    \item Apply the noise bound.
    \item Apply the KL-control lemma.
    \item Group everything together and conclude.
\end{enumerate}

\medskip
\noindent\textbf{Step 1: Fix $\beta$ and compute variation and margin.} Set
\begin{equation}
    \beta = \log\frac{8(d-k)}{k\varepsilon}.
\end{equation}
This is a legitimate inverse temperature: $k \leq d/2$ gives $d - k \geq k$ and $\varepsilon < 1$, so $\beta \geq \log(8/\varepsilon) \geq \log 8 > 0$.
Fix a context $i$ and drop the $i$ subscript for readability. With $z_j = 1 - \rho$ for $j \in S = S_i$ and $z_j = -\rho$ otherwise, the logits $\ell_j = \beta(z_j + \Xi_j)$ have on-support variation and margin
\begin{align}
    \nu(S, \ell) &= \beta\max_{j, j' \in S}|\Xi_j - \Xi_{j'}| \;\leq\; 2\beta M,\\
    \mu(S, \ell) &= \beta + \beta\bigl(\tfrac{1}{k}\textstyle\sum_{j \in S}\Xi_j - \max_{j' \notin S}\Xi_{j'}\bigr) \;\geq\; \beta(1 - 2M),
\end{align}
where $M := \max_{i\in[n], j\in[d]}|\Xi_{i,j}|$ is the uniform noise magnitude.

\medskip
\noindent\textbf{Step 2: Apply the noise bound.} By Lemma~\ref{lem:uniform-noise-bound}, with probability at least $1 - \delta$,
\begin{equation}
    M \leq C \sqrt{\left(\sigma^2 + \frac{\log(nd/\delta)}{h}\right) \log(nd/\delta)}.
\end{equation}
The assumption $\nu(S, \ell) \leq 1$ required by Lemma~\ref{lem:KL-control} is implied by $2\beta M \leq 1$, that is, by $M \leq 1/(2\beta)$. Squaring this requirement and inserting the bound on $M$ above, it is enough that
\begin{equation}
    \label{eqn:sigma-beta-condition}
    \left(\sigma^2 + \frac{\log(nd/\delta)}{h}\right) \log(nd/\delta) \leq \frac{1}{C\beta^2},
\end{equation}
the $\beta^{-2}$ coming from squaring $M \leq 1/(2\beta)$ and the universal constant $C$ absorbing the resulting numerical factor.
Since $\sigma^2 \asymp n\rho(1-\rho)/h$, the $\sigma^2$ part of Eq.~\ref{eqn:sigma-beta-condition} is implied by Eq.~\ref{eqn:condition-eps-capacity} and the $\log(nd/\delta)/h$ part by the assumption on $h$, both after adjusting the universal constant. Under Eq.~\ref{eqn:condition-eps-capacity} we also have $n \leq hd$, so $\log(nd/\delta) \leq C\log(hd/\delta)$.

\medskip
\noindent\textbf{Step 3: Apply the KL-control lemma.} On the event from Step~2, Lemma~\ref{lem:KL-control} combined with Step~1 gives, for every context~$i$,
\begin{equation}
    \label{eqn:bound-KL}
    \mathrm{KL}(p_i\|\hat p_i)
    \;\leq\; \mathrm{e}\,(2\beta M)^2 \;+\; \frac{d-k}{k}\,e^{-\beta(1 - 2M)}
    \;=\; 4\mathrm{e}\,\beta^2 M^2 \;+\; \frac{d-k}{k}\,e^{-\beta}\,e^{2\beta M}.
\end{equation}

\medskip
\noindent\textbf{Step 4: Conclude.} By the choice of~$\beta$ we have $\frac{d-k}{k}e^{-\beta} = \frac{\varepsilon}{8}$, and Eq.~\ref{eqn:sigma-beta-condition} gives $2\beta M \leq 1$, hence $e^{2\beta M} \leq \mathrm{e}$. The second term of Eq.~\ref{eqn:bound-KL} is therefore at most $\frac{\mathrm{e}}{8}\varepsilon \leq \frac{\varepsilon}{2}$, and since Eq.~\ref{eqn:bound-KL} holds simultaneously for all contexts on the event of Step~2,
\begin{equation}
    \max_{i \in [n]} \mathrm{KL}(p_i\|\hat p_i)
    \;\leq\; 4\mathrm{e}\,\beta^2 M^2 \;+\; \frac{\varepsilon}{2}.
\end{equation}
It remains to make the first term at most $\varepsilon/2$ as well. By the noise bound from Step~2, $M^2 \leq C \left(\sigma^2 + \frac{\log(hd/\delta)}{h}\right) \log(hd/\delta)$, so it suffices that
\begin{equation}
    \label{eqn:noise-variance-condition}
    \beta^2 \sigma^2 \log\!\left(\frac{hd}{\delta}\right) \leq \frac{\varepsilon}{C} \quad \text{and} \quad \beta^2 \frac{\log^2(hd/\delta)}{h} \leq \frac{\varepsilon}{C}
\end{equation}
for a large enough universal constant~$C$. Using $\sigma^2 \asymp n\rho(1-\rho)/h$ and $\beta^2 = O(\log^2(d/(k\varepsilon)))$, the first part of Eq.~\ref{eqn:noise-variance-condition} becomes exactly Eq.~\ref{eqn:condition-eps-capacity}, and the second part is implied by the assumption on $h$. The two halves then add up to $\varepsilon$.
\end{proof}

\begin{proof}[Proof of Lemma~\ref{lem:KL-control}]
Write $\bar\ell := \tfrac{1}{k}\sum_{j \in S} \ell_j$ and $\phi_j := \ell_j - \bar\ell$ for $j \in S$. By definition, $\sum_{j \in S} \phi_j = 0$, $|\phi_j| \leq \nu(S, \ell)$, and $\bar\ell - \max_{j' \notin S}\ell_{j'} = \mu(S, \ell) =: \Delta$. Since $p_S$ is uniform on~$S$,
\begin{align}
    D_{\mathrm{KL}}\!\left(p_S \,\middle\|\, \mathrm{softmax}(\ell)\right)
    &= -\bar\ell + \log\!\left(\sum_{j' \in [d]} e^{\ell_{j'}}\right) - \log k \\
    &= \log\!\left(\tfrac{1}{k}\sum_{j \in S} e^{\phi_j} \;+\; \tfrac{1}{k}\sum_{j' \notin S} e^{\ell_{j'} - \bar\ell}\right).
\end{align}
For the on-support sum, use the inequality $e^u \leq 1 + u + \mathrm{e}\,u^2$ valid for $|u| \leq 1$ together with $\sum_{j \in S}\phi_j = 0$:
\begin{equation}
    \tfrac{1}{k}\sum_{j \in S} e^{\phi_j} \;\leq\; 1 + \tfrac{\mathrm{e}}{k}\sum_{j \in S} \phi_j^2 \;\leq\; 1 + \mathrm{e}\,\nu(S, \ell)^2.
\end{equation}
For the off-support sum, bound each term by $e^{-\Delta}$:
\begin{equation}
    \tfrac{1}{k}\sum_{j' \notin S} e^{\ell_{j'} - \bar\ell} \;\leq\; \frac{d - k}{k}\,e^{-\Delta}.
\end{equation}
Combining and using $\log(1 + x) \leq x$ yields Eq.~\ref{eqn:kl-quadratic-bound}.
\end{proof}

\subsection{Matching upper bound via sign-pattern counting}
\label{subsec:capacity-upper-bound}

Theorem~\ref{thm:topk-capacity} shows that the Hebbian construction stores at least roughly $h/\rho(1-\rho)$ patterns, with $\rho = k/d$.
We now establish a matching upper bound that applies to any linear-softmax model: with high probability over the draw of embeddings and supports, no weight matrix $W \in \mathbb{R}^{d \times h}$ can store substantially more patterns than this.
Together, the two bounds pin down the top-$k$ capacity up to logarithmic factors:
\begin{equation}
    \label{eqn:topk-capacity-tight}
    \mathrm{capacity} \;\asymp\; \frac{h}{\rho(1-\rho)}.
\end{equation}
More formally, the theorem we prove in this section is the following upper bound.
\begin{theorem}[Top-$k$ capacity upper bound]
\label{thm:topk-capacity-upper}
There exists a universal constant $C > 0$ such that, for any $\delta \in (0,1)$ and any $k < d$, if
\begin{equation}
    \label{eqn:condition-topk-capacity-upper}
    n \geq \frac{C \left( hd \log(hd) + \log(1/\delta) \right)}{\log\binom{d}{k}},
\end{equation}
then with probability at least $1 - \delta$ over the embeddings $\{e_i\}$ and supports $\{S_i\}$, no weight matrix $W \in \mathbb{R}^{d \times h}$ achieves $\mathrm{top}_k(W e_i) = S_i$ for all $i \in [n]$.
\end{theorem}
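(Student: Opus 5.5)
Conditioning on the embeddings $\{e_i\}$, we count how many distinct support assignments $(\mathrm{top}_k(We_1),\dots,\mathrm{top}_k(We_n))$ any $W \in \mathbb{R}^{d\times h}$ can produce. We then compare that count with the $\binom{d}{k}^n$ equally likely draws of the supports. The supports are independent of the embeddings and uniform, so if at most $N$ assignments are realizable, the probability that the drawn $(S_1,\dots,S_n)$ is one of them is at most $N/\binom{d}{k}^n$. It therefore suffices to show $\log N \lesssim hd\log(hd) + n\,\mathrm{polylog}$ terms that are dominated once Eq.~\ref{eqn:condition-topk-capacity-upper} holds.

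The key step is to bound $N$ by a sign-pattern count. The top-$k$ set of $We_i$ is determined by the signs of the pairwise differences $(w_a - w_b)^\top e_i$ over all pairs $a<b$ in $[d]$, where $w_a$ is the $a$-th row of $W$; ties can be ignored, or broken by perturbation, since storage requires a strict separation. Each such difference is a linear form in the $hd$ entries of $W$. The assignment is thus a function of the sign vector of $m = n\binom{d}{2}$ linear (degree-1) polynomials in $D = hd$ variables. Warren's theorem, or the simpler hyperplane-arrangement bound for linear forms through the origin, gives that the number of realizable sign patterns is at most $(4e m/D)^D$, or $\sum_{i\le D}\binom{m}{i}\le (em/D)^D$ in the linear case. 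Hence
\begin{equation*}
  \log N \le hd \log\!\Bigl(\frac{e\,n d^2}{hd}\Bigr) \lesssim hd\,\log(n d).
\end{equation*}
The only subtlety is that this bound depends on $n$ logarithmically.

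Next we combine. The failure probability is at most $\exp\bigl(hd\log(end/h) - n\log\binom{d}{k}\bigr)$, and we want this to be at most $\delta$. We need $n\log\binom{d}{k} \ge hd\log(end/h)+\log(1/\delta)$. The main obstacle is the circular $\log n$ on the right-hand side. To resolve it, first observe that we may restrict to $n$ equal to the threshold $n_0 = \lceil C(hd\log(hd)+\log(1/\delta))/\log\binom{d}{k}\rceil$: failure to store $n_0$ patterns implies failure for every larger $n$, since the first $n_0$ contexts are already unstorable. At $n = n_0$ we have $\log n_0 \lesssim \log(hd) + \log\log(1/\delta)$, using $\log\binom{d}{k}\ge \log d \ge \log 2$. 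So $hd\log(e n_0 d/h) \le c\,hd\log(hd) + hd\log\log(1/\delta)$, and the last term is at most $hd\log(hd)+\log(1/\delta)$ up to constants. For large $C$ this gives $n_0\log\binom{d}{k}\ge 2(hd\log(hd)+\log(1/\delta))$, which exceeds the needed bound.

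Finally, we take expectations over the embeddings: the bound holds conditionally for every realization, so it holds unconditionally. As a remark, $\log\binom{d}{k}\asymp k\log(d/k)$ for $k\le d/2$, so the bound reads $n\gtrsim hd\log(hd)/(k\log(d/k))$, matching Theorem~\ref{thm:topk-capacity} up to logarithmic factors.
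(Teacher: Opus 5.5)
Your proposal is correct and follows the paper's proof. You condition on the embeddings, bound the number of realizable top-$k$ labelings by the Warren/hyperplane-arrangement count for the $n\binom{d}{2}$ linear forms $(w_a - w_b)^\top e_i$ on $\mathbb{R}^{hd}$ (with ties handled through openness of the storage condition), and compare this with the $\binom{d}{k}^n$ equally likely target tuples. The only difference is how you remove the circular $hd\log n$ term: the paper applies the tangent-line bound $\log n \le \alpha n + \log(1/\alpha)$ with $\alpha = \log\binom{d}{k}/(2hd)$ for every admissible $n$, whereas you check the inequality only at the threshold $n_0$ and extend to larger $n$ because the failure events are nested. This is equally valid, but make explicit that the constant in your bound on $\log n_0$ grows like $\log C$, so that taking $C$ large does win.
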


\paragraph{High-level idea.}
The proof of the theorem relies on a counting argument. We compare two quantities:
\begin{itemize}
    \item[--] The number of possible target labelings $(S_1, \dots, S_n)$, which is $\binom{d}{k}^n$ since each $S_i$ is drawn uniformly from the size-$k$ subsets of~$[d]$.
    \item[--] The number of realizable labelings, that is, the number of distinct tuples $(\mathrm{top}_k(W e_1), \dots, \mathrm{top}_k(W e_n))$ that can be produced as $W$ ranges over $\mathbb{R}^{d \times h}$.
\end{itemize}
The map $W \mapsto \mathrm{top}_k(W e_i)$ depends on $W$ only through the signs of a few linear functions of $W$, so a classical sign-pattern bound, due to \citet{warren_lower_1968}, controls the number of realizable labelings by the number of free parameters in~$W$.
Once $hd$ is too small for the realizable labelings to cover all the possible target labelings, with high probability the random targets fall outside the realizable set and no $W$ can fit them.

\paragraph{Hyperplane arrangements and regions.}
Given $m$ affine functions $f_1, \dots, f_m$ on $\mathbb{R}^P$, the corresponding hyperplane arrangement $\mathcal A$ is the union of their zero sets, and the connected components of the complement $\mathbb{R}^P \setminus \mathcal A$ are called the open regions of $\mathcal A$.
In $\mathbb{R}^2$, for instance, three lines in general position partition the plane into seven such regions.
Each region corresponds to one of the sign patterns $(\mathrm{sign}\,f_1(x), \dots, \mathrm{sign}\,f_m(x)) \in \{+,-\}^m$ that can be realized as $x$ varies in $\mathbb{R}^P$: two points lie in the same region if and only if they yield the same sign pattern.
The total number of sign patterns is at most $2^m$, but for $m$ much larger than~$P$ the number of realizable patterns is dramatically smaller because the ambient space $\mathbb{R}^P$ is too low-dimensional to interleave sign changes in too many ways.
The intuition is already visible in $\mathbb{R}^1$: $m$ points cut the line into only $m + 1$ intervals, that is, $m + 1$ realizable sign patterns out of the $2^m$ a priori options, because once a real number lies to the left of all $m$ points its sign pattern is fully determined and moving rightward can only flip one sign at a time.
Warren's theorem makes this precise in arbitrary dimensions.

\begin{lemma}[Region count, linear case of \citet{warren_lower_1968}]
\label{lem:warren}
Let $f_1, \dots, f_m$ be affine functions on $\mathbb{R}^P$, with $m \geq P$.
The number of open regions of the corresponding hyperplane arrangement is at most $(2\mathrm{e}m/P)^P$, where $\mathrm{e}$ is Euler's number.
In particular, the number of distinct sign patterns realizable by $(f_1(x), \dots, f_m(x))$ as $x$ varies in $\mathbb{R}^P$ is also at most $(2\mathrm{e}m/P)^P$.
\end{lemma}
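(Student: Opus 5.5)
The plan is to prove the region bound by the classical induction on the number of hyperplanes and the dimension (Zaslavsky/Buck style), then convert it to the closed form with a standard binomial-sum estimate. Let $r(m,P)$ denote the maximum number of open regions of an arrangement of $m$ affine hyperplanes in $\mathbb{R}^P$. Affine functions that vanish identically have empty zero sets and can be discarded. Constant nonzero ones can also be discarded. The remaining zero sets are genuine affine hyperplanes.

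\emph{Step 1 (recursion).} Adding the hyperplane $H_m=\{f_m=0\}$ to an arrangement of $m-1$ hyperplanes increases the region count by exactly the number of existing regions that $H_m$ splits in two. Each such region meets $H_m$ in a nonempty open convex piece. These pieces are precisely the regions of the arrangement induced on $H_m\cong\mathbb{R}^{P-1}$ by the intersections $H_j\cap H_m$, for $j<m$. Some of these intersections are empty (parallel hyperplanes) or all of $H_m$ (repeated hyperplanes); these only help. Hence
\[
r(m,P)\le r(m-1,P)+r(m-1,P-1).
\]
The base cases are $r(0,P)=1$ and $r(m,0)=1$.

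\emph{Step 2 (closed form).} By induction, using Pascal's rule,
\[
r(m,P)\le \sum_{i=0}^{P}\binom{m}{i}.
\]

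\emph{Step 3 (estimate).} For $m\ge P$, the standard Chernoff-type trick gives
\[
\sum_{i\le P}\binom{m}{i}\le \Big(\frac{m}{P}\Big)^P\sum_{i\le P}\binom{m}{i}\Big(\frac{P}{m}\Big)^i\le \Big(\frac{m}{P}\Big)^P\Big(1+\frac{P}{m}\Big)^m\le \Big(\frac{\mathrm{e}m}{P}\Big)^P.
\]
This is at most $(2\mathrm{e}m/P)^P$. The factor 2 is slack, matching Warren's general polynomial version.

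\emph{Step 4 (sign patterns).} Each realizable sign pattern in $\{+,-\}^m$, meaning no $f_j$ vanishes at $x$, is constant on the connected open region containing $x$. Distinct patterns therefore come from distinct regions, so the pattern count is bounded by the region count.

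The only delicate step is Step 1. One must justify that the number of regions split by $H_m$ equals the number of regions of the induced arrangement on $H_m$, and handle degenerate, parallel, or coincident hyperplanes. This is done by noting that convexity makes each old region's intersection with $H_m$ connected. It is also convex and open in $H_m$, and it is exactly a cell of the induced arrangement. Coincident or parallel cases only reduce the count, so the inequality direction is preserved.
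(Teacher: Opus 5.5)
Your argument is correct, and it takes a different route from the paper, which does not prove the lemma. The paper simply invokes Warren's sign-pattern theorem for polynomials, specialized to degree one, and notes in passing that the bound also follows from Cover's counting formula. What you have written is a derivation of that counting formula: the deletion--restriction recursion $r(m,P)\le r(m-1,P)+r(m-1,P-1)$, then the standard estimate $\sum_{i\le P}\binom{m}{i}\le(\mathrm{e}m/P)^P$.

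Warren's theorem is the heavier tool. It covers nonlinear polynomials, whose sign cells need not be convex or even connected. So it rests on real-algebraic bounds on connected components rather than on a simple recursion, and it carries looser constants. Your route uses what is special about the affine case: every region is a nonempty sign cell, hence open and convex, so its trace on $H_m$ is exactly one cell of the induced arrangement. This gives a sharper bound, $(\mathrm{e}m/P)^P$ instead of $(2\mathrm{e}m/P)^P$, with no general-position assumption, which is all Theorem~\ref{thm:topk-capacity-upper} needs. Your Step 4 also matches the paper's convention that sign patterns live in $\{+,-\}^m$, so the region count bounds the pattern count directly.

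One slip to fix: an affine function that vanishes identically has zero set all of $\mathbb{R}^P$, not the empty set; it is the nonzero constants whose zero sets are empty. Discarding it is still legitimate for an upper bound, because it makes the complement empty and the true count zero, but the reason you give is reversed.

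To make the ``these only help'' remarks airtight, define $r(m,P)$ as the maximum over at most $m$ possibly degenerate affine functions. Then $r$ is nondecreasing in $m$, which covers three cases at once:
\begin{itemize}
\item induced arrangements in which some $H_j\cap H_m$ is empty;
\item induced arrangements in which some $H_j\cap H_m$ is all of $H_m$;
\item the case where fewer than $P$ genuine hyperplanes remain after discarding constants, handled by monotonicity of $\sum_{i\le P}\binom{m}{i}$ in $m$.
\end{itemize}
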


This is the linear specialization of Warren's bound on sign patterns of polynomials; it is also a consequence of \citet{cover_geometrical_1965}'s counting formula. 
The bound holds uniformly in the choice of the $f_j$: it does not require the hyperplanes to be in general position.
We can now turn to the proof of our upper bound on capacity.

\begin{proof}[Proof of Theorem~\ref{thm:topk-capacity-upper}]
The proof proceeds in three steps.
\begin{enumerate}
    \item We first express the labeling $(\mathrm{top}_k(We_1), \dots, \mathrm{top}_k(We_n))$ as a sign pattern in $W$.
    \item We then apply Warren's lemma to bound the number of distinct realizable labelings.
    \item Finally, we compare the number of realizable labelings to the number of possible random target labelings.
\end{enumerate}

\noindent\textbf{Step 1: Realized labelings are determined by sign patterns in $W$.}
Writing $w_j \in \mathbb{R}^h$ for the $j$-th row of $W$, the $j$-th output logit for context $i$ is $\ell_{ij} = w_j^\top e_i$.
The set $\mathrm{top}_k(W e_i) \subseteq [d]$ is the set of $j$ for which $\ell_{ij}$ is among the $k$ largest entries of $(\ell_{i1}, \dots, \ell_{id})$, and is in particular determined by the comparisons $\ell_{ij} > \ell_{ij'}$ across the $\binom{d}{2}$ unordered pairs $\{j, j'\}$.
Note that this fixes the full sort of the logits, of which the top-$k$ set is a coarsening: distinct sign patterns of these comparisons may yield the same top-$k$ set, but this overcount can only inflate the count of realizable labelings, so an upper bound on sign patterns gives an upper bound on labelings.
Each such comparison is the sign of the affine function
\begin{equation}
    f_{i,j,j'}(W) := \ell_{ij} - \ell_{ij'} = (w_j - w_{j'})^\top e_i, \qquad i \in [n], \ 1 \leq j < j' \leq d,
\end{equation}
which is linear in $W$, viewed as a vector in $\mathbb{R}^{hd}$, with the embedding $e_i$ playing the role of fixed coefficient. The realized labeling
\begin{equation}
    \mathcal{L}(W) := (\mathrm{top}_k(W e_i))_{i \in [n]}
\end{equation}
is therefore a function of $W$ only through the sign pattern of the $m := \binom{d}{2} n$ affine functions $\{f_{i,j,j'}\}_{i, j<j'}$ on $\mathbb{R}^{hd}$, in the sense that two weight matrices $W, W'$ that yield the same sign pattern also yield the same labeling.
Ties, where some $f_{i,j,j'}(W) = 0$ and the sign pattern is undefined, are harmless: storing a given labeling requires the strict inequalities $\min_{j \in S_i} \ell_{ij} > \max_{j' \notin S_i} \ell_{ij'}$, so the set of $W$ that store it is open, and if nonempty it has positive Lebesgue measure and therefore contains a point off the measure-zero union of the hyperplanes $\{f_{i,j,j'} = 0\}$; it thus suffices to count the labelings realized by tie-free~$W$, which is what Warren's lemma does.

\medskip
\noindent\textbf{Step 2: There are not too many realizable labelings.} By Step~1, the number of distinct labelings realizable by some $W \in \mathbb{R}^{d \times h}$ is at most the number of sign patterns of the $m = \binom{d}{2} n$ affine functions $\{f_{i,j,j'}\}_{i, j<j'}$ on $\mathbb{R}^{hd}$. Applying Lemma~\ref{lem:warren} with $P = hd$ and $m = \binom{d}{2} n$ gives, for some universal $C_0 > 0$,
\begin{equation}
    \label{eqn:counting-realizable}
    |\{\mathcal{L}(W) : W \in \mathbb{R}^{d \times h}\}| \leq \left(\frac{C_0 n d^2}{hd}\right)^{hd} = \left(\frac{C_0 n d}{h}\right)^{hd},
\end{equation}
provided $\binom{d}{2} n \geq hd$, which holds in the regime of interest as $n \geq 2h/(d-1)$. Crucially, the right-hand side does not depend on the specific embeddings $\{e_i\}$: Lemma~\ref{lem:warren} counts regions of any arrangement of $m$ hyperplanes in $\mathbb{R}^{hd}$, regardless of the orientation of those hyperplanes, hence regardless of how the embeddings happen to be drawn.

\medskip
\noindent\textbf{Step 3: Random targets typically miss the realizable set.} Condition on the embeddings $\{e_i\}$ and view the targets $S_1, \dots, S_n$ as drawn independently and uniformly from $\binom{[d]}{k}$. Some weight matrix $W$ stores all $n$ patterns if and only if the random target tuple $(S_1, \dots, S_n)$ lies in the realizable set $\{\mathcal{L}(W) : W \in \mathbb{R}^{d \times h}\} \subseteq \binom{[d]}{k}^n$. Since the targets are uniform on the $\binom{d}{k}^n$ possible labelings,
\begin{align}
    \mathbb{P}\!\left[\exists W : \mathrm{top}_k(W e_i) = S_i \ \forall i \,\Big|\, \{e_i\}\right]
    &\leq \frac{|\{\mathcal{L}(W) : W \in \mathbb{R}^{d \times h}\}|}{\binom{d}{k}^n} \\
    &\leq \binom{d}{k}^{-n} \left(\frac{C_0 n d}{h}\right)^{hd} \quad \text{by Eq.~\ref{eqn:counting-realizable}.}
\end{align}
The right-hand side does not depend on $\{e_i\}$, so the bound also holds unconditionally on the embeddings. Taking logarithms, that right-hand side is at most $\delta$ if and only if $n$ satisfies the implicit condition
\begin{equation}
    \label{eqn:topk-capacity-upper-implicit}
    n \log\binom{d}{k} \geq hd \log\!\left(\frac{C_0 n d}{h}\right) + \log(1/\delta).
\end{equation}
We now show that the explicit hypothesis Eq.~\ref{eqn:condition-topk-capacity-upper} implies the implicit condition Eq.~\ref{eqn:topk-capacity-upper-implicit} for a large enough universal $C$.
The only $n$-dependent term on the RHS of Eq.~\ref{eqn:topk-capacity-upper-implicit} is $hd \log n$, and since $\log n$ grows much more slowly than $n$ it cannot keep up with the linear LHS $n \log\binom{d}{k}$.
We make this quantitative via the tangent-line bound $\log n \leq \alpha n + \log \frac{1}{\alpha}$, valid for any $n \geq 1$ and $\alpha > 0$, which we apply with $\alpha = \frac{\log\binom{d}{k}}{2 hd}$ so that the $\alpha n$ term consumes exactly half of the LHS:
\begin{equation}
    \label{eqn:capacity-upper-tangent}
    hd \log n \leq \frac{n \log\binom{d}{k}}{2} + hd \log \frac{2 hd}{\log\binom{d}{k}} \leq \frac{n \log\binom{d}{k}}{2} + 3 hd \log(hd),
\end{equation}
where the last step uses $\log\binom{d}{k} \geq \log 2$ for $1 \leq k < d$, so that $\log\frac{2hd}{\log\binom{d}{k}} \leq \log(3hd) \leq 3\log(hd)$ whenever $hd \geq 2$.
Plugging into the RHS of Eq.~\ref{eqn:topk-capacity-upper-implicit} and using $\log\!\left(\frac{C_0 d}{h}\right) \leq \log C_0 + \log(hd)$,
\begin{align}
    \label{eqn:capacity-upper-expansion}
    hd \log\!\left(\frac{C_0 n d}{h}\right) + \log(1/\delta)
    &= hd \log\!\left(\frac{C_0 d}{h}\right) + hd \log n + \log(1/\delta) \\
    &\leq \frac{n \log\binom{d}{k}}{2} + C_1 hd \log(hd) + \log(1/\delta)
\end{align}
for a universal $C_1 > 0$ absorbing $\log C_0$ and the constant from Eq.~\ref{eqn:capacity-upper-tangent}.
Rearranging, Eq.~\ref{eqn:topk-capacity-upper-implicit} holds whenever
\begin{equation}
    n \log\binom{d}{k} \geq 2C_1 hd \log(hd) + 2\log(1/\delta),
\end{equation}
which is implied by the explicit hypothesis Eq.~\ref{eqn:condition-topk-capacity-upper} for $C \geq 2 \max(C_1, 1)$.
\end{proof}

\paragraph{Behavior across $k$.}
The Stirling approximation gives $\log\binom{d}{k} \approx d H(\rho)$ with $\rho = k/d$, where $H(\rho) = -\rho \log\rho - (1-\rho)\log(1-\rho)$ is the binary entropy. Dropping the explicit constants and absorbing the $\log(hd)$ term into the $\lesssim$ notation, Eq.~\ref{eqn:condition-topk-capacity-upper} reads $n \gtrsim h/H(\rho)$ up to logarithmic factors.
This matches the lower bound $n \lesssim h/(\rho(1-\rho))$ of Theorem~\ref{thm:topk-capacity} up to logarithmic factors, since $H(\rho)$ and $\rho(1-\rho)$ agree up to a factor of $\log(1/\rho)$ in the sparse regime $\rho \ll 1$ and up to constants near $\rho = 1/2$.

\subsection{Validation of theory}
\label{app:capacity-validation}

Theorem~\ref{thm:topk-capacity} only provides a lower bound on the capacity of the linear-softmax model: the Hebbian construction is one specific weight configuration, and a fully optimized model could in principle achieve a higher capacity and a qualitatively different scaling with~$k$. We empirically verify that, while the trained model achieves a higher capacity than the Hebbian construction, its capacity differs from the one of the construction by a roughly constant factor (between $7$ and $15$ over the range of $k$ we test), and thus scales similarly with $k$; see Figure~\ref{fig:validation-capacity}.
Figure~\ref{fig:validation-capacity-params} additionally verifies that the capacity of the trained head grows proportionally to the number of parameters $hd$, with the $1/k$ dependence predicted by the theory.

\begin{figure}[h]
    \centering
    \includegraphics{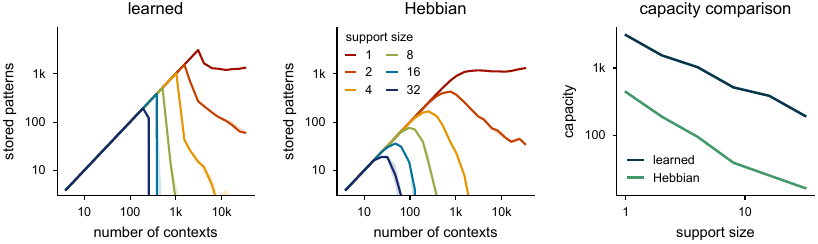}
    \caption{\textbf{The Hebbian model captures the same dependency on ambiguity as learned linear heads.} The left and middle panels show the number of contexts whose top-$k$ predictions exactly match the ground-truth support, as a function of the dataset size $n$, for a linear head trained with Adam (left) and the closed-form Hebbian construction of Section~\ref{subsec:capacity} (middle). The right panel shows the corresponding capacity $n^\ast$ as a function of the support size $k$, where $n^\ast$ is the largest $n$ for which the median top-$k$ accuracy stays above $0.99$. Both models exhibit similar dependencies on ambiguity and only approximately differ by a constant factor.}
    \label{fig:validation-capacity}
\end{figure}

\begin{figure}[h]
    \centering
    \includegraphics{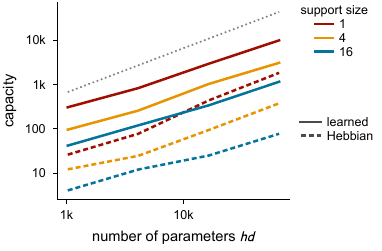}
    \caption{\textbf{The capacity of a trained linear head grows proportionally to the number of parameters $hd$, with the $1/k$ offset predicted by Theorem~\ref{thm:topk-capacity}.} Same capacity definition and learning-rate selection as Figure~\ref{fig:validation-capacity}, up to the reductions described in the experimental setup below, with the capacity $n^\ast$ reported as a function of the number of parameters $hd$ for $k \in \{1, 4, 16\}$, sweeping $h = d \in \{32, 64, 128, 256\}$. The gray dotted line indicates a slope of $1$, that is, a capacity directly proportional to $hd$.}
    \label{fig:validation-capacity-params}
\end{figure}

\paragraph{Experimental setup.} We train a linear head with $h = d = 128$ using Adam on the exact KL divergence to uniform size-$k$ target distributions; the swept configurations are summarized in Table~\ref{tab:capacity-validation}. Embeddings are sampled iid from the unit sphere in $h$ dimensions. The reported metric is the top-$k$ accuracy, that is, the fraction of patterns whose top-$k$ predictions match the support of the target data distribution. For each $(k, n)$ we first select the learning rate that maximizes the mean top-$k$ accuracy across seeds, breaking ties by the lowest mean KL, which picks the best-converged run among the top-$k$-optimal ones; we then define the capacity $n^\ast$ as the largest $n$ for which the median top-$k$ accuracy across seeds stays above $0.99$, interpolating linearly in $\log n$ between the two swept values that bracket the crossing. Reading $n^\ast$ off the mean instead of the median across seeds changes it by less than $0.5\%$ at every~$k$, and leaves the log-log slope in $k$ unchanged. The Hebbian construction does not require training; we evaluate it on the same metric. As $k$ ranges from $1$ to $32$, the learned capacity exceeds the Hebbian one by a factor between $7$ and $15$, and the log-log slopes of $n^\ast$ against $k$ are $-0.77$ (learned) and $-0.97$ (Hebbian).

Figure~\ref{fig:validation-capacity-params} uses the same capacity definition, the same learning-rate selection rule and the same target distributions, with three reductions that make the parameter sweep tractable locally: full-batch Adam rather than minibatches of $2{,}048$ contexts (the loss is the exact distributional KL in both cases), $15{,}000$ steps, and the reduced learning-rate grid $\{3{\cdot}10^{-3},\, 10^{-2},\, 3{\cdot}10^{-2},\, 10^{-1},\, 3{\cdot}10^{-1}\}$, which brackets the optima for $k \in \{4, 16\}$. This protocol reproduces the sweep's capacities at $h = d = 128$ within $12\%$ ($n^\ast = 1{,}030$ against $1{,}028$ for $k = 4$, and $340$ against $385$ for $k = 16$).

\begin{table}[h]
    \centering
    \caption{Sweep configuration for the capacity validation experiment of Figure~\ref{fig:validation-capacity}. $\{ \cdots \}$ indicate the values that we sweep over.}
    \label{tab:capacity-validation}
    \setlength{\tabcolsep}{8pt}
    \begin{tabular}{ll}
        \toprule
        \textbf{Architecture} & linear head, $h = d = 128$ ($hd = 16{,}384$ parameters) \\
        \textbf{Loss} & exact KL divergence to the target distributions \\
        \textbf{Optimizer} & Adam, batch size $2{,}048$, constant learning rate, $150{,}000$ steps \\
        \textbf{Support size} $k$ & $\{1,\, 2,\, 4,\, 8,\, 16,\, 32\}$ \\
        \textbf{Number of contexts} $n$ & $21$ values, log-spaced between $32$ and $32{,}768$ \\
        \textbf{Learning rate} & $\{10^{-3},\, 3{\cdot}10^{-3},\, 10^{-2},\, 3{\cdot}10^{-2},\, 10^{-1},\, 3{\cdot}10^{-1},\, 1\}$ \\
        \textbf{Seeds} & $5$ \\
        \bottomrule
    \end{tabular}
\end{table}

\clearpage
\section{Theoretical results regarding embedding dimension}
\label{app:embedding}

In this appendix, we study the minimum embedding dimension $h$ needed for the linear-softmax model to exactly represent all target distributions, for a fixed support size $k$.
In contrast with the rest of our theoretical analysis, we first assume that the neural network backbone is powerful enough to produce any desired embedding (Appendices~\ref{subsec:polytope-primer} and~\ref{subsec:embedding-proof}).
This means that we can treat both the weights $W$ and the embeddings $\{e_i\}$ as free parameters that we can pick as works best.
Since the ability of the backbone to produce any embedding is related to its capacity, we are implicitly assuming that the backbone has infinite capacity.
Appendix~\ref{subsec:embedding-robust} will consider the robustness of that result when we relax this assumption.

This analysis studies how much $h$ affects what the linear-softmax model can represent.
Formally, we say that it can \emph{represent} a family of target distributions $\{p_i\}$ if there exist weights $W$ and embeddings $\{e_i\}$, both independent of the inverse temperature $\beta > 0$, such that the predictions $\hat p_i = \mathrm{softmax}(\beta W e_i)$ satisfy $\hat p_i \to p_i$ as $\beta \to \infty$ for every $i$.
This limit is needed because the softmax only puts exactly zero mass on off-support tokens when the logits diverge; the temperature $\beta$ is the only quantity we let vary along it, so that representing a family means exhibiting a single configuration of weights and embeddings that becomes exact when sharpened.\footnote{Letting the whole configuration vary with~$\beta$ would not change the threshold of Theorem~\ref{thm:embedding}. Indeed, $\hat p_i \to p_i$ forces the logit-level quantities of Lemma~\ref{lem:KL-control} to satisfy $\nu(S_i, \ell_i) \to 0$ and $\mu(S_i, \ell_i) \to \infty$, since logit differences are log-ratios of predicted probabilities. Fixing any point of such a sequence at which $\mu > 2\nu$, every $A \subseteq [d]$ with $|A| \leq k$ is then strictly separated from its complement by some vector: directly when $|A| = k$, and otherwise by $e_S + e_{S'}$ for two size-$k$ supports with $S \cap S' = A$, whose on-$A$ values exceed all its other values by at least $\mu - 2\nu > 0$. Radon's theorem applied to $h+2$ of the rows of that fixed~$W$, whose smaller side has at most $\lfloor (h+2)/2 \rfloor \leq k$ elements when $h \leq 2k-1$, then contradicts this separation exactly as in the proof of Proposition~\ref{thm:ubt}.}
We use throughout this appendix the same target distributions as in the rest of our theoretical analysis, namely uniform distributions on random size-$k$ supports $S_i \subseteq [d]$ (Eq.~\ref{eqn:target-dist}).

\paragraph{Lower bound on the embedding dimension.} Our first main result characterizes the minimum embedding dimension needed for such a representation to exist.
\begin{theorem}[Minimum embedding dimension]
    \label{thm:embedding}
    Let $d \geq 2k+2$ and $d \geq h+1$.
    The linear-softmax model can represent every family of uniform distributions on size-$k$ supports of $[d]$ if and only if $h \geq 2k$.
\end{theorem}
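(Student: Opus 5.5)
The plan is to translate representability into a geometric property of the rows $w_1,\dots,w_d \in \mathbb{R}^h$ of $W$, and then to use the theory of neighborly polytopes for both directions.

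\textbf{Reformulation.} Fix $W$ and an embedding $e$. As $\beta\to\infty$, $\mathrm{softmax}(\beta W e)$ converges to the uniform distribution on $\arg\max_j w_j^\top e$. So a family with support $S$ is represented exactly when some direction $e$ makes $\{w_j^\top e\}_{j\in S}$ all equal and strictly larger than every $w_{j'}^\top e$ with $j'\notin S$. Equivalently, $\mathrm{conv}\{w_j : j\in S\}$ is an exposed face of $P=\mathrm{conv}\{w_1,\dots,w_d\}$ containing no other $w_j$. Every family must be representable with a single $W$ and arbitrary $e_i$, and the supports $S_i$ range over all size-$k$ subsets. Hence the model represents every family if and only if there exist $d$ points in $\mathbb{R}^h$, all of them vertices, such that every $k$-subset spans an exposed face containing exactly those points. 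This is $k$-neighborliness of a $d$-vertex polytope (or point configuration) in $\mathbb{R}^h$.

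\textbf{Sufficiency ($h\ge 2k$).} Take the cyclic polytope construction of \citet{gale_neighborly_1963}: put $w_j=(t_j,t_j^2,\dots,t_j^h)$ for distinct reals $t_j$. Given $S$, define the polynomial $q(t)=-\prod_{j\in S}(t-t_j)^2$, which has degree $2k\le h$. Its coefficients in front of $t,\dots,t^h$ give a vector $e$ with $w_j^\top e=q(t_j)-q(0)$. Then $q$ vanishes exactly on $S$ and is strictly negative elsewhere, which gives the required strict separation with ties on $S$. For $h>2k$, pad $e$ with zeros. No constraint relating $d$ to $h$ is needed in this direction.

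\textbf{Necessity ($h\le 2k-1$ fails).} This is the Radon-type argument, which I expect to be the main obstacle because of the edge cases. First, representing all $k$-subsets implies that every subset $A$ with $|A|\le k$ is strictly separable from its complement. For $|A|=k$ this is immediate. For smaller $A$, choose two $k$-supports $S,S'$ with $S\cap S'=A$, which is possible because $d\ge 2k$. Their directions $e_S,e_{S'}$ are maximized on $S$ and $S'$ respectively, and the sum $e_S+e_{S'}$ is strictly maximized on $A$ compared with all other indices. Here one normalizes the two maxima to be equal by scaling, and only the limiting tie structure matters. Next, take any $h+2\le d$ of the rows; this uses $d\ge h+1$, with the case $d=h+1$ to be checked separately, perhaps by using $d\ge 2k+2\ge h+3$. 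Radon's theorem partitions these rows into $I\sqcup J$ with intersecting convex hulls, and we may take $|I|\le\lfloor (h+2)/2\rfloor\le k$. A separating direction for $I$ would make every point of $\mathrm{conv}(I)$ score strictly higher than every point of $\mathrm{conv}(J)$, which contradicts the fact that the two hulls intersect.

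The delicate part is checking that the strict separation really holds when rows coincide or are degenerate. Another point to check is whether one needs $d\ge 2k+2$ to pick $S,S'$ disjoint outside $A$ together with enough points for Radon. I will state that as the place where the hypotheses on $d$ enter.
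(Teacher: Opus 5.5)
Your proposal is correct and follows essentially the paper's route. You reduce representability, at the family enumerating all $\binom{d}{k}$ supports, to $k$-neighborliness of the row polytope of $W$; sufficiency then comes from the cyclic polytope with witness polynomial $\prod_{j\in S}(t-t_j)^2$, and necessity from a Radon partition of $h+2$ rows. Your explicit step showing that subsets $A$ with $|A|<k$ are also strictly exposed, by summing witnesses $e_S+e_{S'}$ with $S\cap S'=A$, is the device the paper uses in its footnote. The edge cases you flag close at once: $h\le 2k-1$ and $d\ge 2k+2$ give $d\ge h+3$, and Radon's theorem applies to indexed, possibly coincident, rows, so neither the $d=h+1$ case nor coincident rows is a problem.
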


From the matrix factorization argument of \citet{yang_breaking_2018}, we know that, in general, the embedding dimension needed to represent all possible probability distributions should be at least the vocabulary size $d$; hence the $d \geq h +1$ assumption of the Theorem is not restrictive.
This is what the authors called the softmax bottleneck.
However, this is a general result that is agnostic to the type of distributions being modeled.
Theorem~\ref{thm:embedding} refines this answer for the specific type of distributions we consider, parametrized by support size~$k$.
Closer to our setting, \citet{chang_softmax_2022} show through a geometric argument that a single hidden state cannot place high probability on several words whose embeddings surround interfering word embeddings, identifying such multi-modal distributions as a failure case of the softmax layer; our results sharpen this observation into a precise linear scaling with ambiguity.
We prove Theorem~\ref{thm:embedding} by turning this statistical question into a geometric one: in Section~\ref{subsec:polytope-primer} we review the classical results from the theory of convex polytopes that we later invoke, and in Section~\ref{subsec:embedding-proof} we relate the problem of the model representing these distributions to the problem of neighborliness in convex polytopes to prove Theorem~\ref{thm:embedding}.

\textbf{Extension to more realistic noisy embeddings.} An important assumption behind Theorem~\ref{thm:embedding} is that each context embedding $e_i$ can be tuned arbitrarily finely to the support it encodes.
In practice, the embeddings produced by the backbone are constrained, and small deviations should not destroy the outputs.
Our second main result, Theorem~\ref{thm:robust-embedding} below, shows that such robustness is recoverable at a modest cost: with random weights and an embedding dimension larger than~$2k$ by a factor of order $\log(d/k)$, the representation survives bounded adversarial perturbations of the embedding at the price of a small KL slack.
Appendix~\ref{subsec:embedding-robust} formalizes and proves this result; the informal statement below gives the high-level takeaway.

\begin{theorem}[Robust representation via random weights, informal version of Theorem~\ref{thm:robust-embedding}]
    \label{thm:robust-embedding-informal}
    With an embedding dimension $h = \Omega\bigl(k\log(d/k)\bigr)$ and randomly-drawn output weights, the linear-softmax model admits shared-per-support embeddings such that, for every size-$k$ support $S$ and every bounded adversarial context-dependent perturbation of the corresponding embedding, the predicted distribution is within~$\varepsilon$ of the target uniform distribution on~$S$ in terms of KL divergence, for a small enough perturbation radius.
\end{theorem}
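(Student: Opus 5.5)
The plan is a restricted-isometry (compressed-sensing) construction, with Lemma~\ref{lem:KL-control} converting logit-level control into a KL bound. We draw the rows $w_1,\dots,w_d \in \mathbb{R}^h$ of $W$ i.i.d.\ Gaussian $\mathcal N(0, I_h/h)$. For each size-$k$ support $S$ we set the shared embedding to $e_S := \beta\, W^\top \mathbf 1_S$, possibly after a normalization, so that the logits are $\ell = W e_S = \beta\, W W^\top \mathbf 1_S$. The key object is the Gram matrix $G = WW^\top$. We want $G_{jj} \approx 1$ for all $j$, and we want the inner products $(G\mathbf 1_S)_j - \mathbf 1_{j\in S}$ to be uniformly small over all $S$ and $j$.

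\textbf{Step 1: the good event on $W$.} Standard RIP results (Baraniuk et al.; Foucart--Rauhut) give the following. If $h \geq C\gamma^{-2} k\log(d/k)$, then with high probability $W^\top$, viewed as an $h\times d$ measurement matrix, satisfies the RIP of order $2k+1$ with constant $\gamma$. Equivalently, every principal submatrix of $G$ of size at most $2k+1$ is within $\gamma$ of the identity in operator norm. Fix $S$ and $j$, and let $T = S\cup\{j\}$, which has $|T|\le k+1$. Then
\[
|(G\mathbf 1_S)_j - \mathbf 1_{j\in S}| \le \|(G_{TT}-I)\mathbf 1_S\|_\infty \le \gamma\sqrt{k}.
\]
This is too lossy on its own, so we use instead the coherence-type bound that the RIP gives on each row, $\|G_{j,T}-e_j\|_2 \le \gamma$. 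We then bound the entrywise deviation by $\gamma\|\mathbf 1_S\|_2 = \gamma\sqrt k$ and rescale by choosing $\gamma = c/\sqrt{k}$. This costs $h = \Omega(k^2\log(d/k))$. To keep $h = \Omega(k\log(d/k))$ we should not require entrywise control. We only need the two quantities appearing in Lemma~\ref{lem:KL-control}, namely the on-support variation $\nu$ and the margin $\mu$.

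\textbf{Step 2: handling $\nu$ and $\mu$.} The margin is easy: it only involves averages over $S$. Indeed,
\[
\tfrac1k\mathbf 1_S^\top G \mathbf 1_S \ge 1-\gamma,
\]
while for $j\notin S$, applying RIP on $T$ and writing $(G\mathbf 1_S)_j = \langle w_j, W^\top\mathbf 1_S\rangle$, the polarization bound gives $|(G\mathbf 1_S)_j| \le \gamma\sqrt{k}$. This is again $\sqrt k$-lossy. The honest fix is to modify the embedding rather than tighten the analysis. We take $e_S$ to be the solution of a small convex feasibility problem, $\langle w_j, e_S\rangle = 1$ for $j\in S$ and $|\langle w_j, e_S\rangle| \le 1/2$ for $j\notin S$. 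This is a dual-certificate construction, exactly as in exact-recovery proofs for $\ell_1$ minimization. We take the least-norm solution of the on-support equalities, $e_S = W_S^\top (W_S W_S^\top)^{-1}\mathbf 1_S$. The equalities then hold exactly, so $\nu = 0$ before perturbation. For each off-support $j$, the logit $\langle w_j, e_S\rangle$ is, conditionally on $W_S$, Gaussian with variance $\|e_S\|^2/h \le k/((1-\gamma)h)$. A union bound over $j\notin S$ and over all $\binom dk$ supports then requires $h \gtrsim k\log(d/k) + k\log d$, giving $|\langle w_j,e_S\rangle| \le 1/2$. This is the standard inexact-dual-certificate estimate, and it yields $h=\Omega(k\log(d/k))$ up to the precise logarithm. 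I expect this union bound to be \emph{the main obstacle}: making the tail bound beat $\binom dk (d-k)$ events with only $k\log(d/k)$ dimensions. I would first try conditioning on $W_S$ and exploiting the independence of $w_j$ from $W_S$ for $j\notin S$. Then I would use $\log\binom dk \approx k\log(d/k)$ to absorb the $\log d$ term whenever $k\ge 2$.

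\textbf{Step 3: perturbations and KL.} Scale the embedding to $\beta e_S + \beta u$ with $\|u\|\le\tau$. Each logit moves by at most $\beta\tau\max_j\|w_j\| \le 2\beta\tau$, using $\|w_j\|\le 2$ on the good event. It follows that $\nu \le 4\beta\tau$ and $\mu \ge \beta(1/2 - 4\tau)$. Lemma~\ref{lem:KL-control} then gives
\[
\mathrm{KL} \le \mathrm e\,(4\beta\tau)^2 + \tfrac{d-k}{k}\,e^{-\beta/4}
\]
for $\tau\le 1/32$. We choose $\beta = 4\log(2(d-k)/(k\varepsilon))$ and $\tau \le c\sqrt{\varepsilon}/\beta$, which makes each term at most $\varepsilon/2$. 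This gives the claimed statement for a small enough perturbation radius.
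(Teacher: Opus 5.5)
There is a genuine gap in Step~2, at the union bound you flagged. It cannot be repaired within your construction.

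Conditionally on $W_S$, the off-support logit $\langle w_j,e_S\rangle$ is $\mathcal N(0,\lVert e_S\rVert^2/h)$ with $\lVert e_S\rVert^2\approx k$. Its tail is therefore $2\exp\bigl(-(1-\gamma)h/(8k)\bigr)$, so the exponent is $h/k$, not $h$. Beating $\binom dk(d-k)$ events then needs $h/k\gtrsim\log\binom dk+\log d\approx k\log(d/k)$, that is, $h\gtrsim k^2\log(d/k)$. The $k\log d$ you obtain is only the cost of a single fixed support, i.e.\ the non-uniform guarantee.

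This is not a weakness of the union bound: the least-norm certificate really does fail uniformly at $h=Ck\log(d/k)$.
\begin{itemize}
\item Fix $j$ and set $\alpha_i:=\langle w_j/\lVert w_j\rVert,w_i\rangle$. Take $S$ to be the $k$ indices with the largest $\alpha_i$, so that $\mathbf 1^\top\alpha_S\approx k\sqrt{2\log(d/k)/h}=\sqrt{2k/C}$ and $\lVert\alpha_S\rVert^2\approx 2/C$.
\item The selection depends only on the $\alpha_i$, which are independent of the components of the $w_i$ orthogonal to $w_j$. Hence $W_SW_S^\top\approx I+\alpha_S\alpha_S^\top$.
\item Sherman--Morrison then gives $\langle w_j,e_S\rangle\approx\mathbf 1^\top\alpha_S/(1+\lVert\alpha_S\rVert^2)$. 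This is of order $\sqrt{k/C}$ and exceeds the on-support value $1$ once $k\gg C$.
\end{itemize}
The same adversarial supports show that the $\sqrt k$ loss you found in Step~1 is not slack either.

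The missing idea is to stop committing to one particular certificate. Instead, control the best certificate for each support through a single property of $\ker W^\top$.
\begin{itemize}
\item The paper takes as center the solution of $\min_e \max_{j'\notin S}|w_{j'}^\top e|+\tfrac{\kappa}{\sqrt k}\lVert e\rVert$ subject to $w_j^\top e=1/\sqrt k$ on $S$.
\item By strong duality, the value of this program equals $\tfrac{1}{\sqrt k}\mathbf 1^\top v_S$ for some $v$ with $\lVert v_{S^c}\rVert_1\le 1$ and $\lVert W^\top v\rVert_2\le\kappa/\sqrt k$.
\item The $\ell_2$-robust null-space property bounds $\lVert v_S\rVert_2$ simultaneously for all $\binom dk$ supports. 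This property is implied by RIP of order $2k$, which Gaussian weights satisfy at $h\gtrsim k\log(\mathrm{e}d/k)$.
\item The result is $\nu=0$, a margin of order $1/\sqrt k$ and a witness of bounded norm, with no union bound over supports.
\end{itemize}

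Your Step~3 then matches the paper's last step. Note only that the definition requires $\lVert w_j\rVert\le 1$ and unit-norm centers. Since your $e_S$ has norm about $\sqrt k$, your radius $\sqrt\varepsilon/\log(d/(k\varepsilon))$ becomes the relative radius $\sqrt\varepsilon/(\sqrt k\log(d/(k\varepsilon)))$ of Theorem~\ref{thm:robust-embedding}.
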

\setcounter{robustinformalnum}{\value{theorem}}

\subsection{A primer on convex polytopes}
\label{subsec:polytope-primer}

Our analysis of the embedding-dimension root of the curse of ambiguity in the ideal case relies on classical results from the theory of convex polytopes, and in particular on the notion of neighborliness.
This subsection provides a self-contained primer on the relevant concepts and states the seminal results that we later invoke, namely the classical upper bound on neighborliness and the cyclic polytope construction of \citet{gale_neighborly_1963}.

\paragraph{Convex polytopes and their faces.} A \emph{convex polytope} $P \subset \mathbb{R}^h$ is the convex hull of a finite set of vertices $v_1, \dots, v_d \in \mathbb{R}^h$.
A subset $F \subseteq \{v_1, \dots, v_d\}$ is a \emph{face} of $P$ if there exist a vector $e \in \mathbb{R}^h$ and a scalar $a \in \mathbb{R}$ such that
\begin{equation}
    \begin{split}
        \label{eq:face-def}
        e^\top v_i &= a \ \text{ for } v_i \in F,\\
        e^\top v_j &< a \ \text{ for } v_j \notin F.
\end{split}
\end{equation}
Geometrically, $F$ is the set of vertices that are maximally and equally aligned with $e$, and the affine hyperplane $\{x : e^\top x = a\}$ is a supporting hyperplane of $P$ touching $P$ exactly at $F$.
The vector $e$ is typically called the witness vector of the face.

\paragraph{Neighborliness.} A polytope $P$ is called \emph{$m$-neighborly} if every subset of at most $m$ of its vertices is a face.
Being $2$-neighborly means that every pair of vertices is connected by an edge, which is the case for the tetrahedron but not for the cube, see Figure~\ref{fig:neighborliness}.
Being $3$-neighborly additionally means that every triple of vertices bounds a triangular face, and so on.
Neighborliness is thus a measure of how many vertices belong to the faces of the polytope.

\begin{figure}[h]
\centering
    \includegraphics{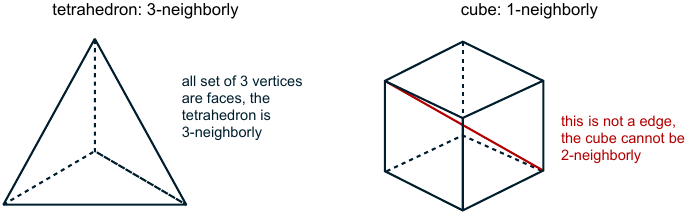}
\caption{The tetrahedron is $3$-neighborly: every subset of its four
vertices is a face.
The cube is only $1$-neighborly: antipodal vertices
are not connected by an edge, so not every pair of vertices forms a
face.}
\label{fig:neighborliness}
\end{figure}

\paragraph{How neighborly can a polytope be?} A natural question to ask is how neighborly we can expect a polytope, and in particular how this depends on the number of vertices.
The answer is given by the following classical threshold, a consequence of Radon's theorem.

\begin{proposition}[Neighborliness threshold, see e.g.\ \citet{grunbaum_convex_2003}]
\label{thm:ubt}
Any convex polytope in $\mathbb{R}^h$ with strictly more than $h+1$ vertices is at most $\lfloor h/2\rfloor$-neighborly.
\end{proposition}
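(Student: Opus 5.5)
The plan is to argue by contradiction using Radon's theorem, exactly as foreshadowed in the footnote of Appendix~\ref{app:embedding}. Let $P = \mathrm{conv}(v_1, \dots, v_d) \subset \mathbb{R}^h$ with $d \geq h+2$ vertices, and suppose $P$ were $m$-neighborly with $m = \lfloor h/2 \rfloor + 1$. I will exhibit two disjoint vertex subsets, each of size at most $m$, whose convex hulls intersect. I will then show that such an intersection is incompatible with both subsets being faces.

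\textbf{Step 1 (Radon partition).} Pick any $h+2$ of the vertices, say $v_1, \dots, v_{h+2}$. Radon's theorem states that any $h+2$ points in $\mathbb{R}^h$ can be partitioned into two disjoint sets $A \sqcup B$ with $\mathrm{conv}(A) \cap \mathrm{conv}(B) \neq \emptyset$. Since $|A| + |B| = h+2$, the smaller side, say $A$, has $|A| \leq \lfloor (h+2)/2 \rfloor = \lfloor h/2 \rfloor + 1 = m$. Both parts are nonempty. Moreover, since the $v_i$ are vertices (extreme points), neither side can be a singleton lying in the hull of the other without contradiction. It is not actually necessary to exclude this case, since the argument below handles it.

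\textbf{Step 2 (faces cannot meet a disjoint hull).} If $A$ were a face, there would be a witness $e$ and a scalar $a$ with $e^\top v = a$ for $v \in A$ and $e^\top v < a$ for every other vertex, in particular for all $v \in B$ since $B \cap A = \emptyset$. Take $x \in \mathrm{conv}(A) \cap \mathrm{conv}(B)$. Writing $x$ as a convex combination of $A$ gives $e^\top x = a$. Writing it as a convex combination of $B$, with nonnegative weights summing to one and at least one positive weight, gives $e^\top x < a$. This is a contradiction, so $A$ is a vertex subset of size at most $\lfloor h/2 \rfloor + 1$ that is not a face. Hence $P$ is not $(\lfloor h/2 \rfloor + 1)$-neighborly.

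\textbf{Step 3 (conclude).} Neighborliness is monotone: $m$-neighborly implies $m'$-neighborly for all $m' \leq m$, directly from the definition ("every subset of at most $m$"). It follows that $P$ is at most $\lfloor h/2 \rfloor$-neighborly.

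The only delicate point is the bookkeeping in Step 1. One must ensure that the small Radon side has size at most $\lfloor h/2\rfloor+1$ rather than merely $\lceil (h+2)/2\rceil$; the two coincide. One must also check that Radon's theorem applies, which requires $d \geq h+2$ and is precisely the hypothesis "strictly more than $h+1$ vertices". Step 2 is routine. Here I take the face definition of Eq.~\ref{eq:face-def}, with strict inequality off the face, which is exactly what makes the contradiction immediate.
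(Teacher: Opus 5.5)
Your proof is correct and is the same argument the paper sketches in its primer. You take a Radon partition of $h+2$ vertices, and its smaller side, of size at most $\lfloor h/2\rfloor+1$, cannot be a face: a common point of the two convex hulls would lie both on the supporting hyperplane and strictly below it. You also spell out details the paper leaves implicit, namely nonemptiness of both sides, the strict inequality surviving convex combination, and the monotonicity of neighborliness.
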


In $\mathbb{R}^3$, this means that any polytope with at least $5$ vertices is at most $1$-neighborly, that is, no polytope with five or more vertices can have every pair of vertices joined by an edge.
The result does not apply to the tetrahedron example of Figure~\ref{fig:neighborliness}, which is $3$-neighborly, as it has only $4$ vertices.
The $\lfloor h/2\rfloor$ threshold comes from Radon's theorem: any set of $h+2$ points in $\mathbb{R}^h$ can be partitioned into two disjoint subsets whose convex hulls intersect.
Once there are enough points, some of them are thus forced to lie inside the convex combinations of the others.
The smaller side~$A$ of such a partition has at most $\lfloor (h+2)/2 \rfloor$ elements and cannot be a face: a supporting hyperplane exact on~$A$ would have to contain a point of $\mathrm{conv}(A)$ that is simultaneously a convex combination of vertices lying strictly below it.

\paragraph{Cyclic polytopes and the polynomial construction.} This threshold is tight: for every $h$ and $d \ge h+1$, there exist polytopes achieving $\lfloor h/2\rfloor$-neighborliness.
The canonical construction is the \emph{cyclic polytope} $C(d, h)$, introduced by \citet{gale_neighborly_1963}.
It is obtained by placing $d$ vertices along the \emph{moment curve}
\begin{equation}
    \phi(t) = (t, t^2, \dots, t^h),
\end{equation}
namely $v_i=\phi(t_i)$ for $t_1 < \cdots < t_d$.

\begin{proposition}[Neighborliness of the cyclic polytope, \citet{gale_neighborly_1963}]
\label{prop:cyclic}
For every $h$ and $d \geq h+1$, the cyclic polytope $C(d, h)$ is $\lfloor h/2\rfloor$-neighborly.
\end{proposition}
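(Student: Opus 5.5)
The plan is to prove $m$-neighborliness of $C(d,h)$ with $m=\lfloor h/2\rfloor$ by writing down an explicit witness for each small vertex subset as a nonnegative polynomial. Fix a subset $F=\{v_{i_1},\dots,v_{i_r}\}$ with $r\le m$. A linear functional $x\mapsto c^\top x$ on $\mathbb{R}^h$, shifted by a constant, evaluates at the vertex $v_i=\phi(t_i)$ as
\[
q(t_i)=c_0+\sum_{s=1}^h c_s t_i^s ,
\]
so supporting hyperplanes correspond to real polynomials of degree at most $h$. We take
\[
q(t)=\prod_{a=1}^{r}(t-t_{i_a})^2 .
\]
Its degree is $2r\le 2m\le h$, so $q$ is admissible. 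It vanishes exactly at the parameters $t_{i_a}$ and is strictly positive at every other real $t$.

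With $q(t)=c_0+\sum_{s=1}^h c_s t^s$, set $e:=-(c_1,\dots,c_h)$ and $a:=c_0$. Then
\[
e^\top v_i=c_0-q(t_i).
\]
This equals $a$ when $v_i\in F$ and is strictly less than $a$ otherwise, because the $t_j$ are distinct and $q(t_j)>0$ for $t_j\notin\{t_{i_a}\}$. Hence $F$ is a face in the sense of Eq.~\ref{eq:face-def}.

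Two side conditions need checking. First, the points $v_1,\dots,v_d$ must actually be vertices of the polytope, i.e.\ the list of vertices is not redundant. This follows from the case $r=1$ of the same construction, since each singleton is then a face. Second, the empty set and the case $m=0$ are trivial.

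The only real subtlety is the degree bookkeeping: squaring the linear factors is what makes $q$ nonnegative, and that squaring is exactly why $2r\le h$ is needed. The hypothesis $d\ge h+1$ plays no role in the argument beyond making the polytope genuinely $h$-dimensional; distinctness of the $t_i$ is all that is used. I therefore do not expect a serious obstacle. The step to state carefully is the translation between affine functionals on the moment curve and polynomials of degree at most $h$, including the sign convention that turns ``vanishes on $F$ and is positive elsewhere'' into ``maximal exactly on $F$''.
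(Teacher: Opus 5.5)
Your proof is correct and is essentially the paper's argument. The paper likewise turns the face condition on the moment curve into the existence of a polynomial of degree at most $h$ that vanishes on $F$ and is strictly positive elsewhere, and uses the same witness $\prod_{y \in F}(t - t_y)^2$ of degree $2|F| \le h$. Your explicit sign convention ($e = -(c_1,\dots,c_h)$, $a = c_0$) and your singleton check that every $v_i$ is a genuine vertex (valid for $h \ge 2$) are small details the paper leaves implicit.
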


The power of this construction is that it reduces the problem of neighborliness to a polynomial interpretation.
The dot product between a vector $e \in \mathbb{R}^h$ and the vertex $v_i = \phi(t_i)$ gives
\begin{equation}
    e^\top v_i = e_1 t_i + e_2 t_i^2 + \dots + e_h t_i^h =: q(t_i),
\end{equation}
where $q$ is a polynomial of degree at most $h$.
The face condition of Eq.~\ref{eq:face-def} then translates to $q$ taking specific values on the different vertices.
Indeed, a subset $F$ is a face of $C(d, h)$ if and only if there exists a polynomial $q$ of degree at most $h$ that equals a common value $a$ on $\{t_i : i \in F\}$ and is strictly less than $a$ on $\{t_i : i \notin F\}$.
Equivalently, denoting $r(t) := a - q(t)$, $F$ is a face if and only if there is a polynomial $r$ of degree at most $h$ that vanishes on $\{t_y : y \in F\}$ and is strictly positive elsewhere.
For any subset $F$ of size $k \le h/2$, such a polynomial can easily be constructed explicitly:
\begin{equation}
    \label{eq:r-construction}
    r(t) \;=\; \prod_{y \in F} (t - t_y)^2.
\end{equation}
This polynomial has degree $2k \le h$, vanishes on the points of $F$, and is strictly positive everywhere else, which establishes Proposition~\ref{prop:cyclic} in the regime $k \leq h/2$; we refer to~\citet{gale_neighborly_1963} for the general statement.

\subsection{Reduction to neighborliness and proof of Theorem~\ref{thm:embedding}}
\label{subsec:embedding-proof}

The key insight behind our analysis is to relate the problem of the linear-softmax model representing the target distributions to the problem of neighborliness in convex polytopes.
We first characterize the structure that $W$ and $\{e_i\}$ must have to achieve this, then show that this structure is exactly a $k$-neighborliness condition on the polytope defined by the rows of $W$, and finally combine this reduction with the results of the previous subsection to prove Theorem~\ref{thm:embedding}.

\paragraph{Required structure of the logits.} Denote by $w_j \in \mathbb{R}^h$ the $j$-th row of the weight matrix, so that the $j$-th logit for context $i$ reads $\ell_{ij} = \beta\, w_j^\top e_i$ with $\beta$ the inverse temperature that goes to infinity.
Since the configuration itself does not depend on~$\beta$, sharpening the softmax simply concentrates the prediction on the largest logits: for any fixed vector $v \in \mathbb{R}^d$,
\begin{equation}
    \label{eq:softmax-argmax}
    \mathrm{softmax}(\beta v)_j \;=\; \frac{e^{\beta v_j}}{\sum_{l \in [d]} e^{\beta v_l}} \;\xrightarrow[\beta \to \infty]{}\;
    \begin{cases}
        1/|\arg\max v| & \text{if } j \in \arg\max v,\\
        0 & \text{otherwise,}
    \end{cases}
\end{equation}
that is, $\mathrm{softmax}(\beta v)$ converges to the uniform distribution on the set of maximizers of~$v$.
Applied to $v = W e_i$, whose target is the uniform distribution on~$S_i$, this shows that the model represents the family if and only if, for every context~$i$,
\begin{equation}
    \label{eq:argmax-condition}
    S_i \;=\; \arg\max_{j \in [d]}\; w_j^\top e_i .
\end{equation}
Writing $a_i := \max_j w_j^\top e_i$ for the common maximal value, Eq.~\ref{eq:argmax-condition} amounts to two conditions: $w_j^\top e_i = a_i$ for every $j \in S_i$, which makes all on-support outputs receive the same probability mass in the limit, and $w_j^\top e_i < a_i$ for every $j \notin S_i$, which makes the off-support mass vanish.

\paragraph{Reduction to a $k$-neighborliness problem.} These two conditions are precisely the definition (Eq.~\ref{eq:face-def}) of the vectors $\{w_j\}_{j \in S_i}$ being a face of the polytope with vertices $\{w_j\}_{j \in [d]}$, with $e_i$ acting as the witness vector and $a_i$ as its value.
That is, the output weights define the polytope, and each embedding selects one of its faces.
Since we want these conditions satisfied for all supports $S_i$ of size $k$, we want all subsets of size $k$ to be faces of the polytope, that is, the polytope to be $k$-neighborly.
Each row is then automatically a vertex of that polytope, which therefore does have $d$ of them, as required to later invoke Proposition~\ref{thm:ubt}.
Indeed, a row that is not a vertex is a convex combination of the others, $w_j = \sum_{l \neq j}\alpha_l w_l$ with $\alpha$ in the simplex.
Pick $l_0 \neq j$ with $\alpha_{l_0} > 0$ and a support $S_i$ that contains~$j$ but not~$l_0$, which exists as $d \geq k+1$.
By Eq.~\ref{eq:argmax-condition}, $w_l^\top e_i \leq a_i$ for every~$l$, with strict inequality at $l_0$, so that $w_j^\top e_i = \sum_{l \neq j}\alpha_l\, w_l^\top e_i < a_i$, contradicting $w_j^\top e_i = a_i$.
In particular the rows are pairwise distinct.
We formalize this reduction in the following lemma.

\begin{lemma}[Reduction to neighborliness]
\label{lem:reduction}
The linear-softmax model can represent every family of uniform distributions on size-$k$ supports of $[d]$ if and only if there exists a convex polytope in $\mathbb{R}^h$ with $d$ vertices that is $k$-neighborly.
\end{lemma}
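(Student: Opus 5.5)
The plan is to prove the two directions of Lemma~\ref{lem:reduction} separately, using the characterization of representability already derived above. Since $W$ and $\{e_i\}$ do not depend on $\beta$, Eq.~\ref{eq:softmax-argmax} shows that representing a family is equivalent to the argmax condition Eq.~\ref{eq:argmax-condition} holding for every context~$i$. Both directions will therefore be phrased purely in terms of Eq.~\ref{eq:argmax-condition} and the face definition Eq.~\ref{eq:face-def}.

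\emph{($\Leftarrow$) From a polytope to a representation.} Suppose $P\subset\mathbb{R}^h$ is a $k$-neighborly polytope with vertices $v_1,\dots,v_d$.
\begin{enumerate}
\item Set the rows of $W$ to be $w_j := v_j$.
\item For a given family, fix a context $i$ with support $S_i$ of size $k$. By $k$-neighborliness, $\{v_j\}_{j\in S_i}$ is a face, so there is a witness pair $(e,a)$ with $e^\top v_j=a$ on $S_i$ and $e^\top v_j<a$ off $S_i$.
\item Choose $e_i:=e$. Then $S_i=\arg\max_j w_j^\top e_i$, which is exactly Eq.~\ref{eq:argmax-condition}.
\end{enumerate}
Because $W$ is fixed once and the $e_i$ are chosen per context, this single configuration represents every family at once, independently of~$\beta$.

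\emph{($\Rightarrow$) From a representation to a polytope.} Assume the model can represent every family.
\begin{enumerate}
\item Apply this to one particular family that lists every size-$k$ subset of $[d]$ as a support. This gives a single $W$ together with embeddings $e_S$ satisfying Eq.~\ref{eq:argmax-condition} for each size-$k$ set $S$.
\item Let $P:=\mathrm{conv}\{w_1,\dots,w_d\}$. The argument preceding the lemma (which uses $d\ge k+1$) shows that every $w_j$ is a vertex, the rows are pairwise distinct, and hence $P$ has exactly $d$ vertices.
\item For each size-$k$ set $S$, the pair $(e_S,a_S)$ with $a_S:=\max_j w_j^\top e_S$ witnesses that $\{w_j\}_{j\in S}$ is a face. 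So every $k$-subset of vertices is a face.
\end{enumerate}

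\emph{Main obstacle: passing from size exactly $k$ to size at most $k$.} The definition of $m$-neighborliness asks for every subset of \emph{at most} $k$ vertices to be a face, not only those of size $k$. I would handle this as follows.
\begin{itemize}
\item For $A$ with $|A|<k$, pick two size-$k$ sets $S,S'$ with $S\cap S'=A$. This requires $2k-|A|\le d$, which follows from the hypothesis $d\ge 2k+2$ of Theorem~\ref{thm:embedding}; I would state this assumption in the lemma if necessary.
\item Take $e:=e_S+e_{S'}$. On $A$ this gives the value $a_S+a_{S'}$, while every index outside $A$ misses at least one of $S,S'$ and so falls strictly below.
\item Hence $A$ is a face with witness $e$. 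This is the same trick used in the footnote to Theorem~\ref{thm:embedding}.
\end{itemize}
Alternatively, one could note that faces of faces are faces, provided each $k$-face is a simplex. However, the summed-witness argument is cleaner and avoids that extra check, so I will use it.
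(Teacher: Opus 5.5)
Your proposal is correct and follows the paper's proof in both directions. In the ($\Leftarrow$) direction you take the polytope's vertices as the rows of $W$ and the face witnesses as embeddings; in the ($\Rightarrow$) direction you instantiate the maximal family and invoke the pre-lemma argument that every row is a vertex.

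Your extra step for subsets $A$ with $|A|<k$ is more careful than the paper, whose proof jumps from ``every size-$k$ subset is a face'' straight to $k$-neighborliness. It is exactly the summed-witness trick of the paper's footnote. You can also drop the added hypothesis $d\geq 2k-|A|$. For each $j\notin A$, pick a size-$k$ set $S^{(j)}\supseteq A$ that omits $j$, and sum the witnesses $e_{S^{(j)}}$ over all such $j$. This only needs the $d\geq k+1$ already used in the vertex argument.
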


\begin{proof}
($\Rightarrow$) Suppose the model can represent every such family.
Instantiate this hypothesis at the maximal family, the one whose contexts enumerate all $\binom{d}{k}$ size-$k$ supports of $[d]$; this is the step that forces a single polytope to serve every support, as different families may otherwise be represented by different weights.
Let $W$ and $\{e_S\}$ be parameters realizing that representation.
By Eq.~\ref{eq:argmax-condition}, for every size-$k$ subset $S \subseteq [d]$ the vectors $\{w_j\}_{j \in S}$ form a face of $\mathrm{conv}(\{w_j\}_{j \in [d]})$ with witness $e_S$ and value $a_S$, and each of the $d$ rows is a vertex as observed above.
The polytope $\mathrm{conv}(\{w_j\}_{j \in [d]}) \subset \mathbb{R}^h$ therefore has $d$ vertices and every size-$k$ subset of them as a face, that is, it is $k$-neighborly.

($\Leftarrow$) Conversely, let $P \subset \mathbb{R}^h$ be a $k$-neighborly polytope with vertices $\{w_j\}_{j \in [d]}$.
For every size-$k$ subset $S$, pick $(e_S, a_S) \in \mathbb{R}^h \times \mathbb{R}$ witnessing that $\{w_j\}_{j \in S}$ is a face, that is, satisfying Eq.~\ref{eq:face-def}.
Take as weight matrix the $W$ whose rows are the $\{w_j\}$, associate to each context of the family the embedding $e_{S}$ of its support, and let $\beta \to \infty$.
The logits $\beta\, w_j^\top e_S$ take the common value $\beta a_S$ on $S$ and strictly smaller values elsewhere, so the softmax output converges to the uniform distribution on $S$.
The model therefore represents the family.
\end{proof}

We can now prove our main theorem by combining Lemma~\ref{lem:reduction} with the two polytope results of Section~\ref{subsec:polytope-primer}.

\begin{proof}[Proof of Theorem~\ref{thm:embedding}]
($\Leftarrow$) Suppose $h \geq 2k$.
Since $d \geq h+1$, the cyclic polytope $C(d, h)$ is well defined, and Proposition~\ref{prop:cyclic} gives that it is $\lfloor h/2\rfloor$-neighborly, and therefore $k$-neighborly since $k \leq h/2$.
Lemma~\ref{lem:reduction} then implies that the model represents every family of uniform distributions on size-$k$ supports of $[d]$.

($\Rightarrow$) Suppose $h < 2k$, that is, $k > \lfloor h/2 \rfloor$.
Since $d \geq 2k+2 > h+1$, Proposition~\ref{thm:ubt} tells us that no polytope in $\mathbb{R}^h$ with $d$ vertices is $k$-neighborly.
By Lemma~\ref{lem:reduction}, the model cannot represent every family of uniform distributions on size-$k$ supports of $[d]$.
\end{proof}

\subsection{A robust version of the representation result}
\label{subsec:embedding-robust}

One of the main assumptions behind Theorem~\ref{thm:embedding} is that there is no constraint on the choice of context embeddings: we implicitly assume that the backbone can produce any desired embedding, so that each context can be tuned arbitrarily finely to the support it encodes.
In practice this does not hold, and small deviations of the backbone output from the ideal embedding should not destroy the ability of the model to reconstruct the target output distribution.

In this subsection, we complement Theorem~\ref{thm:embedding} by considering adversarial perturbations, of bounded norm~$\tau$, of a shared embedding per support.
Since the representation can no longer be exact as the logits on support will generally differ under any nonzero perturbation, we also relax the quality criterion and allow for a small KL slack~$\varepsilon$.
The embeddings are taken of unit norm and the rows of the output weight matrix of norm at most one, which makes~$\tau$ a scale-free measure of the precision required of the backbone.

The construction behind Theorem~\ref{thm:embedding} does not survive this relaxation: it turns every support into a face of a single polytope only by making those faces nearly degenerate, as the witnessing polynomial of Eq.~\ref{eq:r-construction} has a double root at each on-support vertex and therefore separates a support from its complement by a margin that shrinks very quickly as the support size grows.
In the exact setting this costs nothing, since an arbitrarily large inverse temperature compensates for any positive margin, but under perturbations the inverse temperature can no longer be taken arbitrarily large: past a certain point it amplifies the perturbation enough to spread the on-support logits apart.
Random output weights instead give margins of order $1/\sqrt{k}$ uniformly over all supports; we use them to build a robust representation and quantify the embedding dimension required to do so.

\paragraph{Proof strategy.} The argument proceeds in four steps.
\begin{enumerate}
    \item We introduce the notion of $(\varepsilon, \tau)$-robust representation (Definition~\ref{def:robust-repr}), which asks a single embedding per support to stay within~$\varepsilon$ of the target distribution in KL divergence, uniformly over perturbations of norm at most~$\tau$.
    \item We relate the KL divergence, the quantity we ultimately care about, to two geometric properties of the polytope formed by the rows of~$W$: the margin separating the on-support rows from the off-support ones, and the variation of the logits within the support (Eq.~\ref{eqn:kl-margin}). Those are the same quantities we used in the capacity argument of Section~\ref{app:capacity} and this will enable us to reuse some of the results from that section.
    If the margin is large enough and the on-support variation small enough, then, after adjusting the temperature~$\beta$, the predicted distribution looks like the target one.
    \item We exhibit weights and embeddings meeting these geometric conditions: we draw the output weights at random and define the embedding of each support as the solution of a minimax program tailored to that draw (Eq.~\ref{eqn:minimax-witness}).
    Lemma~\ref{lem:nsp-margin} shows, through duality, that the null-space property of compressed sensing implies zero on-support variation and a margin of order $1/\sqrt{k}$, uniformly over all $\binom{d}{k}$ supports, and Proposition~\ref{prop:random-nsp} recalls that random weights satisfy this property with high probability as soon as $h \gtrsim k \log(d/k)$.
    \item We tie everything back together in the proof of Theorem~\ref{thm:robust-embedding}, where the temperature~$\beta$ is traded off against the perturbation radius~$\tau$.
\end{enumerate}

\textbf{Robust representation.} Let us first formally introduce the notion of robust representation, which is one generalization of the notion of representation that we have been dealing with so far.
\begin{definition}[$(\varepsilon, \tau)$-robust representation]
    \label{def:robust-repr}
    A family of target distributions $\{p_S\}$, with $S$ ranging over size-$k$ supports of $[d]$, admits an \emph{$(\varepsilon, \tau)$-robust representation} in $\mathbb{R}^h$ if there exist weights $W \in \mathbb{R}^{d \times h}$ with $\lVert w_j\rVert \leq 1$ for every $j$, an inverse temperature $\beta > 0$, and unit-norm centers $\{e_S \in \mathbb{R}^h\}$, $\lVert e_S \rVert = 1$, such that, for every size-$k$ subset $S$ and every $\xi \in \mathbb{R}^h$ with $\lVert \xi\rVert \leq \tau$,
    \begin{equation}
        D_{\mathrm{KL}}\!\left(p_S \,\middle\|\, \mathrm{softmax}\bigl(\beta\, W(e_S + \xi)\bigr)\right) \leq \varepsilon.
    \end{equation}
\end{definition}

In contrast with the limiting setup of Theorem~\ref{thm:embedding}, the inverse temperature~$\beta$ needs to be finite: allowing $\beta \to \infty$ together with a nonzero perturbation of the embedding would blow up the KL.
Both normalizations, $\lVert w_j\rVert \leq 1$ and $\lVert e_S\rVert = 1$, are needed for the definition to have content.
Indeed, the reparametrization $(\beta, e_S, \xi) \mapsto (\beta/\lambda,\, \lambda e_S,\, \lambda \xi)$ leaves every logit unchanged, so without a constraint on the centers one could inflate them and make any fixed radius~$\tau$ negligible in comparison; every exact representation in the sense of Theorem~\ref{thm:embedding} would then be $(\varepsilon, \tau)$-robust for every~$\tau$, and the notion would be vacuous.
Fixing $\lVert e_S\rVert = 1$ quotients out this scale ambiguity and makes $\tau$ a relative precision: it is the fraction of the embedding norm that the backbone is allowed to get wrong.
The normalization $\lVert w_j\rVert \leq 1$ plays the analogous role on the weight side, and gives that a perturbation of magnitude~$\tau$ in embedding space translates to a logit perturbation of magnitude at most~$\beta \tau$ by Cauchy--Schwarz.

\paragraph{From margin and variation to KL.} Our first step is to reduce the robust-representation condition to two geometric quantities on the weight polytope.
For a support $S$ of size $k$, a weight matrix~$W$, and an embedding~$e$, define the \emph{geometric margin} and \emph{geometric on-support variation}
\begin{equation}
    \label{eqn:geom-mu-nu}
    \mu(S, e) := \min_{j \in S,\, j' \notin S} \bigl(w_j^\top e - w_{j'}^\top e\bigr),
    \qquad
    \nu(S, e) := \max_{j, j' \in S} \bigl|w_j^\top e - w_{j'}^\top e\bigr|.
\end{equation}
An exact witness to $S$ being a face corresponds to $\mu(S, e) > 0$ and $\nu(S, e) = 0$.
Note that we overload notation with the logit-level quantities of Lemma~\ref{lem:KL-control}: for $\ell = \beta W e$, one has $\nu(S, \ell) = \beta\,\nu(S, e)$ (exactly) and $\mu(S, \ell) \geq \beta\,\mu(S, e)$, the inequality coming from the fact that the geometric margin of Eq.~\ref{eqn:geom-mu-nu} is defined with a minimum over the support whereas the logit-level margin of Eq.~\ref{eqn:mu-def} is defined with the on-support mean, which is at least that minimum.
Substituting into Eq.~\ref{eqn:kl-quadratic-bound} gives, for any $e$ such that $\beta\,\nu(S, e) \leq 1$,
\begin{equation}
    \label{eqn:kl-margin}
    D_{\mathrm{KL}}\!\left(p_S \,\middle\|\, \mathrm{softmax}(\beta W e)\right)
    \;\leq\; \mathrm{e}\,\bigl(\beta\,\nu(S, e)\bigr)^2 \;+\; \frac{d - k}{k}\, e^{-\beta\, \mu(S, e)}.
\end{equation}
Eq.~\ref{eqn:kl-margin} thus reduces the robust-representation problem to a quantitative-neighborliness question: we need a polytope in which every size-$k$ subset of vertices is a face with a margin that is large compared to the perturbation radius.

\paragraph{A random-weight construction with uniformly positive margin.} We draw $g_1, \dots, g_d$ iid from $\mathcal{N}(0, I_h/h)$, collect them as the rows of $G \in \mathbb{R}^{d \times h}$, and set $W := G/\max_j \lVert g_j\rVert$ so that $\lVert w_j \rVert \leq 1$ as Definition~\ref{def:robust-repr} requires.
For each size-$k$ subset $S \subseteq [d]$, we use as center the normalized \emph{regularized minimax witness} $e_S / \lVert e_S\rVert$, where
\begin{equation}
    \label{eqn:minimax-witness}
    e_S \in \arg\min_{e \in \mathbb{R}^h}\;\Bigl\{\max_{j' \notin S}\, \bigl|w_{j'}^\top e\bigr| \;+\; \tfrac{\kappa}{\sqrt{k}}\lVert e\rVert \Bigr\}\;\; \text{s.t.}\;\; w_j^\top e = \tfrac{1}{\sqrt{k}} \text{ for every } j \in S,
\end{equation}
where $\kappa > 0$ is a parameter of our construction, which we will set in Lemma~\ref{lem:nsp-margin} below as a function of the geometry of the draw of~$W$.
Without the penalty, this program simply maximizes the separation between the support and its complement; the penalty is what will additionally give us control on $\lVert e_S\rVert$, which we need because normalizing a center to unit norm divides its margin by $\lVert e_S\rVert$.
The on-support target $\tfrac{1}{\sqrt{k}}$ is the natural level for a unit-norm center, since $e = \sum_{j \in S} w_j / \lVert \sum_{j \in S} w_j\rVert$ already achieves $w_j^\top e \approx \tfrac{1}{\sqrt{k}}$ on the support for near-orthogonal weights.
By construction, all on-support logits are equal to $\tfrac{1}{\sqrt{k}}$, so $\nu(S, e_S) = 0$, and $\mu(S, e_S) \geq \tfrac{1}{\sqrt{k}} - \eta_S$ where
\begin{equation}
    \label{eqn:eta-S}
    \eta_S := \min_{e \in \mathbb{R}^h}\; \Bigl\{\max_{j' \notin S}\,|w_{j'}^\top e| + \tfrac{\kappa}{\sqrt{k}}\lVert e\rVert\Bigr\} \quad \text{s.t.} \quad w_j^\top e = \tfrac{1}{\sqrt{k}} \ \text{ for every } j \in S
\end{equation}
is the optimal value of the program, whose dependence on the fixed~$\kappa$ we leave implicit.
Since the two terms of the objective are nonnegative, a bound on~$\eta_S$ controls the off-support alignment and the norm of the witness simultaneously,
\begin{equation}
    \label{eqn:two-for-one}
    \max_{j' \notin S}\,|w_{j'}^\top e_S| \;\leq\; \eta_S, \qquad \lVert e_S\rVert \;\leq\; \frac{\sqrt{k}}{\kappa}\,\eta_S,
\end{equation}
which is precisely why the two requirements can be handled by a single argument.
A naive pointwise union bound on $\eta_S$ over the $\binom{d}{k}$ supports is wasteful: the $e_S$ corresponding to overlapping supports are tightly coupled through~$W$.
The null-space property exploits this coupling by treating the kernel of~$W^\top$ as a single object.

\begin{definition}[Robust null-space property]
    \label{def:nsp}
    The matrix $W$ satisfies the \emph{$\ell_2$-robust null-space property} at sparsity~$k$ with constants $\gamma \in (0, 1)$ and $\chi > 0$ if, for every $v \in \mathbb{R}^d$ and every size-$k$ or smaller subset $S \subseteq [d]$,
    \begin{equation}
        \label{eqn:nsp}
        \lVert v_S\rVert_2 \;\leq\; \frac{\gamma}{\sqrt{k}}\, \lVert v_{S^c}\rVert_1 \;+\; \chi\,\lVert W^\top v\rVert_2 .
    \end{equation}
\end{definition}

The following deterministic lemma is the bridge from the null-space property, a statement about the kernel of~$W^\top$, to our geometric quantities on the row polytope of~$W$.
It goes through the dual of the minimax-witness program of Eq.~\ref{eqn:eta-S}.
Recall that this program, the \emph{primal}, is a minimization over embeddings $e \in \mathbb{R}^h$ subject to the $k$ on-support constraints, and that its \emph{dual} is a maximization over one multiplier per such constraint, $\lambda \in \mathbb{R}^k$, whose value never exceeds the primal one; \emph{strong duality}, which holds here since the constraints are affine, means that the two optimal values coincide, so that $\eta_S$ can be computed on whichever side is more convenient.
The dual side is the convenient one here: solving the primal directly would require handling the $\binom{d}{k}$ supports separately, whereas the dual expresses~$\eta_S$ in terms of vectors that almost lie in $\ker(W^\top)$, and a single property of that kernel then covers all supports at once.

\begin{lemma}[Null-space property implies uniform margin and bounded witness]
    \label{lem:nsp-margin}
    Assume $W$ satisfies the $\ell_2$-robust null-space property at sparsity~$k$ with constants $\gamma < 1$ and $\chi > 0$, and set $\kappa := \tfrac{1 - \gamma}{2\chi}$ in Eq.~\ref{eqn:minimax-witness}.
    Then, for every size-$k$ subset $S \subseteq [d]$, the regularized minimax witness $e_S$ exists, satisfies $\lVert e_S\rVert \leq \tfrac{(1+\gamma)\chi}{1-\gamma}$, and its normalization $\hat e_S := e_S/\lVert e_S\rVert$ satisfies
    \begin{equation}
        \label{eqn:lemma-margin}
        \nu(S, \hat e_S) = 0, \qquad \mu(S, \hat e_S) \;\geq\; \frac{c_{\gamma, \chi}}{\sqrt{k}} \quad \text{with} \quad c_{\gamma, \chi} := \frac{(1-\gamma)^2}{2\,(1+\gamma)\,\chi} .
    \end{equation}
\end{lemma}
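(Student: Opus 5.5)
The plan is to pass through the Lagrangian dual of the program defining $\eta_S$ in Eq.~\ref{eqn:eta-S}. The dual turns $\eta_S$ into a supremum over vectors that almost lie in $\ker(W^\top)$, and the robust null-space property of Definition~\ref{def:nsp} then bounds all of them at once. Once we have a bound of the form $\eta_S \leq \frac{1+\gamma}{2\sqrt{k}}$, the rest is bookkeeping through Eq.~\ref{eqn:two-for-one}.

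\emph{Existence and feasibility.} Write $W_S$ for the $k\times h$ submatrix of rows indexed by~$S$. Applying Eq.~\ref{eqn:nsp} to a vector $v$ supported on~$S$ gives $\lVert v_S\rVert_2 \leq \chi\lVert W_S^\top v_S\rVert_2$. Hence $W_S^\top$ is injective, $W_S$ has full row rank, and the affine constraint set $\{e : W_S e = \tfrac{1}{\sqrt{k}}\mathbf{1}\}$ is nonempty. The objective is convex, continuous and coercive because of the $\tfrac{\kappa}{\sqrt{k}}\lVert e\rVert$ term, so a minimizer $e_S$ exists. It is nonzero, since $e=0$ violates the constraints, so the normalization $\hat e_S$ is well defined. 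The constraints are affine and feasible, so strong duality holds.

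\emph{Dual computation.} Introduce a multiplier $\lambda\in\mathbb{R}^k$ for the constraints. Write the first term of the objective as $\max_{\lVert u\rVert_1\leq 1} u^\top W_{S^c} e$ and the second as $\tfrac{\kappa}{\sqrt{k}}\max_{\lVert z\rVert_2\leq 1} z^\top e$. Minimizing the Lagrangian over~$e$ then yields
\begin{equation*}
\eta_S=\max\Bigl\{\tfrac{1}{\sqrt{k}}\textstyle\sum_{j\in S}\lambda_j \;:\; \exists\,u,\ \lVert u\rVert_1\leq 1,\ \bigl\lVert W_S^\top\lambda - W_{S^c}^\top u\bigr\rVert_2\leq \tfrac{\kappa}{\sqrt{k}}\Bigr\}.
\end{equation*}
Take any dual-feasible pair and set $v:=(\lambda \text{ on } S,\ -u \text{ on } S^c)\in\mathbb{R}^d$. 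Then $\lVert W^\top v\rVert_2\leq \kappa/\sqrt{k}$ and $\lVert v_{S^c}\rVert_1\leq 1$. The null-space property gives
\begin{equation*}
\lVert\lambda\rVert_2\leq\frac{\gamma+\chi\kappa}{\sqrt{k}}=\frac{1+\gamma}{2\sqrt{k}}
\end{equation*}
by the choice $\kappa=\tfrac{1-\gamma}{2\chi}$. Cauchy--Schwarz then gives $\sum_j\lambda_j\leq\sqrt{k}\lVert\lambda\rVert_2$, so $\eta_S\leq\frac{1+\gamma}{2\sqrt{k}}$.

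\emph{Conclusion.} By Eq.~\ref{eqn:two-for-one}, $\lVert e_S\rVert\leq\frac{\sqrt{k}}{\kappa}\eta_S\leq\frac{1+\gamma}{2\kappa}=\frac{(1+\gamma)\chi}{1-\gamma}$. All on-support values of $e_S$ equal $1/\sqrt{k}$, so $\nu(S,e_S)=0$. Every off-support value is at most $\eta_S$, so $\mu(S,e_S)\geq \frac{1}{\sqrt{k}}-\eta_S\geq\frac{1-\gamma}{2\sqrt{k}}$. Both $\nu$ and $\mu$ scale linearly under $e\mapsto e/\lVert e\rVert$. Therefore $\nu(S,\hat e_S)=0$ and
\begin{equation*}
\mu(S,\hat e_S)\geq\frac{1-\gamma}{2\sqrt{k}}\cdot\frac{1-\gamma}{(1+\gamma)\chi}=\frac{c_{\gamma,\chi}}{\sqrt{k}}.
\end{equation*}

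The main obstacle is the dual derivation: getting the dual-norm pairing right, namely $\ell_\infty$ with $\ell_1$ for the off-support term and $\ell_2$ with $\ell_2$ for the penalty, and checking that the resulting feasibility condition is exactly the approximate-kernel condition the null-space property consumes. Note that only weak duality ($\eta_S\leq$ the dual value) is actually needed for the bound, so I would rely on Slater's condition only as a safeguard.
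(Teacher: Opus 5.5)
Your proposal is correct and follows essentially the paper's own route. You dualize the witness program and identify dual feasibility as the near-kernel condition $\lVert W_S^\top\lambda - W_{S^c}^\top u\rVert_2 \leq \kappa/\sqrt{k}$ with $\lVert u\rVert_1 \leq 1$; the paper phrases this as $W_S^\top\lambda \in \partial f(0)$. You then glue $(\lambda,-u)$ into one vector, apply the robust null-space property with Cauchy--Schwarz, and finish with the same bookkeeping through Eq.~\ref{eqn:two-for-one}. Because you bound every dual-feasible pair, you even avoid the dual attainment that the paper tacitly uses.

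Your closing remark, however, has the direction of duality reversed. Weak duality gives (dual value) $\leq \eta_S$, which cannot yield an upper bound on $\eta_S$. The zero duality gap is therefore load-bearing, not a safeguard; you do justify it correctly from the finite convex objective and the feasible affine constraints.
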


\begin{proof}
    Fix~$S$. The proof proceeds in three steps: we first check that the witness is well defined, then reduce the claim to a bound on the optimal value~$\eta_S$, and finally obtain that bound by dualizing the program.

    \medskip
    \noindent\textbf{Step 1: the witness exists.}
    Let $v \in \mathbb{R}^d$ be supported on~$S$ and such that $W^\top v = \sum_{j \in S} v_j w_j = 0$.
    Applying Eq.~\ref{eqn:nsp} to this~$v$, whose two right-hand-side terms both vanish, gives $\lVert v_S\rVert_2 \leq 0$, hence $v = 0$: no nontrivial linear combination of the on-support rows vanishes, that is, $\{w_j\}_{j \in S}$ are linearly independent.
    The $k$ constraints of Eq.~\ref{eqn:minimax-witness} are therefore feasible.
    The objective is moreover continuous and diverges as $\lVert e \rVert \to \infty$, because of the penalty term, so the infimum is attained on the (closed) feasible set and $e_S$ is well defined.

    \medskip
    \noindent\textbf{Step 2: reduction to a bound on $\eta_S$.}
    The constraints force all on-support inner products to equal $\tfrac{1}{\sqrt{k}}$, so $\nu(S, e_S) = 0$ exactly, and $\mu(S, e_S) \geq \tfrac{1}{\sqrt{k}} - \eta_S$ as noted above.
    Both $\mu$ and $\nu$ are $1$-homogeneous in~$e$, so normalizing does not change the first equality, $\nu(S, \hat e_S) = 0$, and turns the second bound into $\mu(S, \hat e_S) = \mu(S, e_S)/\lVert e_S\rVert$.
    Establishing Eq.~\ref{eqn:lemma-margin} thus only requires a lower bound on $\mu(S, e_S)$ and an upper bound on $\lVert e_S \rVert$, and by Eq.~\ref{eqn:two-for-one} both follow from the single bound $\eta_S \leq \tfrac{1 + \gamma}{2\sqrt{k}}$ that we prove next: it gives $\mu(S, e_S) \geq \tfrac{1-\gamma}{2\sqrt{k}}$ and $\lVert e_S\rVert \leq \tfrac{1+\gamma}{2\kappa} = \tfrac{(1+\gamma)\chi}{1-\gamma}$, whose ratio is $c_{\gamma, \chi}/\sqrt{k}$.

    \medskip
    \noindent\textbf{Step 3: bounding $\eta_S$ by duality.}
    Write the objective of Eq.~\ref{eqn:minimax-witness} as $f(e) := \max_{j' \notin S}|w_{j'}^\top e| + \tfrac{\kappa}{\sqrt{k}}\lVert e\rVert$ and its constraints as $W_S e = \tfrac{1}{\sqrt{k}}\mathbf{1}$, where $W_S$ collects the rows indexed by~$S$.
    Dualizing the equality constraints with multipliers $\lambda \in \mathbb{R}^k$ gives 
    \begin{equation}
        \eta_S = \max_\lambda \left [\frac{1}{\sqrt{k}}\mathbf{1}^\top\lambda + \min_e \left (f(e) - (W_S^\top\lambda)^\top e \right ) \right ],
    \end{equation}
    where strong duality holds because the constraints are affine and the primal is feasible with finite value.
    As $f$ is convex and positively homogeneous, meaning that $f(te) = t f(e)$ for every $t \geq 0$, the inner minimum is~$0$ if $W_S^\top\lambda$ belongs to the subdifferential of~$f$ at the origin, $\partial f(0) := \{u \in \mathbb{R}^h : u^\top e \leq f(e) \text{ for every } e\}$, and $-\infty$ otherwise: the dual is thus a maximization of the linear objective $\tfrac{1}{\sqrt{k}}\mathbf{1}^\top \lambda$ over the multipliers whose image $W_S^\top \lambda$ falls in that set.
    Here $\partial f(0) = \{\sum_{j' \notin S}a_{j'}w_{j'} + z : \lVert a\rVert_1 \leq 1,\, \lVert z\rVert_2 \leq \tfrac{\kappa}{\sqrt{k}}\}$ is the sum of the convex hull of $\{\pm w_{j'}\}_{j' \notin S}$, which comes from the maximum in~$f$, and of the ball of radius $\tfrac{\kappa}{\sqrt{k}}$, which comes from the penalty.
    An optimal~$\lambda$ therefore comes with such an $(a, z)$, and gluing the two together into the single vector $v \in \mathbb{R}^d$ defined by $v_S := \lambda$ and $v_{S^c} := -a$ makes $W^\top v = z$ small,
    \begin{equation}
        \label{eqn:dual-certificate}
        \eta_S \;=\; \tfrac{1}{\sqrt{k}}\,\mathbf{1}^\top v_S, \qquad \lVert v_{S^c}\rVert_1 \leq 1, \qquad \lVert W^\top v\rVert_2 \leq \tfrac{\kappa}{\sqrt{k}} .
    \end{equation}
    In other words, the optimal value is attained by a vector that is nearly, but not exactly, in $\ker(W^\top)$; penalizing $\lVert e\rVert$ in the primal is exactly what relaxes $v \in \ker(W^\top)$ into this near-kernel condition, and this is the reason we need the robust version of the null-space property rather than its classical exact-kernel form.
    Eq.~\ref{eqn:dual-certificate} is now precisely of the shape that Eq.~\ref{eqn:nsp} controls: bounding $\mathbf{1}^\top v_S \leq \sqrt{k}\lVert v_S\rVert_2$ by Cauchy--Schwarz and applying the null-space property to~$v$ yields
    \begin{equation}
        \eta_S \;\leq\; \lVert v_S\rVert_2 \;\leq\; \frac{\gamma}{\sqrt k} + \frac{\chi \kappa}{\sqrt k} \;=\; \frac{1+\gamma}{2\sqrt{k}}
    \end{equation}
    by definition of~$\kappa$, which is the bound used in Step~2 and concludes the proof.
\end{proof}

It remains to check that our random weights satisfy the robust null-space property with a constant $\gamma$ bounded away from one and a constant $\chi$ bounded above.
This is a standard result in compressed sensing, and we obtain it by chaining two textbook results applied to $G^\top \in \mathbb{R}^{h \times d}$, viewed as a measurement matrix taking $h$ linear measurements of vectors in~$\mathbb{R}^d$: subgaussian matrices satisfy the restricted isometry property as soon as the number of measurements exceeds $k\log(\mathrm{e}d/k)$ up to a constant, and a small restricted isometry constant implies the robust null-space property.

\begin{proposition}[Random weights satisfy the null-space property]
    \label{prop:random-nsp}
    There exist universal constants $C > 0$, $\gamma_0 \in (0,1)$ and $\chi_0 > 0$ such that the following holds.
    For any $\delta \in (0, 1)$, if
    \begin{equation}
        \label{eqn:h-nsp-condition}
        h \;\geq\; C\,k\,\log(\mathrm{e}d/k) + C\log(1/\delta),
    \end{equation}
    then the matrix $W$ constructed above satisfies the $\ell_2$-robust null-space property of Definition~\ref{def:nsp} at sparsity~$k$ with constants $\gamma = \gamma_0$ and $\chi = \chi_0$, with probability at least $1 - \delta$.
\end{proposition}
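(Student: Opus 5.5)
The plan is to chain two textbook compressed-sensing facts, as announced just before the statement, and then handle the normalization $W = G/\max_j\lVert g_j\rVert$. First, view $A := G^\top \in \mathbb{R}^{h\times d}$ as a measurement matrix with iid $\mathcal{N}(0,1/h)$ entries. It is therefore (up to the $1/h$ scaling, which is exactly the standard normalization $\mathbb{E}\lVert Ax\rVert^2 = \lVert x\rVert^2$) a subgaussian matrix. The standard RIP theorem (e.g.\ Foucart--Rauhut, Thm.~9.2) states that if $h \geq C\delta_*^{-2}(s\log(\mathrm{e}d/s) + \log(2/\delta))$, then with probability at least $1-\delta$ the restricted isometry constant satisfies $\delta_s(A) \leq \delta_*$. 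I will apply this with sparsity $s = 2k$ and a fixed universal $\delta_*$, say $\delta_* = 1/4$. Since $2k\log(\mathrm{e}d/(2k)) \leq 2k\log(\mathrm{e}d/k)$, this is implied by Eq.~\ref{eqn:h-nsp-condition} after enlarging $C$.

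Second, I will invoke the deterministic implication ``small $\delta_{2k}$ implies the $\ell_2$-robust null-space property'' (Foucart--Rauhut, Thm.~6.13). If $\delta_{2k}(A) < 4/\sqrt{41}$, then for all $v$ and all $|S|\leq k$,
\[
\lVert v_S\rVert_2 \leq \frac{\gamma'}{\sqrt{k}}\lVert v_{S^c}\rVert_1 + \chi'\lVert Av\rVert_2,
\]
with $\gamma',\chi'$ depending only on $\delta_{2k}$. With $\delta_{2k}\leq 1/4$ these become universal constants, with $\gamma'<1$. If one prefers a self-contained argument, this step is the usual block decomposition of $S^c$ into size-$k$ blocks ordered by magnitude, combined with the RIP near-orthogonality of disjoint blocks; I would only cite it.

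Third, I will pass from $G$ to $W$. We have $W^\top v = A v/M$ with $M := \max_j\lVert g_j\rVert$, so $\lVert Av\rVert_2 = M\lVert W^\top v\rVert_2$. The null-space property for $W$ therefore holds with $\gamma = \gamma'$ and $\chi = \chi' M$, and I only need $M$ bounded by a universal constant with high probability. Each $\lVert g_j\rVert$ is $1$-Lipschitz in a standard Gaussian scaled by $1/\sqrt h$, so Gaussian concentration gives $\mathbb{P}(\lVert g_j\rVert \geq 1 + t) \leq e^{-ht^2/2}$. A union bound over $d$ rows with $t=1$ yields $M\leq 2$ with probability at least $1 - d e^{-h/2}$. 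This is at most $\delta/2$ provided $h \geq 2\log(2d/\delta)$. That condition is implied by Eq.~\ref{eqn:h-nsp-condition}, since $k\log(\mathrm{e}d/k)\geq \log(\mathrm{e}d)\ge \log d$ for $k\le d$. I will allocate $\delta/2$ to each event and take $\gamma_0 = \gamma'$, $\chi_0 = 2\chi'$.

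The main obstacle is bookkeeping rather than a conceptual difficulty. It consists of matching the exact constants and hypotheses of the cited RIP and RIP-to-robust-NSP theorems: the $2k$ versus $k$ sparsity, the precise threshold on $\delta_{2k}$, and the Gaussian scaling. It also requires checking that the normalization by $\max_j\lVert g_j\rVert$, which is a data-dependent scalar, only rescales $\chi$ and leaves $\gamma$ untouched. That property is what makes the final constants universal.
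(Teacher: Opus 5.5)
Your proposal is correct and follows the paper's proof step for step. It applies Foucart--Rauhut Theorem~9.2 to $G^\top$ at sparsity $2k$, then Theorem~6.13 to get the $\ell_2$-robust null-space property with universal constants, and finally bounds $\max_j\lVert g_j\rVert \leq 2$ by Gaussian norm concentration, using $k\log(\mathrm{e}d/k)\geq\log(\mathrm{e}d)$, so that the rescaling only turns $\chi$ into $2\chi$ while splitting $\delta$ in half. Your choice of threshold $\delta_{2k}\leq 1/4$ instead of the paper's $3/8$ is immaterial.
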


\begin{proof}
    The matrix $G^\top$ is a subgaussian random matrix with $h$ rows and $d$ columns, normalized as in \citet[Definition 9.1]{foucart_mathematical_2013}.
    Under Eq.~\ref{eqn:h-nsp-condition} with $C$ large enough, \citet[Theorem 9.2]{foucart_mathematical_2013} applied at sparsity $2k$ gives that its restricted isometry constant satisfies $\delta_{2k} \leq 3/8 < 4/\sqrt{41}$ with probability at least $1 - \delta/2$.
    On that event, \citet[Theorem 6.13]{foucart_mathematical_2013} yields the $\ell_2$-robust null-space property of order~$k$ for~$G$, with constants $\gamma_0 < 1$ and $\chi_1$ that depend only on the numerical value $3/8$ bounding~$\delta_{2k}$ and are therefore universal.
    Rescaling $G$ into $W = G/\zeta$ with $\zeta := \max_j\lVert g_j\rVert$ leaves $\gamma_0$ and $\ker(W^\top) = \ker(G^\top)$ unchanged and only replaces $\chi_1$ by $\zeta\,\chi_1$ in Eq.~\ref{eqn:nsp}.
    The factor $\zeta$ is random, but Eq.~\ref{eqn:nsp} only becomes weaker as its constant grows, so it suffices to bound~$\zeta$ on a high-probability event.
    Each $\lVert g_j\rVert$ concentrates around~$1$, and, since $k\log(\mathrm{e}d/k) \geq \log(\mathrm{e}d)$ makes Eq.~\ref{eqn:h-nsp-condition} imply $h \geq C\log(d/\delta)$, Gaussian norm concentration and a union bound over the $d$ rows give $\zeta \leq 2$ with probability at least $1 - \delta/2$ for $C$ large enough.
    On the intersection of the two events, of probability at least $1 - \delta$, the property therefore holds with the universal constants $\gamma_0$ and $\chi_0 := 2\chi_1$.
\end{proof}

\paragraph{Putting everything together.} We now have all the ingredients to prove the main result of this subsection: Eq.~\ref{eqn:kl-margin} turns the KL divergence we want to bound into a requirement on the margin and on-support variation of the embeddings, Proposition~\ref{prop:random-nsp} gives that random output weights satisfy the null-space property once $h \gtrsim k\log(d/k)$, and Lemma~\ref{lem:nsp-margin} converts that property into the uniform margin our embeddings need.
The proof below combines the three, the only remaining work being the choice of the temperature~$\beta$, which must be large enough for the off-support mass to be negligible yet small enough for the perturbation not to spread the on-support logits.

\setcounter{tempthmsave}{\value{theorem}}
\setcounter{theorem}{\value{robustinformalnum}}
\addtocounter{theorem}{-1}

\begin{theorem}[Robust representation via random weights]
    \label{thm:robust-embedding}
    There exist universal constants $C, c > 0$ such that the following holds.
    For every $k, d$ with $d \geq 2k+2$, every $\varepsilon, \delta \in (0, 1)$, and every
    \begin{equation}
        \label{eqn:tau-condition}
        \tau \;\leq\; \frac{c\,\sqrt{\varepsilon}}{\sqrt{k}\,\log\!\bigl(d/(k\varepsilon)\bigr)},
    \end{equation}
    the family of uniform distributions on size-$k$ supports of $[d]$ admits an $(\varepsilon, \tau)$-robust representation in $\mathbb{R}^h$ with probability at least $1 - \delta$ over the draw of the random output weights, provided
    \begin{equation}
        \label{eqn:h-robust-condition}
        h \;\geq\; C\,k\,\log(ed/k) + C\log(1/\delta).
    \end{equation}
\end{theorem}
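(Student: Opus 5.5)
The plan is to combine the three ingredients already assembled: Proposition~\ref{prop:random-nsp} to get the robust null-space property for the random $W$, Lemma~\ref{lem:nsp-margin} to turn it into centers $\hat e_S$ with zero variation and margin at least $c_{\gamma_0,\chi_0}/\sqrt{k}$, and Eq.~\ref{eqn:kl-margin} to convert margin and variation of the \emph{perturbed} embedding into a KL bound. The remaining work is to check how a perturbation $\xi$ with $\lVert \xi\rVert\le\tau$ degrades these geometric quantities, and then to choose $\beta$.

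First, under Eq.~\ref{eqn:h-robust-condition} with $C$ at least the constant of Proposition~\ref{prop:random-nsp}, with probability at least $1-\delta$ the matrix $W$ (rows of norm at most one) satisfies the null-space property with universal $\gamma_0<1$ and $\chi_0$. On this event, Lemma~\ref{lem:nsp-margin} provides unit-norm centers $\hat e_S$ with $\nu(S,\hat e_S)=0$ and $\mu(S,\hat e_S)\ge c_0/\sqrt{k}$, where $c_0 := c_{\gamma_0,\chi_0}$ is universal. Next, by Cauchy--Schwarz and $\lVert w_j\rVert\le 1$, each geometric logit $w_j^\top(\hat e_S+\xi)$ moves by at most $\tau$. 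Hence
\begin{equation}
\nu(S,\hat e_S+\xi)\le 2\tau, \qquad \mu(S,\hat e_S+\xi)\ge \frac{c_0}{\sqrt{k}}-2\tau \ge \frac{c_0}{2\sqrt{k}},
\end{equation}
where the last inequality uses $\tau\le c_0/(4\sqrt{k})$. This is implied by Eq.~\ref{eqn:tau-condition} for $c$ small, since $\sqrt{\varepsilon}\le 1$ and $\log(d/(k\varepsilon))\ge\log 2$ because $d\ge 2k+2$.

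Then I will choose $\beta := \tfrac{2\sqrt{k}}{c_0}\log\tfrac{2(d-k)}{k\varepsilon}$, so that Eq.~\ref{eqn:kl-margin} gives an off-support term of at most $\tfrac{d-k}{k}e^{-\beta c_0/(2\sqrt{k})}=\varepsilon/2$. The on-support term is $\mathrm{e}(2\beta\tau)^2$, and the hypothesis $\beta\nu\le 1$ of Eq.~\ref{eqn:kl-margin} requires $2\beta\tau\le 1$. Both follow from $\beta\tau\le\sqrt{\varepsilon/(8\mathrm{e})}$, because $\sqrt{\varepsilon/(8\mathrm{e})}\le 1/2$. Substituting $\beta$, this is
\begin{equation}
\tau\le \frac{c_0\sqrt{\varepsilon}}{2\sqrt{8\mathrm{e}}\,\sqrt{k}\,\log\bigl(2(d-k)/(k\varepsilon)\bigr)},
\end{equation}
which is Eq.~\ref{eqn:tau-condition} once we use $\log(2(d-k)/(k\varepsilon))\le \log 2+\log(d/(k\varepsilon))\le 2\log(d/(k\varepsilon))$ and shrink $c$. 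The two halves then sum to $\varepsilon$, uniformly over $S$ and over all $\xi$ with $\lVert\xi\rVert\le\tau$, which completes the robust representation.

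The main obstacle is essentially bookkeeping. The central point is that the margin must be $\Omega(1/\sqrt{k})$ \emph{uniformly} over all supports, which Lemma~\ref{lem:nsp-margin} already supplies. Beyond that, I need to verify that $\beta$ is positive and that the logarithmic factors match as claimed; $d\ge 2k+2$ ensures $d-k\ge k$, which keeps these quantities well behaved.
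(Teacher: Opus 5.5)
Your proposal is correct and follows the paper's proof step by step. You use the same appeal to Proposition~\ref{prop:random-nsp} and Lemma~\ref{lem:nsp-margin}, the same $2\tau$ perturbation bounds on $\mu$ and $\nu$, the same choice $\beta = \tfrac{2\sqrt{k}}{c_0}\log\tfrac{2(d-k)}{k\varepsilon}$, and the same application of Eq.~\ref{eqn:kl-margin}. The only cosmetic difference is that you merge the two requirements $\beta\nu\le 1$ and $\mathrm{e}(2\beta\tau)^2\le\varepsilon/2$ into the single condition $\beta\tau\le\sqrt{\varepsilon/(8\mathrm{e})}$ with explicit constants, whereas the paper checks them separately.
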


\setcounter{theorem}{\value{tempthmsave}}

\begin{proof}
    Take $W$ and the normalized witnesses $\hat e_S$ of the construction above as the weights and centers of Definition~\ref{def:robust-repr}.
    Under Eq.~\ref{eqn:h-robust-condition} with a large enough constant, Proposition~\ref{prop:random-nsp} guarantees that~$W$ satisfies the $\ell_2$-robust null-space property at sparsity~$k$ with constants $\gamma = \gamma_0$ and $\chi = \chi_0$ with probability at least $1 - \delta$.
    On this event, Eq.~\ref{eqn:lemma-margin} of Lemma~\ref{lem:nsp-margin} yields $\nu(S, \hat e_S) = 0$ and $\mu(S, \hat e_S) \geq c_0/\sqrt{k}$ uniformly over all size-$k$ subsets~$S$, where $c_0 := c_{\gamma_0, \chi_0}$ is universal.

    Fix a size-$k$ subset $S$ and a perturbation $\xi$ with $\lVert \xi\rVert \leq \tau$, and set $e := \hat e_S + \xi$.
    Since $|w_j^\top \xi| \leq \tau$ for every~$j$ by Cauchy--Schwarz, such a perturbation moves each of the two geometric quantities by at most~$2\tau$,
    \begin{equation}
        \mu(S, e) \geq \mu(S, \hat e_S) - 2\tau \geq \frac{c_0}{\sqrt{k}} - 2\tau, \qquad \nu(S, e) \leq \nu(S, \hat e_S) + 2\tau = 2\tau.
    \end{equation}
    Since $d \geq 2k+2$ and $\varepsilon < 1$, we have $\log\bigl(d/(k\varepsilon)\bigr) \geq \log 2$, so Eq.~\ref{eqn:tau-condition} together with $\sqrt\varepsilon \leq 1$ gives $\tau \leq c/(\sqrt{k}\log 2) \leq c_0/(4\sqrt{k})$ as soon as $c \leq c_0\log(2)/4$; we may therefore assume $\tau \leq c_0/(4\sqrt{k})$, so that $\mu(S, e) \geq c_0/(2\sqrt{k})$.
    Choose $\beta = \tfrac{2\sqrt{k}}{c_0}\,\log\!\bigl(2(d-k)/(k\varepsilon)\bigr)$, so that
    \begin{equation}
        \frac{d-k}{k}\,e^{-\beta\mu(S,e)} \;\leq\; \frac{d-k}{k}\,e^{-\beta c_0/(2\sqrt{k})} \;\leq\; \frac{\varepsilon}{2}.
    \end{equation}
    The condition $\beta\,\nu(S, e) \leq 1$ required by Eq.~\ref{eqn:kl-margin} is implied by $\tfrac{4}{c_0}\sqrt{k}\,\tau\log\!\bigl(2d/(k\varepsilon)\bigr) \leq 1$, which holds for $\tau$ satisfying Eq.~\ref{eqn:tau-condition} with $c$ small enough, using $\log\bigl(2d/(k\varepsilon)\bigr) \leq 2\log\bigl(d/(k\varepsilon)\bigr)$ as $d/(k\varepsilon) \geq 2$.
    Applying Eq.~\ref{eqn:kl-margin},
    \begin{equation}
        D_{\mathrm{KL}} \;\leq\; \mathrm{e}\,(2\beta \tau)^2 + \frac{\varepsilon}{2}.
    \end{equation}
    Finally, $\mathrm{e}\,(2\beta \tau)^2 \leq \tfrac{16\mathrm{e}}{c_0^2}\,k\,\tau^2\log^2\!\bigl(2d/(k\varepsilon)\bigr) \leq \varepsilon/2$ by Eq.~\ref{eqn:tau-condition} for $c$ small enough, which yields $D_{\mathrm{KL}} \leq \varepsilon$.
\end{proof}

We make three observations about this result.
\begin{itemize}
    \item[--] \textbf{Exact versus robust regime.} Theorem~\ref{thm:embedding} guarantees exact representation as soon as $h \geq 2k$.
    At the cost of an additional $\log(d/k)$ factor on the embedding dimension, Theorem~\ref{thm:robust-embedding} shows that the representation survives bounded adversarial perturbations of the context embedding and a $D_{\mathrm{KL}}$ slack~$\varepsilon$, whereas the witness behind Theorem~\ref{thm:embedding} tolerates only perturbations as small as the margin it leaves, which shrinks very quickly with the support size.
    Random weights break this pathology, in line with the compressed-sensing intuition that random projections yield quantitatively neighborly polytopes~\citep{candes_robust_2006,donoho_compressed_2006}.
    \item[--] \textbf{Curse of ambiguity in the tolerable perturbation.} The tolerable radius in Eq.~\ref{eqn:tau-condition} shrinks as $1/\sqrt{k}$ and, the centers being of unit norm, it is directly a relative precision.
    This reflects the fact that coordinating $k$ on-support vertices on a face of the polytope becomes geometrically more demanding as~$k$ grows: larger ambiguity costs not only more embedding dimension, but also more precision on the backbone embedding.
    \item[--] \textbf{Connection to sparse recovery.} The null-space property of Definition~\ref{def:nsp} is, in its unrobust form, the standard sufficient and necessary condition for $\ell_1$-minimization to recover every $k$-sparse vector from linear measurements~\citep{candes_robust_2006,donoho_compressed_2006,foucart_mathematical_2013}.
    Lemma~\ref{lem:nsp-margin} repurposes it to control the row polytope of~$W$ instead: a small null-space overlap with any $k$-subset translates into a strict separation between the $k$ on-support rows and the remaining vertices, and the robust term additionally bounds the norm of the witness that realizes this separation.
\end{itemize}

\newpage
\section{Theoretical results regarding optimization}
\label{app:optimization}

\subsection{One-dimensional differential equation for softmax gradient flow}

In the main text, we have mentioned that the gradient flow dynamics of the linear model can be reduced in some regimes (precisely far below the capacity threshold) to a one-dimensional differential equation. We here justify how we can transform gradient flow on a single logistic regression problem over $d$ parameters to a one-dimensional differential equation. The reduction itself is exact; the asymptotic expansions that follow are all taken in the regime $k \ll d$, in which the number of admissible next tokens is small compared to the vocabulary size, as is the case in practice. In that regime we systematically use $d - k \approx d$, and keep only the leading order in $k/d$.

Let $\ell \in \mathbb{R}^d$ be the model's logits and initialize them to $0$ and $\hat p = \mathrm{softmax}(\ell)$ its corresponding probability distribution. For a single context with unit-norm embedding $e$, gradient flow on the weights $W$ induces exactly gradient flow on the logits $\ell = We$, so we work with the logits directly. We take the target distribution $p$ to be the one from Equation~\ref{eqn:target-dist}, that is uniform over $k$ out of the $d$ classes:
\begin{equation}
    p_{y} = \begin{cases} \frac{1}{k} & \text{if } y \in S \\ 0 & \text{otherwise.} \end{cases}
\end{equation}
It is worthwhile to note that in this analysis, we do not consider any context (or equivalently we fix it) and therefore have dropped $x$ subscripts compared to the main text.

The gradient flow dynamics on the logits of a single context on the expected cross entropy loss, that is, on the KL divergence between $p$ and $\hat p$, are
\begin{align}
    \dot{\ell} &= \nabla_\ell \mathbb{E}_y [ \log(\hat p_y) ] \\
    &= -\nabla_\ell D_{\mathrm{KL}}(p \| \hat p)\\
    &= p - \hat p.
\end{align}

Since all on-support logits ($y \in S$) start equal and receive the same gradient, they remain equal for all time. Likewise for the off-support logits. Additionally, the sum of the logits does not affect the probability distribution $\hat p$ so we can reduce the model to one variable, $\Delta \ell$, that is the difference between on- and off-support logits. It follows that
\begin{align}
    \dot{\Delta \ell} &= \left ( p_\mathrm{on} - \hat p_{\mathrm{on}} \right ) - \left ( p_\mathrm{off} - \hat p_{\mathrm{off}} \right )\\
    &= \frac{1}{k} - \frac{1 - (d-k) \hat p_\mathrm{off}}{k} + \hat p_\mathrm{off} \\
    &= \frac{d}{k} \frac{1}{d-k+k\exp(\Delta \ell)}
\end{align}
with $\Delta \ell(0) =0$ and $\hat p_\mathrm{on}$ (resp. $\hat p_\mathrm{off}$) the probability of on- (resp. off-) support outputs.

The differential equation is separable: $\frac{k}{d}(k\exp(\Delta \ell) + d - k)\,\mathrm{d}(\Delta \ell) = \mathrm{d}t$. Integrating both sides yields
\begin{equation}
    \label{eqn:implicit_Delta_ell}
    \frac{k^2}{d}\left(\exp(\Delta \ell) - 1\right) + \frac{k(d-k)}{d}\Delta \ell = t.
\end{equation}

\paragraph{Asymptotics.} Let us now look at the early and late evolution of the KL divergence. First, it can be written as a function of $\Delta \ell$:
\begin{align}
    D_{\mathrm{KL}}(p \| \hat p) &= \sum_{y \in S} \frac{1}{k} \log \left ( \frac{1}{k \hat p_\mathrm{on}} \right )\\
    &= \log \left ( \frac{1}{1 - (d-k)\hat p_\mathrm{off}}\right ) \\
    &= \log \left ( \frac{k\exp(\Delta \ell) + d-k}{k \exp(\Delta \ell) }\right )\\
    &= \log\!\left(1 + \frac{d-k}{k\exp(\Delta \ell)}\right).
\end{align}
Around $t=0$, $\Delta \ell$ remains close to $0$, which means that up to a first order approximation
\begin{align}
    D_\mathrm{KL}(p \| \hat p) &\approx \left . D_\mathrm{KL}(p \| \hat p) \right |_{t=0} + \left . \frac{\mathrm{d} D_\mathrm{KL}(p \| \hat p)}{\mathrm{d}\Delta \ell} \right |_{t=0} \Delta \ell\\
    &= \log\left (\frac{d}{k}\right) - \frac{d-k}{d} \Delta \ell.
\end{align}
In this regime we can linearize the exponential in the implicit equation given by Equation~\ref{eqn:implicit_Delta_ell}, $\exp(\Delta \ell) - 1 \approx \Delta \ell$, whose left-hand side then collapses to $\frac{k^2}{d}\Delta \ell + \frac{k(d-k)}{d}\Delta \ell = k \Delta \ell$, so that
\begin{equation}
    \label{eqn:early_Delta_ell}
    \Delta \ell \approx \frac{d}{k(d-k)}t \approx \frac{t}{k},
\end{equation}
where the first expression is what one obtains by dropping the exponential term altogether instead of linearizing it; the two agree at leading order in $k/d$, which is the accuracy we work at here. Combining Equation~\ref{eqn:early_Delta_ell} with $\frac{d-k}{d} \approx 1$, we get that $D_\mathrm{KL}$ is approximately
\begin{equation}
    D_\mathrm{KL}(p \| \hat p) \approx \log \left ( \frac{d}{k} \right ) - \frac{t}{k}
\end{equation}
at the beginning of the dynamics, showing that learning a distribution with $k$ different possible targets takes $k$ times longer than the deterministic case early on in the dynamics. Now moving to late-stage dynamics, the exponential term $\exp(\Delta \ell)$ dominates as $\Delta \ell \rightarrow +\infty$, so Equation~\ref{eqn:implicit_Delta_ell} gives
\begin{equation}
    \exp(\Delta \ell) \approx \frac{td}{k^2}.
\end{equation}
Plugging this into the KL divergence expression, we get
\begin{align}
    D_\mathrm{KL}(p \| \hat p) &\approx \log \left ( 1 + \frac{k(d-k)}{td} \right )\\
    &\approx \frac{k(d-k)}{td} \approx \frac{k}{t},
\end{align}
where the last step again uses $d - k \approx d$. This highlights that, asymptotically, the convergence rate $1/t$ is the same for all $k$ values, but the loss increases linearly with $k$ when $k \ll d$.

\subsection{Validation of theory}
\label{app:optimization-validation}

\begin{figure}[t]
    \centering
    \includegraphics{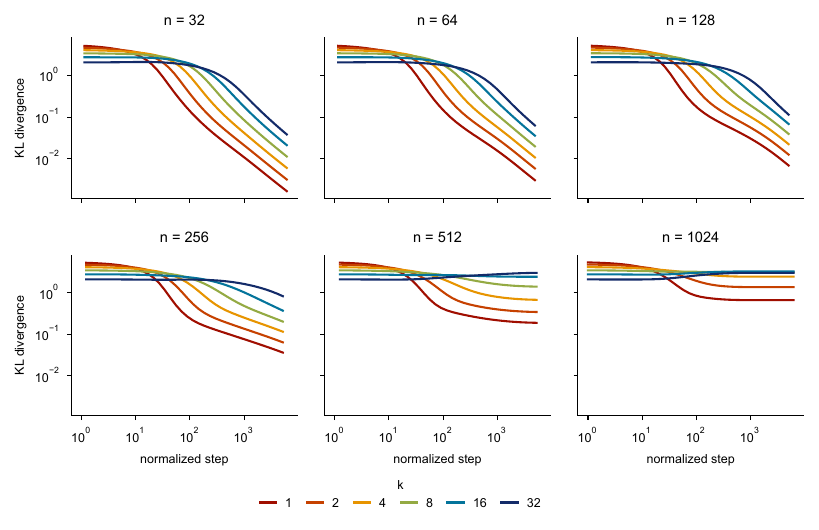}
    \caption{\textbf{Gradient-flow theory qualitatively captures learning below the capacity threshold, even with mixed ambiguity within the data.} Each panel shows the mean KL divergence on the size-$k$ subset of contexts, against normalized step $\mathrm{step} / n$, for one value of the number of contexts $n$. Below the capacity threshold (top row, $n \in \{32, 64, 128\}$), the curves stack in increasing order of $k$ and decay at similar rate, mirroring the theoretical prediction that more ambiguous distributions are harder to learn but share the same asymptotic slope; see Figure~\ref{fig:learning}. As we approach and cross the capacity threshold (bottom row, $n \in \{256, 512, 1024\}$), the dynamics flatten and the simplified theory no longer applies: ambiguous distributions plateau at high loss while less ambiguous ones are still learned, consistent with the capacity-based picture of Appendix~\ref{app:capacity}.}
    \label{fig:validation-learning-gf}
\end{figure}

\begin{figure}[t]
    \centering
    \includegraphics{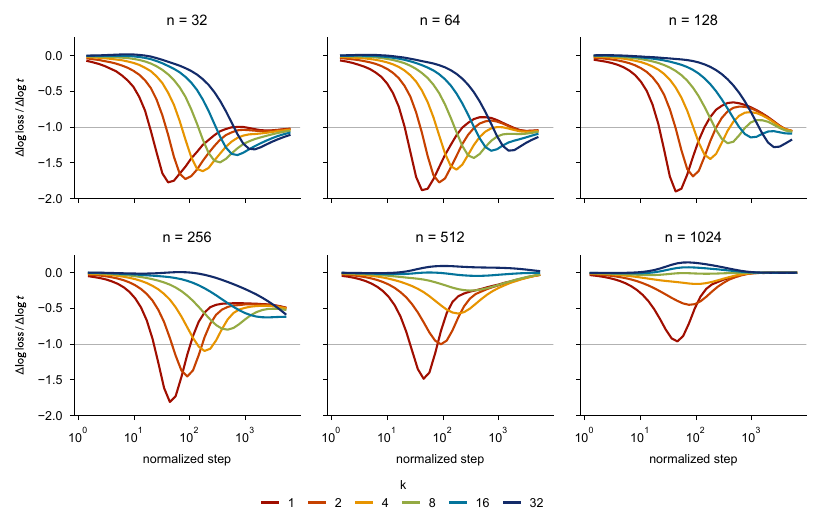}
    \caption{\textbf{Local log-log slope $\Delta\log\mathrm{loss} / \Delta\log t$ for each panel of Figure~\ref{fig:validation-learning-gf}.} Below the capacity threshold, all $k$ converge to slopes close to the dashed reference at $-1$, matching the gradient-flow prediction $D_\mathrm{KL} \sim k(d-k)/(td) \approx k/t$ for small $k$ values.}
    \label{fig:validation-learning-slope}
\end{figure}

The gradient-flow analysis above is derived for a single fixed target distribution and a $d$-dimensional logits vector that is updated directly. The actual toy setting we are interested in operates on a linear head with shared parameters across many contexts of varying ambiguity. We empirically check that the simplified theory still qualitatively captures the learning dynamics in this more realistic regime, and where it breaks down. We find that, below the capacity threshold, the per-$k$ KL curves match the theoretical picture: ambiguity slows learning early (Eq.~\ref{eqn:implicit_Delta_ell}) and the late-time decay tends to a slope of $-1$ on a log-log scale, as predicted by the asymptotics of Eq.~\ref{eqn:asymptotics_late_training}. As we add more contexts, the overall pace of learning slows, even after normalizing time by the number of contexts, presumably because gradient updates that are useful for one context interfere with others. Close to and beyond the capacity threshold, gradient flow is no longer the right lens: the network's finite capacity becomes the dominant constraint, and the less ambiguous distributions converge first because they are cheaper to store, as predicted by the capacity analysis of Appendix~\ref{app:capacity}.

\begin{table}[h]
    \centering
    \caption{Sweep configuration for the gradient-flow validation experiment of Figures~\ref{fig:validation-learning-gf} and~\ref{fig:validation-learning-slope}. Each ambiguity level forms a group of $n$ contexts, for a total of $6n$ context-distribution pairs per run.}
    \label{tab:learning-validation}
    \setlength{\tabcolsep}{8pt}
    \begin{tabular}{ll}
        \toprule
        \textbf{Architecture} & linear head, $h = d = 256$\\&($hd = 65{,}536$ parameters) \\
        \textbf{Optimizer} & stochastic gradient descent, batch size $1{,}024$, \\ & constant learning rate $1$, $6.4{\cdot}10^6$ steps \\
        \textbf{Support size} $k$ & $\{1,\, 2,\, 4,\, 8,\, 16,\, 32\}$, mixed in equal proportion \\
        \textbf{Contexts per ambiguity level} $n$ & $\{32,\, 64,\, 128,\, 256,\, 512,\, 1024\}$ \\
        \textbf{Seeds} & $5$ \\
        \bottomrule
    \end{tabular}
\end{table}

\paragraph{Experimental setup.} The setup mirrors the capacity validation experiment of Appendix~\ref{app:capacity-validation}, with two differences. First, we train with plain stochastic gradient descent instead of Adam, so as to stay close to the gradient-flow regime that the theory describes. Second, each context is associated with one of six target distributions of varying support size $k \in \{1, 2, 4, 8, 16, 32\}$, with the six ambiguity levels mixed in equal proportion within every batch: a single trained model thus sees the full range of ambiguities at the same time. This lets us probe whether the per-$k$ gradient-flow predictions still hold when distributions of different difficulty compete for the same parameters, as is expected to be the case in practice. The full sweep configuration is summarized in Table~\ref{tab:learning-validation}, where $n$ is the number of contexts per ambiguity level so that the total dataset contains $6n$ context-distribution pairs; we report the mean of the per-$k$ KL across $5$ seeds.

\newpage
\section{Theoretical results regarding sampling noise}
\label{app:sampling}

In this appendix we analyze the effect of label sampling noise on the learning dynamics of the logistic regression model of Appendix~\ref{app:optimization}. The setup is identical, except that we now train with sampled labels: at each step we draw $y\sim p$ and apply the stochastic gradient descent update on the cross-entropy loss, that is
\begin{equation}
    \label{eqn:sgd-update}
    \ell_j \leftarrow \ell_j + \eta\left(\mathbf 1[y=j] - \hat p_j\right),
\end{equation}
with $\eta > 0$ the learning rate. We measure time in units of the learning rate, $t = \eta \times (\text{number of steps})$, so that the drift of Eq.~\ref{eqn:sgd-update} matches the gradient flow $\dot \ell = p - \hat p$ of Appendix~\ref{app:optimization} and the two trajectories can be compared at equal~$t$. The main text states the results below at $\eta = 1$ for readability; we keep~$\eta$ explicit throughout this appendix because the continuous-time approximation we rely on is a small-$\eta$ statement, and the noise floor we derive is proportional to~$\eta$.

The update has mean $\eta\,(p_j - \hat p_j)$, that is, the gradient-flow drift, and a zero-mean per-step fluctuation whose consequences we now characterize. Recall that $\ell_t^\star$ denotes the gradient-flow trajectory of Appendix~\ref{app:optimization}, $\hat p^\star(t) = \mathrm{softmax}(\ell_t^\star)$ its softmax, and $\hat p^\star_S(t) := \sum_{j \in S} \hat p_j^\star(t)$ the on-support probability mass along that trajectory.

The central object of the analysis is the within-on-support logit variance
\begin{equation}
    \label{eqn:V-def-app}
    V := \frac{1}{k}\sum_{j \in S}(\ell_j - \bar\ell_S)^2, \, \text{with } \bar\ell_S := \frac{1}{k}\sum_{j \in S}\ell_j.
\end{equation}
Our analysis proceeds in four steps:
\begin{enumerate}
    \item Show that, at leading order in the fluctuations, the excess KL loss is entirely controlled by $\mathbb E V$, so that the whole problem reduces to understanding how this scalar evolves.
    \item Take the small-learning-rate continuous-time limit of the update in Eq.~\ref{eqn:sgd-update}, which is tracked by a stochastic differential equation driven by the per-step gradient covariance.
    \item Identify the subspace on which the fluctuations live.
    \item Apply Itô's lemma to $V$ to derive its ordinary differential equation, and combine everything to obtain the noise floor.
\end{enumerate}

\subsection{The within-on-support variance controls the excess loss}

We first show why $V$ is the natural quantity to track. Writing $\ell = \ell^\star + \delta$ and expanding the KL divergence to second order in $\delta$,
\begin{equation}
    \label{eqn:KL-expansion}
    D_\mathrm{KL}(p\|\hat p) = D_\mathrm{KL}(p\|\hat p^\star) + (\hat p^\star - p)^\top \delta + \tfrac{1}{2}\,\delta^\top H(\ell^\star)\,\delta + O(\|\delta\|^3),
\end{equation}
where $H(\ell) = \mathrm{diag}(\hat p) - \hat p \hat p^\top$ is the Hessian of the softmax cross-entropy loss. We will establish in the next subsections that, along the gradient-flow trajectory, $\delta$ has zero mean (so that the linear term vanishes in expectation) and lives in the $(k-1)$-dimensional within-on-support contrast subspace
\begin{equation}
    \label{eqn:contrast-subspace}
    \mathcal E := \Big\{u \in \mathbb R^d \,:\, u_j = 0 \text{ for } j \notin S, \ \sum_{j \in S} u_j = 0 \Big\},
\end{equation}
on which $H$ acts as $H u = (\hat p^\star_S(t)/k)\,u$. Under these facts, the quadratic form in Eq.~\ref{eqn:KL-expansion} reduces to $(\hat p^\star_S(t)/k)\,\|\delta\|^2$, and using $\|\delta\|^2 = k\,V$,
\begin{equation}
    \label{eqn:KL-excess-V}
    \mathbb E[D_\mathrm{KL}(p\|\hat p)] - D_\mathrm{KL}(p\|\hat p^\star) \approx \frac{\hat p^\star_S(t)}{2}\,\mathbb E V.
\end{equation}
The main effect of label sampling noise on the loss is therefore captured by the scalar $\mathbb E V$, and it remains to compute its dynamic, which we will do in the next three subsections.

\subsection{Continuous-time limit of the stochastic update}

We briefly recall the standard continuous-time approximation of stochastic gradient descent, following~\citet{mandt_stochastic_2017, li_stochastic_2017}. For a general loss $L(\theta)$ minimized with stochastic gradient descent with learning rate $\eta$ on parameter $\theta$, and per-step gradient noise covariance $\Sigma(\theta)$, the iterates are tracked in the small-learning-rate limit, on the time scale $t = \eta \times (\text{number of steps})$, by the stochastic differential equation
\begin{equation}
    \label{eqn:sde-generic}
    \mathrm{d}\theta_t = -\nabla L(\theta_t)\,\mathrm{d}t + \sqrt{\eta\,\Sigma(\theta_t)}\,\mathrm{d}W_t,
\end{equation}
with $W_t$ a standard Brownian motion. The drift is $\eta$-independent in this time parametrization, whereas the diffusion carries a $\sqrt\eta$: this is the usual statement that the learning rate controls how far the iterates wander from gradient flow, not how fast they travel along it. Writing $\theta_t = \theta_t^\star + \delta_t$, the deviation $\delta_t$ obeys, to leading order, the Ornstein--Uhlenbeck equation
\begin{equation}
    \label{eqn:sde-delta}
    \mathrm{d}\delta_t = -H(\theta_t^\star)\,\delta_t\,\mathrm{d}t + \sqrt{\eta\,\Sigma(\theta_t^\star)}\,\mathrm{d}W_t.
\end{equation}
For the softmax cross-entropy loss and the update in Eq.~\ref{eqn:sgd-update},
\begin{equation}
    \label{eqn:Sigma-H}
    \Sigma(\ell) = \mathrm{Cov}_{y\sim p}(\mathbf 1[y=\cdot]) = \mathrm{diag}(p) - pp^\top,
    \qquad
    H(\ell) = \mathrm{diag}(\hat p) - \hat p \hat p^\top.
\end{equation}
Note that $\Sigma$ depends only on the target $p$, not on the current logits, and that $\mathrm{Tr}(\Sigma) = 1 - \|p\|_2^2 = (k-1)/k$.

\subsection{Fluctuations live on the within-on-support contrast subspace}

Along the symmetric gradient-flow trajectory of Appendix~\ref{app:optimization}, all on-support logits are equal and likewise for off-support logits. For any $u \in \mathcal E$, direct computation from Eq.~\ref{eqn:Sigma-H} gives
\begin{equation}
    \label{eqn:joint-eigenvectors}
    \Sigma\,u = \frac{1}{k}\,u, \qquad H\,u = \frac{\hat p^\star_S(t)}{k}\,u,
\end{equation}
because $p^\top u = 0$ (the entries of $p$ are constant on $S$ and $u$ sums to zero on $S$) and likewise $\hat p^{\star\top} u = 0$. Outside $\mathcal E$, the covariance $\Sigma$ acts as zero: the symmetric on-support direction $\mathbf 1_S$ is annihilated since $\Sigma\,\mathbf 1_S = p - p\,(p^\top \mathbf 1_S) = p - p = 0$, and off-support coordinates are untouched by sampling since $p_j = 0$ for $j\notin S$. Thus all of the gradient noise feeds into the within-on-support contrast subspace $\mathcal E$, and $\delta_t$ has zero expectation outside it. Restricting to the projection $u_t := P_{\mathcal E}\delta_t$, we obtain the scalar Ornstein--Uhlenbeck equation
\begin{equation}
    \label{eqn:u-sde}
    \mathrm{d}u_t = -\frac{\hat p^\star_S(t)}{k}\,u_t\,\mathrm{d}t + \sqrt{\frac{\eta}{k}}\,\mathrm{d}\tilde W_t,
\end{equation}
where $\tilde W_t$ is a $(k-1)$-dimensional standard Brownian motion on $\mathcal E$. Note that the damping rate is set by the loss curvature and does not involve~$\eta$, while the noise injected per unit time is proportional to~$\eta$.

\subsection{Dynamics of the within-support variance and noise floor}

Since $\bar\ell_S$ is fixed under $P_{\mathcal E}$ (the symmetric direction is not driven by noise), we have $V = \|u_t\|^2 / k$ along the gradient-flow trajectory. Itô's lemma applied to $\|u_t\|^2$ with Eq.~\ref{eqn:u-sde} gives
\begin{align}
    \mathrm{d}\|u_t\|^2 &= 2\,u_t^\top \mathrm{d}u_t + \eta\,\mathrm{Tr}\!\left(\Sigma\big|_{\mathcal E}\right)\mathrm{d}t \\
    &= -\frac{2\hat p^\star_S(t)}{k}\,\|u_t\|^2\,\mathrm{d}t + 2\sqrt{\frac{\eta}{k}}\,u_t^\top \mathrm{d}\tilde W_t + \eta\,\frac{k-1}{k}\,\mathrm{d}t,
\end{align}
where we used
\begin{equation}
    \mathrm{Tr}(\Sigma|_{\mathcal E}) = \mathrm{Tr}(\Sigma) = \frac{k-1}{k}.
\end{equation}
Taking expectations kills the martingale term $\mathrm{d}\tilde W_t$, and dividing by $k$ yields the ordinary differential equation for $\mathbb E V$:
\begin{equation}
    \label{eqn:V-dynamics-app}
    \frac{\mathrm{d}\,\mathbb E V}{\mathrm{d}t} = -\frac{2\,\hat p^\star_S(t)}{k}\,\mathbb E V + \eta\,\frac{k-1}{k^2},
\end{equation}
which is the equation announced in the main text (Eq.~\ref{eqn:V-dynamics-main}) at $\eta = 1$.
As a result, $\mathbb E V$ converges to
\begin{equation}
    \label{eqn:V-stationary}
    \mathbb E V_\infty = \eta\,\frac{k-1}{2k}
\end{equation}
as $\hat p^\star_S(t)$ converges to $1$.
Plugging Eq.~\ref{eqn:V-stationary} into Eq.~\ref{eqn:KL-excess-V} gives the asymptotic noise floor
\begin{equation}
    \label{eqn:noise-floor-app}
    \lim_{t \to \infty}\big(\mathbb E[D_\mathrm{KL}^{\,\mathrm{SGD}}(t)] - D_\mathrm{KL}^{\,\mathrm{GF}}(t)\big) = \frac{1}{2} \cdot \eta\,\frac{k-1}{2k} = \eta\,\frac{k-1}{4k},
\end{equation}
which is the formula reported in the main text as Eq.~\ref{eqn:sgd-gap-main}, there stated at $\eta = 1$. It is exactly zero for $k=1$ (no ambiguity, no noise), monotonically increasing in $k$, and asymptotes to $\eta/4$ as $k\to\infty$. The learning rate only sets the overall scale of the floor: the ambiguity dependence, which is what we are after, is the $\eta$-independent factor $(k-1)/(4k)$.

\subsection{Remark: convergence rate is untouched at leading order}

The derivation above is a leading-order statement in the learning rate: the mean trajectory $\mathbb E \ell_t$ coincides with gradient flow up to higher-order corrections in~$\eta$, so the convergence rates of Appendix~\ref{app:optimization} carry over unchanged. Sampling only adds zero-mean fluctuations whose variance saturates at the floor in Eq.~\ref{eqn:noise-floor-app}. This contrasts with discrete stochastic gradient descent at finite step size, where Jensen-style corrections to the drift do modify the effective rate; classical analyses on logistic regression~\citep{bach_non-asymptotic_2011, bach_adaptivity_2014} and quadratic losses~\citep{flammarion_averaging_2015, mandt_stochastic_2017} give step-size slowdowns and floor values consistent with Eq.~\ref{eqn:noise-floor-app} as the leading term.

\subsection{Validation of theory}
\label{app:sampling-validation}

\begin{figure}[!t]
    \centering
    \includegraphics{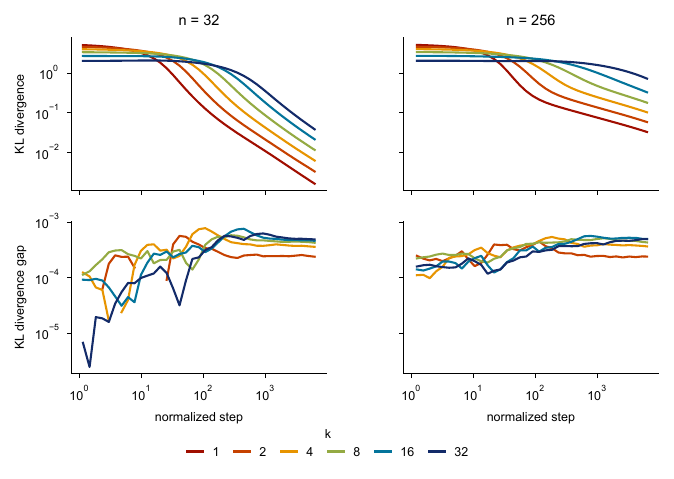}
    \caption{\textbf{The empirical sampling gap grows with $k$ asymptotically, in line with the theoretical prediction of Section~\ref{subsec:sampling}.} Top row: evolution of the mean per-$k$ KL divergence under stochastic gradient descent on the cross-entropy loss (with sampled targets), against the training step normalized by the number of contexts per ambiguity level $n$, for $n \in \{32, 256\}$. Bottom row: gap to the matched distributional-target reference (full-batch gradient descent on the same task); $k=1$ is omitted as its gap is below the seed-noise floor. The late-time gap increases with $k$, matching the asymptote of Equation~\ref{eqn:sgd-gap-main}.}
    \label{fig:validation-learning-sampling-gap}
\end{figure}

The motivation here mirrors that of Appendix~\ref{app:optimization-validation}: the analysis of Appendix~\ref{app:sampling} is derived for a single fixed target distribution and a logits vector updated directly, and we want to check that its qualitative predictions carry over to a linear head with shared parameters across many contexts of varying ambiguity. The key difference is that we now study the impact of output sampling noise rather than the underlying gradient-flow dynamics. The numerics are demanding because the quantity of interest is the small gap between the sampled and the population trajectory; as a reference, the theoretical curves of Figure~\ref{fig:learning} required averaging over $1{,}024$ Monte-Carlo trajectories to be smooth, which is already costly for the simple linear models we consider here. As a result, we cannot resolve the precise early-time onset of the sampling gap, but Figure~\ref{fig:validation-learning-sampling-gap} shows that the late-time excess loss increases with $k$, consistent with the prediction of Equation~\ref{eqn:sgd-gap-main}.

\paragraph{Experimental setup.} The setup is identical to the gradient-flow validation of Appendix~\ref{app:optimization-validation} (Table~\ref{tab:learning-validation}), except that targets are now sampled at every step, so the model is trained on the cross-entropy loss instead of the full KL divergence (which is, up to the entropy of the target, the population objective used in the gradient-flow validation). To resolve the small sampling gap we restrict to $n \in \{32, 256\}$, shorten training to $1.6{\cdot}10^6$ steps, and average over $32$ seeds for both the sampling sweep and the matched distributional reference used to compute the gap. These runs use $\eta = 1$ and mini-batches spread over all contexts, so the effective per-context learning rate differs from the single-context setting of Eq.~\ref{eqn:sgd-update}; this rescales the absolute height of the plateau, and we therefore only compare its $k$~dependence to the theory.

\newpage
\section{Connection to classical sampling results}
\label{app:classical}

The analyses of Appendices~\ref{app:optimization}--\ref{app:sampling} look at the curse of ambiguity through the lens of learning; that is, how a parametric model trained by gradient descent on a finite set of contexts fits a target distribution.
We complement that view here with two classical, non-parametric results on sampling from a single discrete distribution: the coupon collector problem and the maximum likelihood estimator of a categorical distribution. 
The curse of ambiguity already appears in these simple statistical problems: the sample complexity grows with $k$ even in this idealized setting where no representation or parameters have to be learned.

\subsection{The coupon collector problem}
\label{app:coupon}

For a neural network to model $p$ accurately, it must, at the very least, have observed every token in the support $S$ at least once during training: a token that never appears in the data is a token that the network cannot meaningfully tell apart from any other off-support token.
We can therefore lower bound the sample complexity of learning $p$ by the time it takes for all $k$ tokens of $S$ to appear in the sample.

This problem is known as the coupon collector problem~\citep{erdos_classical_1961, feller_introduction_1968}: how many independent draws from $p$ are needed to observe all of its $k$ values?
For the uniform distribution we consider, the expected collection time $T_k$ satisfies
\begin{equation}
    \label{eqn:coupon-mean}
    \mathbb{E}[T_k] = k \sum_{i=1}^k \frac{1}{i} = k \log k + \gamma k + \frac{1}{2} + o(1),
\end{equation}
with $\gamma \approx 0.577$ the Euler-Mascheroni constant.
Generalizations of Equation~\ref{eqn:coupon-mean} to non-uniform $p$ are known but less clean, with the expected collection time depending on the full distribution rather than on a single summary statistic; see~\citet{flajolet_birthday_1992} for an extensive treatment.
The qualitative picture is unchanged: $\mathbb{E}[T] \geq 1/p_\mathrm{min}$ at minimum, and the logarithmic overhead in the effective number of tokens persists.

Two consequences of these results are worth highlighting. First, the cost scales as $k \log k$ so the ``naive'' budget of one sample per support element is not enough, and even seeing each token a constant number of times requires an additional logarithmic factor in $k$.
Second, this $k \log k$ floor holds regardless of the model, the optimizer, or the loss.
As a result, any learner will inherit from this sampling-based curse of ambiguity.

\subsection{Maximum likelihood estimation of a categorical distribution}
\label{app:mle}

Once all $k$ tokens of the support have been observed, a learner can start estimating the actual probabilities on top of identifying the support. The standard non-parametric estimator for this problem is the maximum likelihood estimator
\begin{equation}
    \label{eqn:mle}
    \hat p_j = \frac{N_j}{m}, \quad \text{where } N_j = \sum_{t=1}^m \mathbf{1}[y_t = j],
\end{equation}
with $m$ the number of drawn samples.

\paragraph{Total variation.} The total-variation rate of the frequency estimator has been studied extensively~\citep{devroye_combinatorial_2001, berend_sharp_2013, kamath_learning_2015, han_minimax_2015}. For a uniform target on a support of size $k$ and at least as many samples as support elements, $m \geq k$, we have
\begin{equation}
    \label{eqn:mle-tv}
    \mathbb{E}\,\|\hat p - p\|_1 \;\asymp\; \sqrt{\frac{k}{m}},
\end{equation}
which matches the minimax lower bound over all distributions on $k$ symbols~\citep{han_minimax_2015}.
The restriction to $m \geq k$ is only there to exclude the degenerate regime $m \ll k$, in which the $\ell_1$ error saturates at its maximal value instead of following Eq.~\ref{eqn:mle-tv}.
Reaching a fixed total-variation error $\varepsilon$ thus requires $m \asymp k / \varepsilon^2$ samples, which is a manifestation of the curse of ambiguity, this time at the level of pure statistics.

\paragraph{KL divergence.}
Until all $k$ support tokens have been observed, $\hat p$ has some $\hat p_j = 0$ on the support of $p$ and $D_\mathrm{KL}(p \,\|\, \hat p) = +\infty$. Since that happens with positive probability at every finite sample size, the unconditional expected KL divergence of the frequency estimator is infinite for every~$m$: collecting all coupons is not just a lower bound for the support-identification problem, it is a strict prerequisite for the KL divergence of the maximum likelihood estimator to be finite at all. We therefore state the rate conditionally on the coupon-collection event of Appendix~\ref{app:coupon}, which by Eq.~\ref{eqn:coupon-mean} has high probability once $m \gg k \log k$. In that regime the frequency estimator attains the minimax rate for this problem \citep{braess_bernstein_2004, han_minimax_2015},
\begin{equation}
    \label{eqn:mle-kl}
    \mathbb{E}\left[D_\mathrm{KL}(p \,\|\, \hat p) \;\middle|\; \text{all } k \text{ tokens observed}\right] \;\asymp\; \frac{k}{m},
\end{equation}
a rate that the same references show is achieved unconditionally, at every $m \geq k$, by the add-constant smoothed variants of $\hat p$ that are standard precisely because they avoid the infinite-KL issue above.
The numerator now scales linearly in $k$. This matches qualitatively the late-time gradient flow loss $D_\mathrm{KL} \approx k(d-k)/(td) \approx k/t$ of Section~\ref{subsec:optimization}, confirming that the $k$ dependence we observe in our learning dynamics analysis reflects the intrinsic statistical difficulty of the problem.

\paragraph{Estimation of functionals.}
Scalar functionals of $p$, most notably its entropy or support size, admit faster estimators than $p$ itself: plug-in estimators incur an $O(k/m)$ bias~\citep{miller_note_1955, paninski_estimation_2003}, while polynomial-approximation estimators~\citep{valiant_estimating_2011, jiao_minimax_2015, wu_minimax_2016, valiant_instance_2017} reach the minimax rate $m \asymp k / \log k$, strictly faster than the $\Theta(k)$ rate of full-distribution estimation.
Language models, however, are trained to model the full next-token distribution rather than scalar summaries of it, so our analysis inherits the harder $\Theta(k)$ scaling of Equations~\ref{eqn:mle-tv} and~\ref{eqn:mle-kl} rather than this milder $k / \log k$ rate.

\newpage
\section{Synthetic $n$-gram task}
\label{app:exp_details}
\label{app:exp_details_ngram}

\paragraph{Dataset construction.} We start from the TinyStories corpus \citep{eldan_tinystories_2023}, lowercase each story, strip non-alphabetic characters, and split on whitespace. We then take as vocabulary the $V = 2{,}000$ most frequent words, all of which occur more than $10{,}000$ times in the corpus, and count $3$-gram occurrences over the corpus, dropping $3$-grams that contain any out-of-vocabulary word. Normalizing these counts yields the ground-truth conditional distributions $p_x$ for approximately $1.1$ million unique contexts $x$ (pairs of in-vocabulary words). For each context, we compute the effective support size $\exp(H(p_x))$ directly from $p_x$. We also report results for the raw support size, defined as the number of next tokens with $p_{x,j} \geq 1/(10V)$; this threshold removes tokens whose empirical probability sits at numerical-noise level relative to the uniform distribution.

\paragraph{Ambiguity measured by raw support size.} The main text quantifies ambiguity via the effective support size. To check that our conclusions do not hinge on this choice, Figure~\ref{fig:ngram-confirmation-curse-support} reproduces Figure~\ref{fig:ngram-confirmation-curse} with the raw support size on the y-axis, and shows the same qualitative trends.

\begin{figure}[h]
    \centering
    \includegraphics{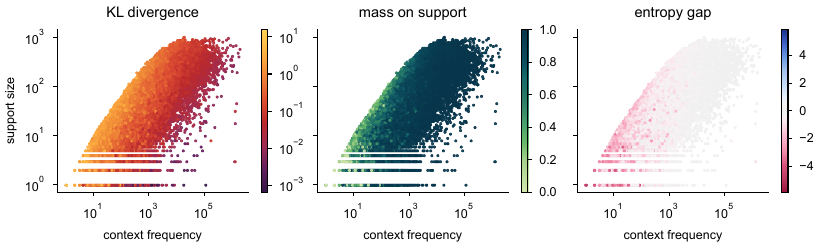}
    \caption{Same as Figure~\ref{fig:ngram-confirmation-curse}, but using the raw support size of the ground-truth distribution as the measure of ambiguity instead of the effective support size.}
    \label{fig:ngram-confirmation-curse-support}
\end{figure}

\paragraph{Sequence sampling.} Training sequences are sampled autoregressively from the empirical $3$-gram distribution. Each sequence is initialized with two beginning-of-sequence tokens, and at each subsequent position $t$ we draw
\begin{equation}
    y_t \sim p(\cdot \mid y_{t-2},\, y_{t-1}),
\end{equation}
where $p(\cdot \mid y_{t-2}, y_{t-1})$ is the empirical conditional. Two edge cases arise. First, contexts containing one or two beginning-of-sequence tokens use the corresponding empirical unigram and bigram marginals. Second, the autoregressive procedure can produce a pair of in-vocabulary words that never co-occurred in the corpus, in which case the empirical conditional is undefined; we fall back to the uniform distribution over the vocabulary for these unseen contexts. Both edge cases account for a small fraction of the sampled tokens and are excluded from the per-context evaluation, which is restricted to contexts $x$ for which $p_x$ is well defined.

\paragraph{Architecture and training.} The models are decoder-only transformers with $4$ layers, $4$ attention heads, rotary position embeddings, and an MLP expansion factor of $4$; at the default width $d_\mathrm{model} = 256$ this amounts to $4$M parameters. We train on sequences of $256$ tokens with batches of $128$ sequences, so the default run of $3{,}000$ steps sees $98$M tokens, using AdamW with weight decay $0.01$ and a cosine schedule. We use one data seed per configuration and tune the learning rate per configuration, as summarized in Table~\ref{tab:ablations}.

We separate two width parameters. The residual stream, and with it the whole backbone, has width $d_\mathrm{model}$; the linear head that produces the logits acts on a $d_\mathrm{embed}$-dimensional vector, obtained from the last-layer residual stream by an additional linear projection when $d_\mathrm{embed} \neq d_\mathrm{model}$ (no projection is inserted when they are equal). It is $d_\mathrm{embed}$ that plays the role of the embedding dimension~$h$ of our theory, which is why the embedding-dimension ablation varies it at fixed backbone. One caveat is worth stating: since the projection is applied to a $d_\mathrm{model}$-dimensional vector, the embeddings entering the head span a subspace of dimension at most $\min(d_\mathrm{embed}, d_\mathrm{model})$, so the two largest values of the ablation, $d_\mathrm{embed} \in \{512, 1024\}$ at $d_\mathrm{model} = 256$, cannot buy representational room beyond $256$ dimensions. The informative part of that sweep is therefore its lower half, where the head is genuinely bottlenecked.

\paragraph{Per-context evaluation.} At the end of training we sample $262{,}144$ fresh sequences from the same $3$-gram process, run the model on them, and average each per-context metric over the positions at which the context occurs. Each dot of Figure~\ref{fig:ngram-confirmation-curse} is one context, and its context frequency is its count in the training corpus rather than in this evaluation sample.

\paragraph{Iso-loss pipeline.} We exploit the empirical observation that $\log D_\mathrm{KL}$ is well approximated by a linear function of the log effective support size and the log context frequency $\log f$. For each configuration, we fit
\begin{equation}
    \label{eqn:iso-fit}
    \log D_\mathrm{KL}(x) \approx a \log k(x) + b \log f(x) + c
\end{equation}
to the per-context losses by weighted least squares, where $k(x)$ denotes the effective support size of context $x$. Since the bulk of contexts sits in the dense low-$k$, low-frequency region, we bin the $(\log k, \log f)$ plane into a $50 \times 50$ grid and weight each context by the inverse of the number of contexts in its bin; this prevents the dense region from dominating the fit and effectively targets the support of the $(k, f)$-distribution rather than its mode. We then report the level set on which the fitted $D_\mathrm{KL}$ equals a representative mid-range value of $0.5$ nats, chosen because it sits in the middle of the per-context loss range observed in Figure~\ref{fig:ngram-confirmation-curse}. Figure~\ref{fig:iso_loss_pipeline} summarizes this pipeline when applied on Figure~\ref{fig:ngram-confirmation-curse} (left).

\begin{figure}[ht]
    \centering
    \includegraphics[]{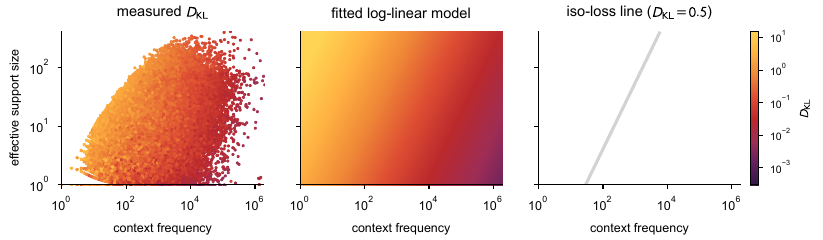}
    \caption{\textbf{Visual representation of how we summarize the per-context loss landscape of Figure~\ref{fig:ngram-confirmation-curse} (left) into the iso-loss line we use in Figure~\ref{fig:ablation-roots}.}}
    \label{fig:iso_loss_pipeline}
\end{figure}

\paragraph{Extensive ablation.} Figure~\ref{fig:ablation-roots} summarizes each ablation configuration by a single iso-loss contour fitted to its per-context losses. For completeness, we show in Figures~\ref{fig:ablation-clouds-capacity}--\ref{fig:ablation-clouds-sampling} the underlying per-context data from which those contours are derived: each row is one configuration of the corresponding ablation, plotted as a scatter of $(\text{context frequency},\, \text{effective support size})$ points colored by the same three metrics as Figure~\ref{fig:ngram-confirmation-curse} (KL divergence, mass on support, entropy gap). Within each figure, color scales are shared across rows so that configurations can be compared directly. These cloud-point views show that the qualitative trends summarized by the iso-loss contours are visible at the per-context level and not artifacts of the log-linear fit of Equation~\ref{eqn:iso-fit}.

\begin{table}[h]
    \centering
    \setlength{\tabcolsep}{4pt}
    \renewcommand{\arraystretch}{1.2}
    \begin{tabular}{lccccc}
          \toprule
          & Default & Capacity & Embedding
  & Optimization & Sampling \\
          \midrule
          Architecture &
  \multicolumn{5}{c}{$4$-layer transformer,
   $4$ attention heads} \\
          Vocabulary   &
  \multicolumn{5}{c}{$2{,}000$ most frequent
  words of TinyStories} \\
          Data          &
  \multicolumn{5}{c}{$256$-token sequences,
  batch size $128$, one seed} \\
          Optimizer    &
  \multicolumn{5}{c}{AdamW, cosine
  schedule, weight decay $0.01$} \\
          Learning rate & \makecell{$\{10^{-4},\, 3
  \times 10^{-4},$\\$10^{-3}, \, 3 \times
  10^{-3}\}$} & \multicolumn{4}{c}{$\{3
  \times 10^{-4},\, 10^{-3}, \, 3 \times
  10^{-3}\}$} \\
          \midrule
          $d_\mathrm{model}$ & $256$ &
  \makecell{$\mathbf{\{32,\, 64,}$\\$\mathbf{128,\, 256,}$\\$\mathbf{512,\, 1024\}}$} & $256$ &
  $256$ & $256$ \\
          $d_\mathrm{embed}$ & $256$ &
  $256$ & \makecell{$\mathbf{\{32,\, 64,}$\\$\mathbf{128,\, 256,}$\\$\mathbf{512,\, 1024\}}$} & $256$ &
  $256$ \\
          Training steps & $3000$ & $3000$
  & $3000$ & \makecell{$\mathbf{\{356,\,
  725,}$\\$\mathbf{1500,\,
  3000,}$\\$\mathbf{6000,\, 12000\}}$} &
  $3000$ \\
          Distributional & no & no & no &
  no & \makecell{\{\textbf{yes},\,
  \textbf{no}\}} \\
          \midrule
          Runs & $4$ & $18$ & $18$ & $18$ &
   $6$ \\
          \bottomrule
      \end{tabular}
    \vspace{1em}
    \caption{\textbf{Configuration of ablation experiments on the synthetic $3$-gram task.} Each column corresponds to a sweep that varies a single parameter (highlighted in bold) while holding all others at the default value. Learning rate is tuned for every configuration in every sweep: the number of runs is the number of swept values times the size of the learning-rate grid, and the reported curves use the best learning rate ($3 \times 10^{-3}$ for the default configuration).}
    \label{tab:ablations}
\end{table}

\begin{figure}[!h]
    \centering
    \includegraphics{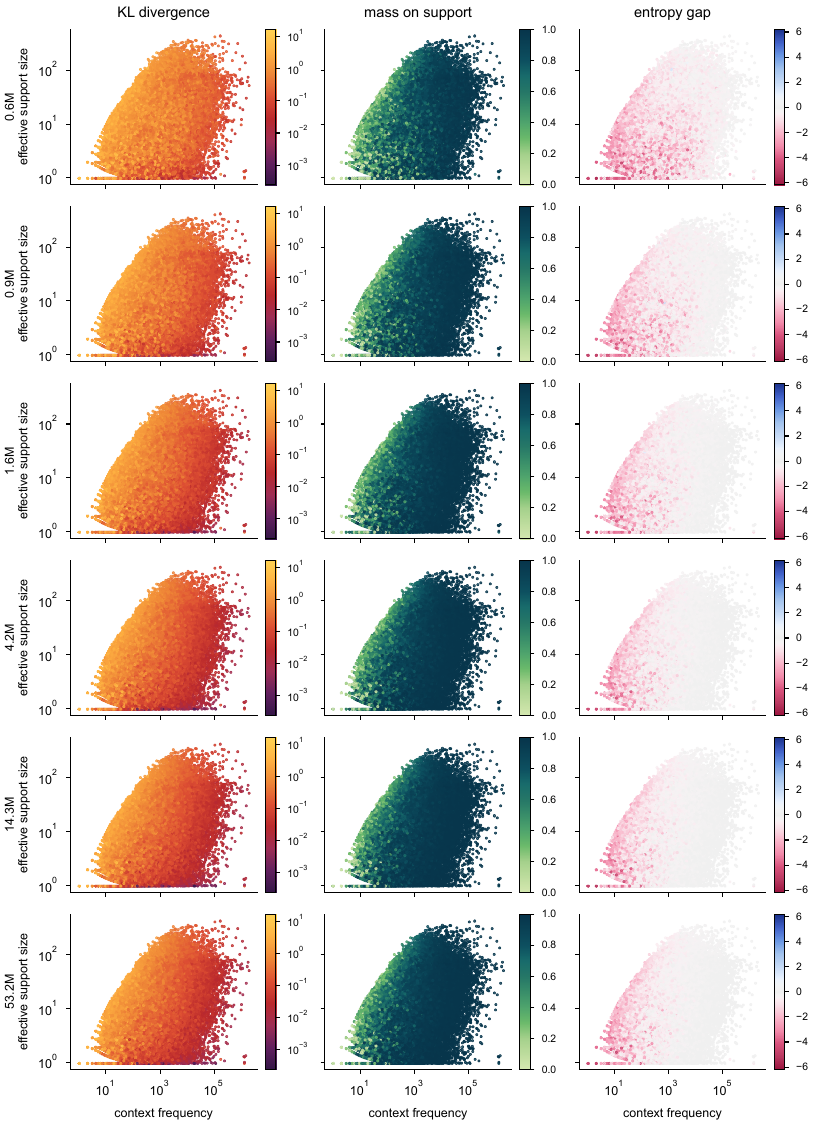}
    \caption{\textbf{Per-context losses across the capacity ablation.} Each row corresponds to one value of $d_\mathrm{model} \in \{32,\, 64,\, 128,\, 256,\, 512,\, 1024\}$ (annotated as parameter count in millions). All other parameters are held at the defaults of Table~\ref{tab:ablations}: embedding dimension $256$, $3000$ training steps, sampled cross-entropy loss, with learning rate selected per configuration as the best of $\{3 \times 10^{-4},\, 10^{-3},\, 3 \times 10^{-3}\}$.}
    \label{fig:ablation-clouds-capacity}
\end{figure}

\begin{figure}[!h]
    \centering
    \includegraphics{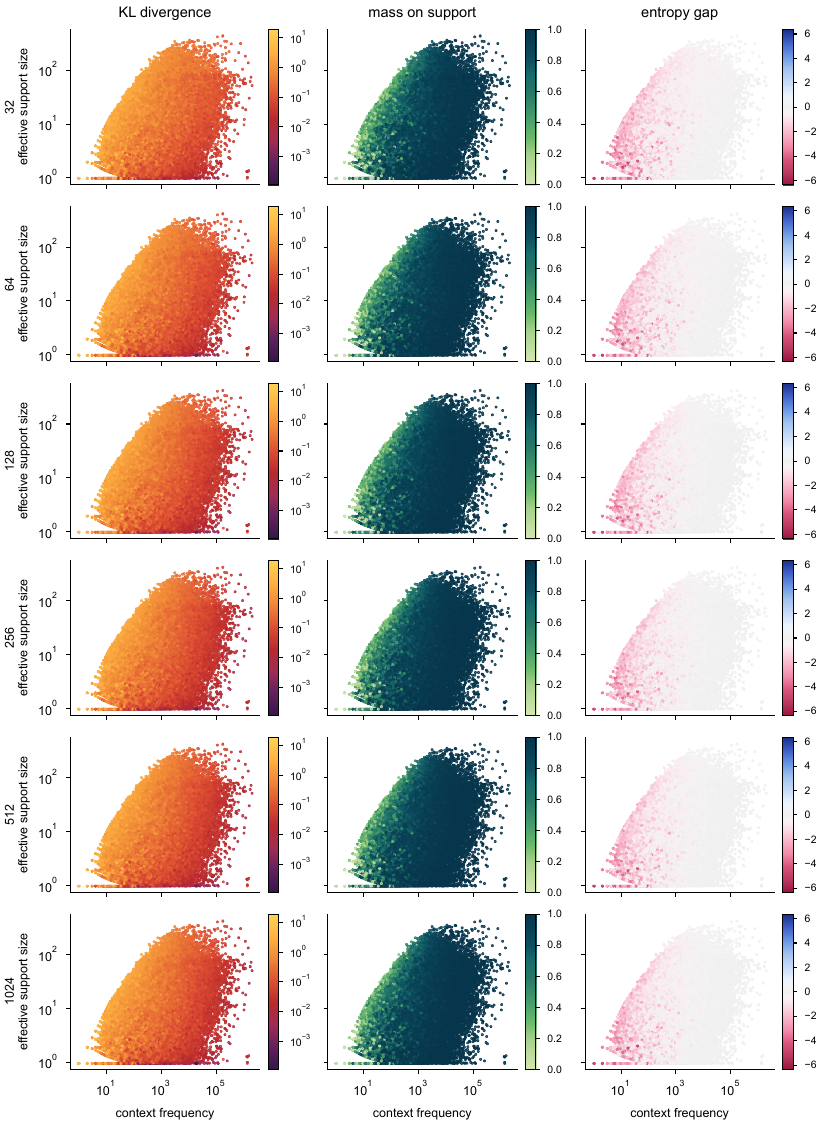}
    \caption{\textbf{Per-context losses across the embedding-dimension ablation.} Each row corresponds to one value of $d_\mathrm{embed} \in \{32,\, 64,\, 128,\, 256,\, 512,\, 1024\}$. All other parameters are held at the defaults of Table~\ref{tab:ablations}: $d_\mathrm{model} = 256$, $3000$ training steps, sampled cross-entropy loss, with learning rate selected per configuration as the best of $\{3 \times 10^{-4},\, 10^{-3},\, 3 \times 10^{-3}\}$.}
    \label{fig:ablation-clouds-embedding}
\end{figure}

\begin{figure}[!h]
    \centering
    \includegraphics{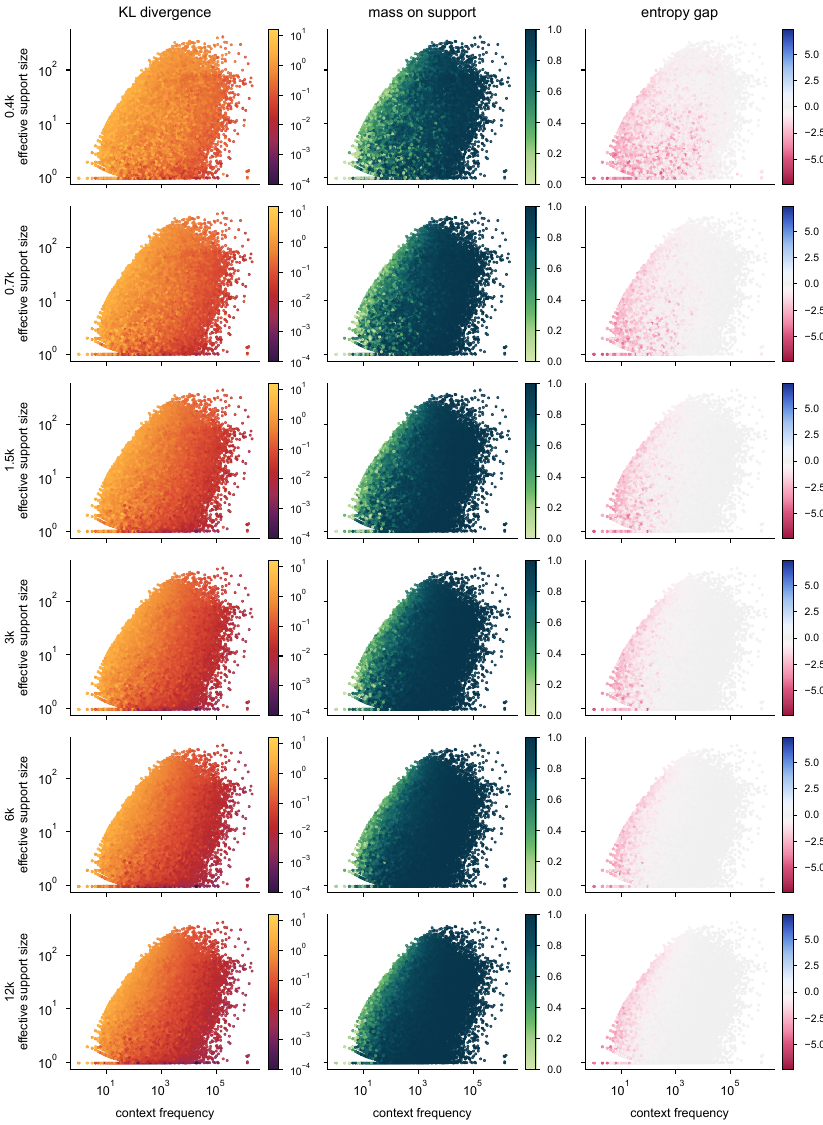}
    \caption{\textbf{Per-context losses across the training-duration ablation.} Each row corresponds to one number of training steps in $\{356,\, 725,\, 1500,\, 3000,\, 6000,\, 12000\}$. All other parameters are held at the defaults of Table~\ref{tab:ablations}: $d_\mathrm{model} = d_\mathrm{embed} = 256$, sampled cross-entropy loss, with learning rate selected per configuration as the best of $\{3 \times 10^{-4},\, 10^{-3},\, 3 \times 10^{-3}\}$.}
    \label{fig:ablation-clouds-optimization}
\end{figure}

\begin{figure}[!h]
    \centering
    \includegraphics{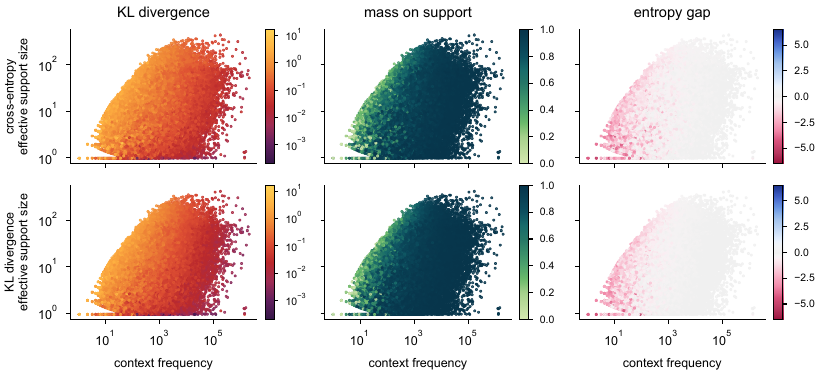}
    \caption{\textbf{Per-context losses across the loss ablation.} The two rows compare optimizing the full KL divergence $D_\mathrm{KL}(p_x \,\|\, \hat p_x)$ (top) against the sampled cross-entropy with one label drawn per step (bottom). All other parameters are held at the defaults of Table~\ref{tab:ablations}: $d_\mathrm{model} = d_\mathrm{embed} = 256$, $3000$ training steps, with learning rate selected per configuration as the best of $\{3 \times 10^{-4},\, 10^{-3},\, 3 \times 10^{-3}\}$.}
    \label{fig:ablation-clouds-sampling}
\end{figure}

\newpage
\clearpage
\section{Large language models experiments}

\subsection{Experiment details}
\label{app:exp_details_realworld}

\paragraph{Models and data.} We use OLMo-2 13B \citep{olmo_2_2025} as a proxy for the ground-truth next-token distribution and OLMo-2 1B as the main model evaluated against this proxy. Both checkpoints are taken before any post-training. We sample $100{,}000$ sequences of length $512$ from the Dolma v1.7 corpus \citep{soldaini_dolma_2024} on which OLMo-2 was pretrained, tokenize them with the OLMo-2 tokenizer, and run a single forward pass of each model on these sequences in bfloat16. This yields $100{,}000 \times 512 = 5.12 \times 10^7$ context-target pairs, one per (sequence, position).

\paragraph{Per-context metrics.} For each context, we compute the effective support size $\exp(H(p_\mathrm{13B}))$ of the proxy ground-truth distribution; we also report results for the raw support size, defined as the number of tokens with predicted probability $\geq 1/(10V)$ under OLMo-2 13B, where $V$ is the OLMo-2 vocabulary size. Both procedures match those used for the $3$-gram task in Appendix~\ref{app:exp_details_ngram}. We then evaluate OLMo-2 1B against the same proxy ground truth via the per-context KL divergence $D_\mathrm{KL}(p_\mathrm{13B} \,\|\, p_\mathrm{1B})$, the mass that OLMo-2 1B assigns to the support of OLMo-2 13B, and the entropy gap between the two predicted distributions, exactly as in Figure~\ref{fig:ngram-confirmation-curse}.

\paragraph{Semantic context frequency.} Since natural sequences rarely repeat past a few tokens, the context frequency that drives our $3$-gram analysis cannot be read off from the corpus directly. We instead estimate it by performing density estimation on the cloud of OLMo-2 13B embeddings of all evaluated contexts, leveraging the fact that semantically similar contexts produce similar embeddings. Concretely, we
\begin{itemize}
    \item[i)] extract the last-layer hidden state of OLMo-2 13B for every context, which provides a $5120$-dimensional embedding,
    \item[ii)] PCA-reduce the resulting cloud to dimension $512$, fitting the projection on a uniformly subsampled set of $5 \times 10^5$ embeddings,
    \item[iii)] build a FAISS index \citep{douze_faiss_2026} on the projected cloud,
    \item[iv)] for each context, query its $2{,}000$ nearest neighbors using the index, excluding the query itself.
\end{itemize}
We then compute $n_\mathrm{est} = \sum_i \mathrm{e}^{-d_i / \lambda}$ over these nearest neighbors, where $d_i$ is the distance to the $i$-th neighbor. This kernel density estimate counts how many contexts in the data are similar to the query, and is what we call the semantic context frequency. It depends on two choices, the bandwidth~$\lambda$ and the number of neighbors retrieved. We use $\lambda = 10$ and $2{,}000$ neighbors; the latter caps $n_\mathrm{est}$ at $2{,}000$, and the largest value we observe at $\lambda = 10$ is around $10^3$, so the cap only compresses the extreme right tail of the cloud. Varying $\lambda$ over $\{5, 10, 20\}$ yields qualitatively similar plots, as shown in Figure~\ref{fig:realworld-ablation-lambda}. We visualize the joint distribution of $n_\mathrm{est}$ and the estimated effective support size, together with the average position in the sequence on which each context occurs, in Figure~\ref{fig:realworld-position-density}.

\begin{figure}[!h]
    \centering
    \includegraphics{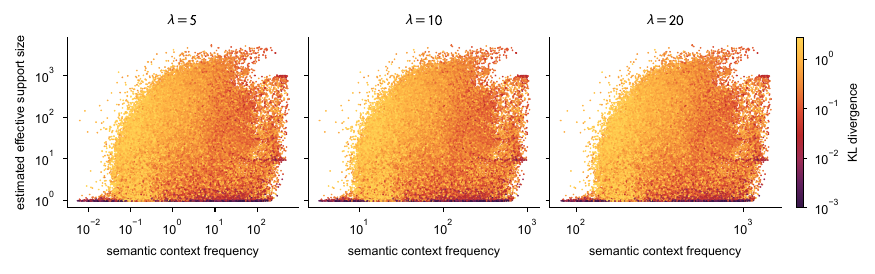}
    \caption{\textbf{Effect of changing the kernel bandwidth $\lambda$ on the results of Figure~\ref{fig:realworld-curse}.}
    Same setup as Figure~\ref{fig:realworld-curse}, varying the bandwidth $\lambda$ of the exponential kernel $\exp(-d_i/\lambda)$    used to compute $n_\mathrm{est}$ from kNN distances.
    Larger $\lambda$ shifts the x-axis to the right (more neighbors are weighted in), but the cloud shape and the per-context KL coloring are essentially invariant under $\lambda$.}
    \label{fig:realworld-ablation-lambda}
\end{figure}

\begin{figure}[!h]
    \centering
    \includegraphics{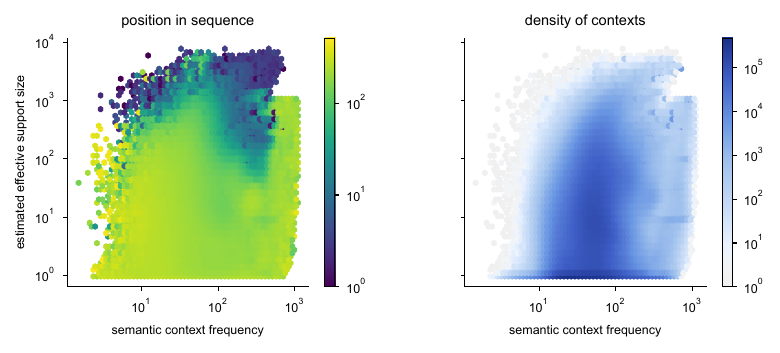}
    \caption{\textbf{Joint distribution of semantic context frequency and estimated effective support size, alongside the average position in the sequence at which each context occurs.} Both panels share axes with Figure~\ref{fig:realworld-curse}. \emph{Left:} each hex is colored by the geometric-mean position (in tokens) of the contexts that fall in it, with opacity modulated by the log of the per-hex count so sparse cells fade out. Low-frequency, low-effective-support cells are dominated by early-sequence positions, where local context is short and the kNN density estimate is least reliable. \emph{Right:} log-density of contexts in the same plane. Most of the mass concentrates in a narrow band where $n_\mathrm{est}$ and effective support size are positively correlated.}
    \label{fig:realworld-position-density}
\end{figure}

\paragraph{Validation against true context frequency.} On natural data, we have no direct access to the true context frequency, so the proxy above cannot be checked end-to-end. To validate that it carries the right information, we apply the same density pipeline on the synthetic $3$-gram task of Section~\ref{sec:experiments}, where the true context frequency is known. We extract the last-layer hidden state of the $4$-layer transformer of Appendix~\ref{app:exp_details_ngram} at every position of a held-out evaluation set, and run the same FAISS-based density estimation, up to two immaterial differences: the kernel weights are averaged over the retrieved neighbors rather than summed, which only rescales the proxy by a constant, and the bandwidth is set to $\lambda = 2$, the scale of the distances in this smaller embedding space. This yields $2.5 \times 10^7$ evaluated positions, spanning $5.2 \times 10^5$ distinct contexts. Figure~\ref{fig:realworld-validation} compares the resulting proxy to the true context frequency: the two are strongly correlated, with a Pearson correlation of $0.86$ between the log frequencies over all evaluated positions and a Spearman correlation of $0.92$ over $2 \times 10^6$ positions subsampled uniformly among them (both correlations vary by less than $0.03$ across the bandwidths $\lambda \in \{1, 2, 5\}$ we ran on this task).
Weighting positions equally is the relevant convention here, since this is also how contexts enter the per-context evaluation of Figure~\ref{fig:realworld-curse}, each context contributing in proportion to how often it occurs.
It does mean that the agreement is dominated by frequent contexts: computed over distinct contexts instead, so that a context seen once weighs as much as one seen a million times, the same correlations drop to $0.30$ and $0.24$.
The proxy therefore ranks frequent contexts reliably but is noisy on the rare tail, consistent with the caveat we make in Figure~\ref{fig:realworld-position-density} about the low-$n_\mathrm{est}$ region.

\begin{figure}[!h]
    \centering
    \includegraphics{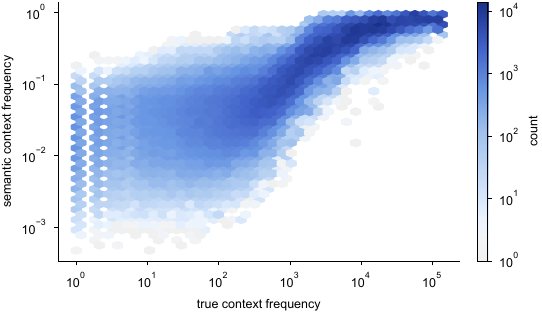}
    \caption{\textbf{The kNN density proxy correlates with the true context frequency on the $3$-gram task.} Log-density (hexbin, $10^6$ positions subsampled uniformly from the held-out evaluation set) of the true frequency of each context in the training corpus (x-axis) against its kernel density $\mathrm{e}^{-d_i/\lambda}$ averaged over its retrieved nearest neighbors in the last-layer embedding space of the trained $4$-layer transformer (y-axis), at $\lambda = 2$. On this task we plot the average rather than the sum of Appendix~\ref{app:exp_details_realworld}, which is why the y-axis lies below one; the two differ by the constant number of retrieved neighbors, so both correlations below are identical for either choice. The two quantities are strongly rank-correlated across positions (Spearman correlation of $0.92$), supporting the use of $n_\mathrm{est}$ as a context-frequency proxy on natural data, where the true count is not observable.}
    \label{fig:realworld-validation}
\end{figure}

\paragraph{Additional ablations.} For completeness, we report two further ablations. First, we replace effective support size by the raw support size of the OLMo-2 13B distribution as the ambiguity measure (Figure~\ref{fig:realworld-curse-support}). The qualitative picture is unchanged. Second, we vary the size of the probe model from 1B to 7B parameters (Figure~\ref{fig:realworld-ablation-probe}). The 7B probe tracks the reference more closely overall, with the largest gains concentrated at low effective support size and high context frequency, that is, in the bottom-right region of the cloud. We have two hypotheses for where this improvement takes place:
\begin{itemize}
    \item[(i)] The 7B probe has more capacity than the 1B probe, allowing it to fit medium-ambiguity contexts that exceeded the 1B model's capacity. This could explain the phase-transition-like improvement at the bottom of the cloud.
    \item[(ii)] On the right side of the cloud, where contexts are common enough to escape the capacity bottleneck, the gain could be explained by the gradual late-time decay of Equation~\ref{eqn:asymptotics_late_training} from the optimization analysis of Section~\ref{subsec:optimization}, in which the convergence rate no longer depends on the support size and all contexts improve in parallel.
\end{itemize}
These remain speculations and verifying them empirically is left to future work.

\begin{figure}[!h]
      \centering
      \includegraphics{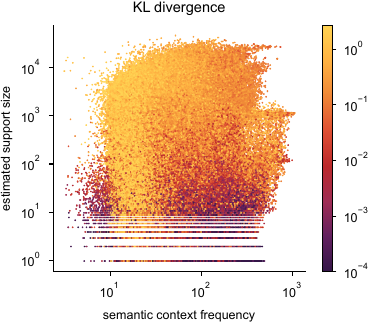}
      \caption{\textbf{The curse of ambiguity persists when ambiguity is measured by raw support size instead of effective support size.} Same setup as Figure~\ref{fig:realworld-curse}, except that the y-axis now shows the raw support size of the OLMo-2 13B reference distribution. The qualitative picture is unchanged.}
      \label{fig:realworld-curse-support}
  \end{figure}

  \begin{figure}[!h]
      \centering
      \includegraphics{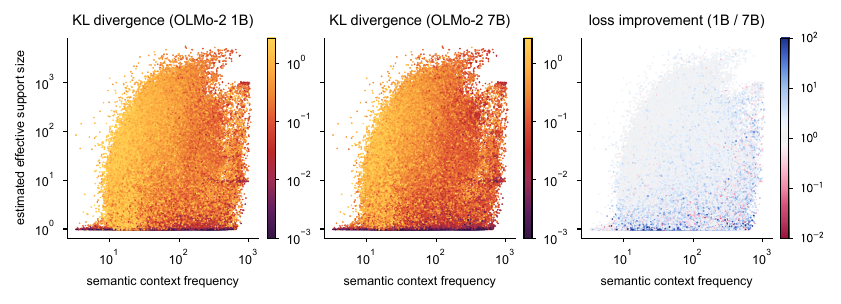}
      \caption{\textbf{Effect of varying the size of the evaluated model on the results of Figure~\ref{fig:realworld-curse}.} Same setup as Figure~\ref{fig:realworld-curse}, with OLMo-2 1B (left) and OLMo-2 7B (middle) as the model evaluated against the OLMo-2 13B reference. The 7B model tracks the reference more closely (KL is lower across the cloud), mirroring the improvement observed when increasing model size in the synthetic $3$-gram task; cf.~Figures~\ref{fig:ablation-roots} and~\ref{fig:ablation-clouds-capacity}.}
      \label{fig:realworld-ablation-probe}
  \end{figure}

\subsection{In-context predictability explains the low-loss low-context-frequency band}
\label{app:realworld_in_context_band}

Figure~\ref{fig:realworld-curse} reveals a band of tokens with low semantic context frequency ($n_\mathrm{est} < 10$) on which the small probe model and the large target model nevertheless agree very tightly. This is surprising at first sight: how can a model accurately predict the next token given a context that has rarely been seen during training?

We investigate this band by inspecting the underlying contexts (Table~\ref{tab:realworld_band_examples}). In each reported example, the last few tokens preceding the prediction already appear earlier in the same sequence, followed there by the same continuation. A generic in-context copying mechanism, such as an induction head, would suffice to make these predictions. To verify that this pattern extends beyond the subsampled examples of Table~\ref{tab:realworld_band_examples}, we measure, for each token in the band, whether the one- or two-token prefix preceding it appears at any earlier position of the same sequence and whether the actual next token coincides with one of the in-context continuations. We find that $61\%$ of in-band predictions are explained by an in-context bigram completion (one-token prefix), versus $11.2\%$ on the rest of the corpus; for trigram completions (two-token prefix), the corresponding rates are $33\%$ in the band and $3.4\%$ elsewhere. The prefixes themselves also recur in the same sequence far more often in the band ($74\%$ for one-token prefixes, $36\%$ for two-token ones) than in the rest of the corpus ($47\%$ and $11\%$, respectively).

The low-loss band of Figure~\ref{fig:realworld-curse} therefore primarily corresponds to context-dependent predictions. This is a concrete example of how the structure of the data can affect our analysis: the simple fact that text repeats itself at short range within a document enables fast prediction and generalization on contexts that have been barely seen, or not seen at all, during training. Precisely measuring the curse of ambiguity in practice would require controlling for this kind of structure, which we leave to future work.

 \definecolor{darkred}{rgb}{0.55, 0.0, 0.0}

  \begin{table}[!h]
  \centering
  \scriptsize
  \begin{tabular}{p{0.97\linewidth}}
  \toprule
  \texttt{ population (1994 est.) Net migration rate:   -12.05
  migrant(s)/}\textcolor{darkred}{\texttt{1,}}\texttt{000 population (1994 est.) Infant
  mortality rate:   18.5 deaths/1}\textit{\texttt{,}} \\
  \midrule
  \texttt{RESSIDA. I will not, uncle. I have forgot my father; I know no touch of
  consanguinity, No kin, no love, no blood, no soul so near me As the
  sweet}\textcolor{darkred}{\texttt{ Troil}}\texttt{us. O you gods divine, Make Cressid's name
   the very crown of falsehood, If ever she leave Tro}\textit{\texttt{il}} \\
  \midrule
  \texttt{ from the men of Shechem, and from the house of Millo, and
  devour}\textcolor{darkred}{\texttt{ Abimele}}\texttt{ch.  9:21 And Jotham ran away, and
  fled, and went to Beer, and dwelt there, for fear of Abimelech his brother.  9:22 When
  Abime}\textit{\texttt{le}} \\
  \midrule
  \texttt{:014:017 And I, behold, I will harden the hearts of the Egyptians, and they shall
  follow them: and I will get me}\textcolor{darkred}{\texttt{ honour upon Ph}}\texttt{araoh,
  and upon all his host, upon his chariots, and upon his horsemen.  02:014:018 And the
  Egyptians shall know that I am the LORD, when I have gotten me honour upon}\textit{\texttt{
  Ph}} \\
  \midrule
  \textcolor{darkred}{\texttt{/1,}}\texttt{000 population (1994 est.) Net migration rate:
  -1.78 migrant(s)/1,000 population (1994 est.) Infant mortality rate:   8.9
  deaths/1}\textit{\texttt{,}} \\
  \midrule
  \texttt{God, that I may be at once avenged of the Philistines for my two eyes.
  16:29}\textcolor{darkred}{\texttt{ And Sam}}\texttt{son took hold of the two middle pillars
  upon which the house stood, and on which it was borne up, of the one with his right hand,
  and of the other with his left.  16:30 And}\textit{\texttt{ Sam}} \\
  \midrule
  \texttt{ the tone of fairy tales, and it is certainly not lawlessness or even liberty,
  though men under a mean modern tyranny may think it liberty by comparison.  People out of
  Portland Gaol might think Fleet Street free; but closer study will prove that both fairies
  and journalists are the slaves of duty. Fairy}\textcolor{darkred}{\texttt{
  godmo}}\texttt{thers seem at least as strict as other god}\textit{\texttt{mo}} \\
  \midrule
  \texttt{ creditors by   purchasing a sizable amount of the delinquent commercial debt in the
     secondary market at a substantial discount. The government had paid   100\% of remaining
  official}\textcolor{darkred}{\texttt{ debt arre}}\texttt{ars to the US, Germany, France,
  and Spain. All commercial debt ar}\textit{\texttt{re}} \\
  \midrule
  \texttt{: and Ab}\textcolor{darkred}{\texttt{imele}}\texttt{ch king of Gerar sent, and took
  Sarah.  20:3 But God came to Abime}\textit{\texttt{le}} \\
  \midrule
  \texttt{ prepare to meet and attack him.  Wright, with his
  division}\textcolor{darkred}{\texttt{ of Sedg}}\texttt{wick's corps, was ordered, by any
  road he could find, to join on to Warren's right, and Getty with his division, also of
  Sed}\textit{\texttt{g}} \\
  \bottomrule
  \end{tabular}
  \vspace{1em}
  \caption{\textbf{Examples of low-loss tokens from the band highlighted in Figure~\ref{fig:realworld-curse}} ($n_\mathrm{est} < 10$). The token to predict is shown in \textit{italics} and the matching $n$-gram earlier in the same sequence is shown in \textcolor{darkred}{dark red} (a trigram when a trigram match exists, a bigram otherwise).}
  \label{tab:realworld_band_examples}
  \end{table}

\subsection{Quantifying the curse of ambiguity at fixed semantic context frequency}
\label{app:quantification_correlation}

Figure~\ref{fig:realworld-curse} provides a qualitative picture confirming that large language models suffer from the curse of ambiguity. We take a step towards quantifying it by computing the correlation between the per-context loss and the estimated ambiguity, at fixed semantic context frequency. We do this for both ambiguity proxies, effective support size (Figure~\ref{fig:realworld-binned-correlation-perplexity}) and raw support size (Figure~\ref{fig:realworld-binned-correlation}). In both cases, the per-context loss is positively correlated with ambiguity in log-log space, quantitatively confirming the curse of ambiguity. We report the squared Pearson correlation $R^2$, which reaches $0.37$ for effective support size and $0.68$ for raw support size, that is, correlations of $r \approx 0.61$ and $r \approx 0.83$. The values are higher for raw support size partially because $R^2$ measures linear correlation while the binned plots for effective support size look more nonlinear. The correlation is also highest in regimes where the overall loss is not too large, in particular it drops in the middle band of semantic context frequency.

\begin{figure}[!h]
    \centering
    \includegraphics{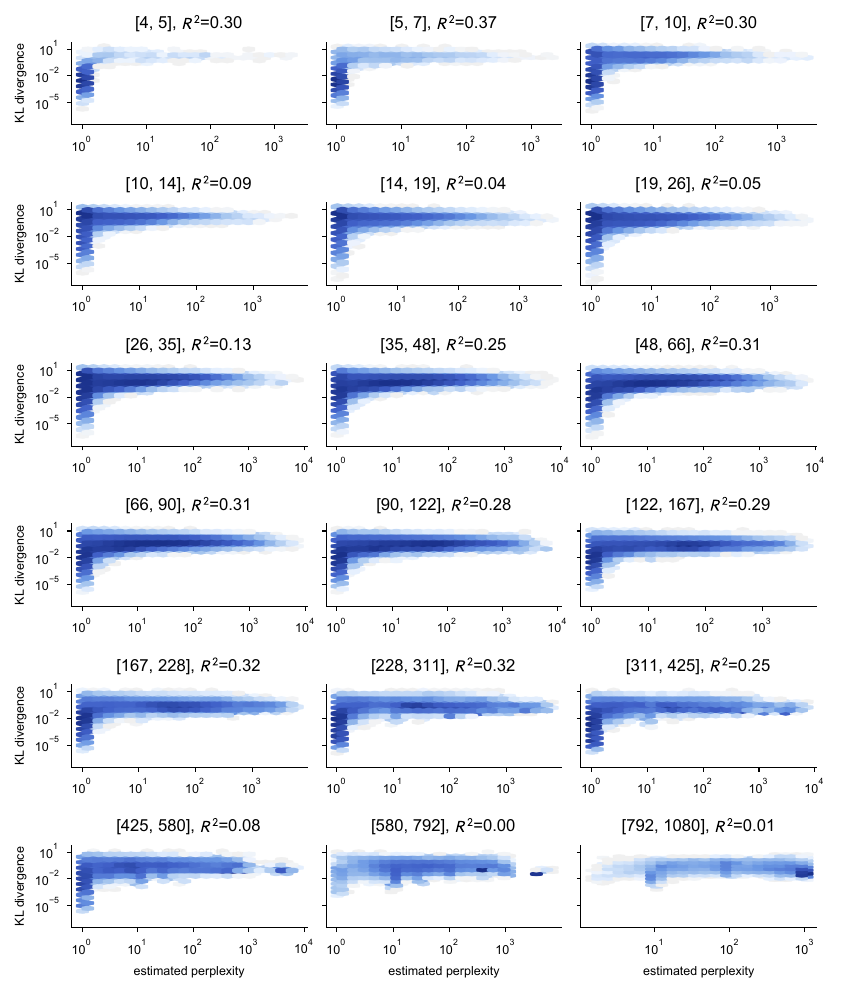}
    \caption{\textbf{Per-context loss and effective support size are correlated at fixed semantic context frequency.} Each panel slices the data of Figure~\ref{fig:realworld-curse} by a log-spaced bin of semantic context frequency, indicated in brackets in the title, and shows the 2D log-density of the KL divergence between the predictions of the 13B reference model and the 1B probe against the estimated effective support size; the value next to each title is the squared Pearson correlation $R^2$ between $\log D_\mathrm{KL}$ and $\log$ effective support size within the bin. The correlation is strongest for both small (rare contexts, $n_\mathrm{est} \lesssim 10$) and large (common contexts, $n_\mathrm{est} \gtrsim 100$) semantic context frequencies. It drops for intermediate values of $n_\mathrm{est}$, which is partially explained by the high overall loss in that regime: when most contexts already sit near the maximum KL, effective support size has little remaining variance to predict.}
    \label{fig:realworld-binned-correlation-perplexity}
\end{figure}

\begin{figure}[!h]
    \centering
    \includegraphics{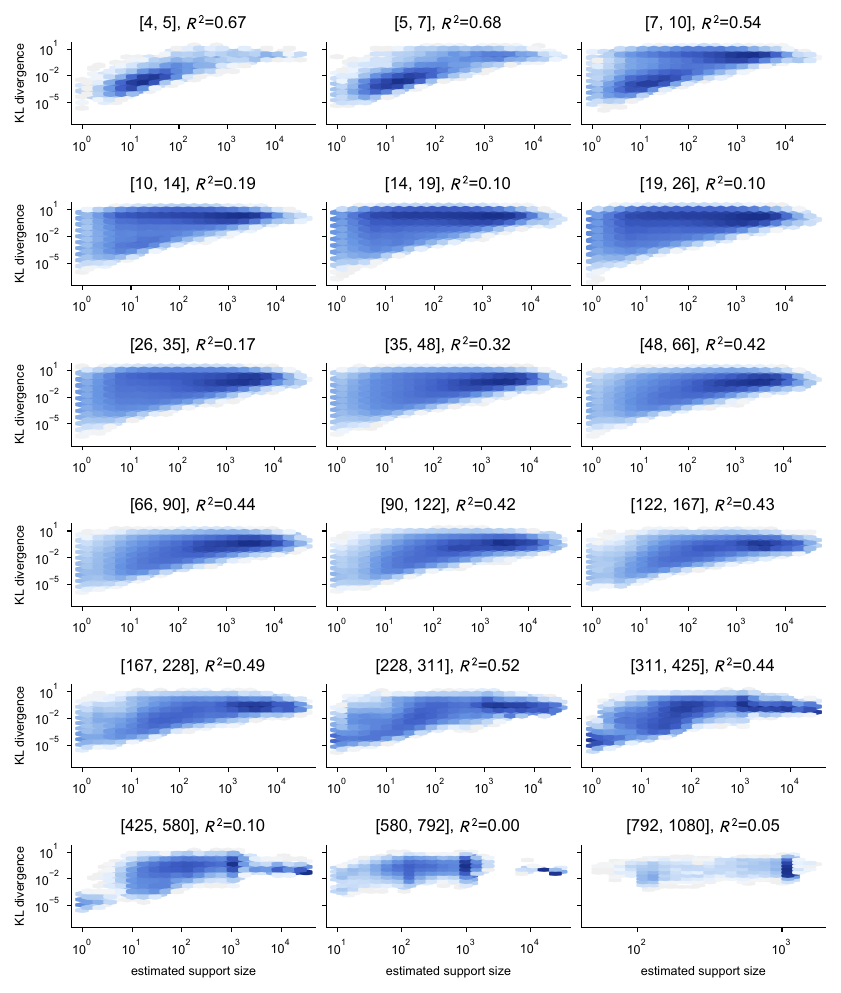}
    \caption{\textbf{Per-context loss vs. raw support size at fixed semantic context frequency.} Same setup as Figure~\ref{fig:realworld-binned-correlation-perplexity}, with raw support size in place of effective support size on the x-axis. $R^2$ is higher than in the effective-support-size version, partly because the relationship is more linear in this plane.}
    \label{fig:realworld-binned-correlation}
\end{figure}


\end{document}